%% file: main.tex
\documentclass{article}
\usepackage{iclr2027_conference,times}

\usepackage{graphicx}%
\usepackage{multirow}%
\usepackage{amsmath,amssymb,amsfonts}%
\usepackage{amsthm}%
\usepackage{mathrsfs}%
\usepackage{xcolor}%
\usepackage{textcomp}%
\usepackage{booktabs}%
\usepackage{algorithm}%
\usepackage{algorithmicx}%
\usepackage{algpseudocode}%
\usepackage{listings}%
\usepackage{hyperref}%
\usepackage{url}%
\usepackage{fontawesome5}%
\usepackage{float}%

\newcommand{\ADC}{\operatorname{ADC}}
\newcommand{\DAC}{\operatorname{DAC}}
\usepackage{wrapfig}
\usepackage{cleveref}

\usepackage{minitoc}
\usepackage{amsmath,amssymb,amsthm,mathtools}

\newtheorem{theorem}{Theorem}
\newtheorem{lemma}[theorem]{Lemma}
\newtheorem{corollary}[theorem]{Corollary}
\newtheorem{remark}[theorem]{Remark}
\newtheorem{assumption}{Assumption}
\crefname{figure}{Figure}{Figures}
\Crefname{figure}{Figure}{Figures}
\crefname{table}{Table}{Tables}
\Crefname{table}{Table}{Tables}
\crefname{section}{Section}{Sections}
\Crefname{section}{Section}{Sections}
\crefname{subsection}{Section}{Sections}
\Crefname{subsection}{Section}{Sections}
\crefname{subsubsection}{Section}{Sections}
\Crefname{subsubsection}{Section}{Sections}
\crefname{equation}{Equation}{Equations}
\Crefname{equation}{Equation}{Equations}
\crefname{algorithm}{Algorithm}{Algorithms}
\Crefname{algorithm}{Algorithm}{Algorithms}
\crefname{theorem}{Theorem}{Theorems}
\Crefname{theorem}{Theorem}{Theorems}
\crefname{lemma}{Lemma}{Lemmas}
\Crefname{lemma}{Lemma}{Lemmas}
\crefname{corollary}{Corollary}{Corollaries}
\Crefname{corollary}{Corollary}{Corollaries}
\crefname{remark}{Remark}{Remarks}
\Crefname{remark}{Remark}{Remarks}
\crefname{assumption}{Assumption}{Assumptions}
\Crefname{assumption}{Assumption}{Assumptions}
\crefname{appendix}{Appendix}{Appendices}
\Crefname{appendix}{Appendix}{Appendices}
\crefname{lstlisting}{Listing}{Listings}
\Crefname{lstlisting}{Listing}{Listings}

\RequirePackage{acronym}
\acrodef{AIHWKIT}[\textsc{AIHWKit}]{IBM Analog Hardware Acceleration Kit}
\input{mysymbol.sty}

\title{Making Analog Training Scale: Co-Designing Mapping, Optimizer, and Converters}
\author{\!\!\textbf{Zhaoxian Wu}$^{1}$\,
\textbf{Tayfun Gokmen}$^{2}$\,
\textbf{Omobayode Fagbohungbe}$^{2}$\,
\textbf{T. Patrick Xiao}$^{3}$\,
\textbf{Tianyi Chen}$^{1}$ \\
\normalfont $^{1}$\,Cornell Tech and Cornell University \quad
$^{2}$\,IBM T. J. Watson Research Center \\
$^{3}$\,Sandia National Laboratories \\
    \texttt{zw868@cornell.edu},\quad
    \texttt{\{tgokmen, Omobayode.Fagbohungbe\}@us.ibm.com}, \\
    \texttt{txiao@sandia.gov},\quad
    \texttt{tianyi.chen@cornell.edu} \\
    \faGithub\ \href{https://github.com/Zhaoxian-Wu/scalable-analog-training}{\texttt{github.com/Zhaoxian-Wu/scalable-analog-training}} \\
}
\iclrfinalcopy

\begin{document}
\maketitle
\lhead{Preprint}
\begingroup
\renewcommand{\thefootnote}{}
\footnotetext{The work was supported by the National Science Foundation Projects 2532349 and 2532653.
}
\endgroup

\doparttoc %
\faketableofcontents %

\begin{abstract}
Analog in-memory computing (AIMC) offers an alternative for model training by executing matrix operations directly where weights are stored. However, scaling AIMC to train modern deep models remains an open challenge due to severe hardware non-idealities, including physical weights with finite dynamic range and write granularity, analog-digital converters with finite resolution, and noisy and asymmetric updates.
Guided by the insight that gradient accumulation is sensitive to precision and rounding errors, we adopt a mixed-precision training paradigm: executing forward and backward matrix multiplications in the analog domain while computing weight gradients in the digital domain.
To enable scalable training, we present a holistic system-algorithm co-design that co-optimizes weight mapping to ensure well-conditioned physical and logical weight profiles, couples a preconditioned optimizer with threshold-triggered open-loop pulsing to stabilize training trajectories, and aligns converter dynamic ranges to suppress quantization errors.
Evaluated via hardware-calibrated architectural simulations calibrated with electrochemical RAM measurements, our framework scales Transformer training up to $123\text{M}$ parameters with validation loss scaling as $L\propto N^{-0.231}$, where $N$ is the parameter count, comparable to $L\propto N^{-0.238}$ for digital training.
\end{abstract}
\section{Introduction}
\label{sec:introduction}

\begin{wrapfigure}[17]{r}{0.46\textwidth}
\centering
\vspace{-1em}
\includegraphics[width=0.95\linewidth]{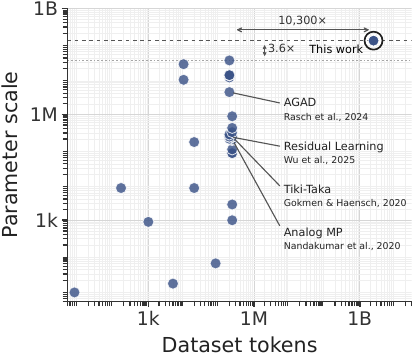}
\vspace{-1.1em}
\caption{
  Scaling landscape of analog training, where each point is one work.
  Labels identify representative work.
}
\label{fig:prior-training-scale}
\end{wrapfigure}
Training modern deep models requires moving large volumes of weights and activations between memory and compute units, making data movement a major contributor to energy and latency costs.
Analog in-memory computing (AIMC) offers a way to reduce this traffic by storing weights as conductances in non-volatile memory crossbars and using Ohm's and Kirchhoff's laws to execute matrix--vector multiplication (MVM) where the parameters reside~\citep{chen2013comprehensive,sze2017efficient,haensch2019next,sebastian2020memory,le202364}.
The same array can support the forward operation and its transpose in backpropagation \citep{agarwal2016resistive}, 
and showing more than tens or even hundreds of times better energy efficiency \citep{xiao2021parasitic,jain2019neural,cosemans2019towards,papistas202122}.

Despite its efficiency potential, AIMC hardware is subject to non-idealities, including finite converter precision (DAC/ADC), limited conductance range and update granularity, and noisy asymmetric programming, all of which can degrade model accuracy~\citep{gokmen2018lstm,nandakumar2020mixed,chen2023open,momeni2025training}.
At inference time, the weights are fixed, so aggressive accuracy-recovery measures can be tailored to the trained model and target workload~\citep{mackin2022optimised,rasch2023hardware}.
During training, however, weight and activation distributions evolve across iterations, while errors from repeated updates affect subsequent forward and backward passes.
Consequently, measures tuned to a fixed model cannot be applied unchanged throughout optimization, making stable analog training harder to scale.
Figure~\ref{fig:prior-training-scale} summarizes the model and task scales of representative AIMC training studies.
Most studies that execute iterative training with AIMC evaluate models with at most a few million parameters.
Representative studies at the tens-of-millions scale either focus on fine-tuning~\citep{fagbohungbe2025transfer} or assume precise control of weight states~\citep{ankit2020panther}.

\begin{figure*}[t]
\centering
\includegraphics[width=\textwidth]{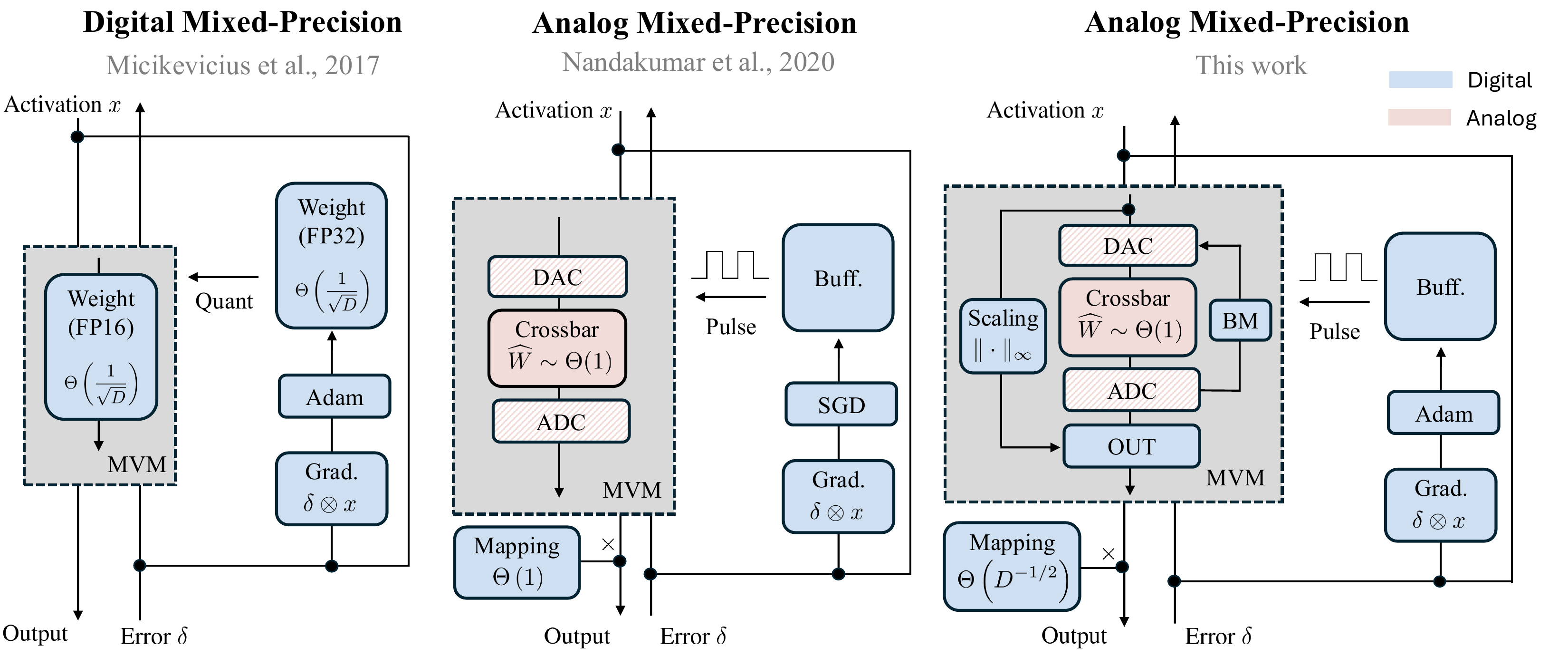}
\vspace{-0.2cm}
\caption{Comparison of training partitions.
\textbf{Left:} conventional digital mixed-precision training stores weights and executes the optimizer digitally, using reduced precision for selected arithmetic \citep{micikevicius2018mixed}.
\textbf{Middle:} prior analog mixed-precision training~\citep{nandakumar2020mixed} assigns MVMs to a DAC--crossbar--ADC path and uses a digital buffer to accumulate an SGD update before pulsed programming.
\textbf{Right:} On top of the conventional analog mixed-precision design, our proposed architecture couples logical-to-conductance mapping with DAC/ADC range alignment (bound management, BM) at the MVM boundary, allowing modern logical-space optimizers such as Adam to stabilize the training process.}
\label{fig:overview-comparison}
\vspace{-0.2cm}
\end{figure*}

Taken together, these results reveal a fundamental dilemma: while AIMC excels at MVMs, scaling it to modern deep architectures (e.g., Transformers) is fundamentally obstructed by two coupled challenges. 
First, adaptive optimizers (e.g., AdamW) are widely used, yet prior analog frameworks are tailored to SGD-like first-order updates, failing to handle complex preconditioned updates under noisy, asymmetric programming. 
Second, as model hidden dimensions scale up, the widening dynamic range of inner-product accumulations fundamentally clashes with the fixed physical ranges of analog crossbars and data converters (DACs/ADCs), causing forward signals and backpropagated gradients to clip severely or suffer quantization noise.
This motivates the core question:

\begin{center}
    \textbf{Q)} {\em Can an AIMC architecture scale on-chip training to larger models despite hardware-imposed limits on numerical precision?}
\end{center}

To address this challenge, we present a mixed-signal training framework that systematically reconciles modern logical-space optimization with physical crossbar constraints. 
Rather than forcing analog conductance to emulate complex optimizer states directly, our core insight is to cleanly partition the training loop: data-intensive forward and backward MVMs remain in the energy-efficient analog crossbars, while optimizer dynamics stay strictly in digital logical coordinates. 
To bridge these two domains, a digital residual buffer accumulates sub-threshold updates and triggers open-loop programming pulses only when updates exceed device granularity, extending the mixed-precision principle to arbitrary adaptive optimizers.
Crucially, we couple this logical-to-conductance mapping with dimension-aware converter range management (bound management) at the MVM boundaries. This co-design ensures that as model dimensionality scales, signal variances across the DAC--crossbar--ADC pipeline remain stably bounded within the hardware rails without sacrificing conductance resolution.

\textbf{Our contributions.} Our contributions can be summarized as follows:
\begin{itemize}
    \item[\textbf{C1)}] \textbf{Digital weight-gradient accumulation as a scaling principle.}
    We identify weight-gradient (W-grad) formation as a precision-critical bottleneck in AIMC training: gradient cancellation amplifies finite-precision accumulation errors, while fragmenting a fixed-batch update into frequent quantized full-rank transfers repeatedly incurs pulse-count truncation, update-management effects, and finite-state programming error.
    Motivated by these mechanisms, our architecture retains forward and backward MVMs in analog arrays but performs W-grad formation, optimization, and residual accumulation digitally, transferring only consolidated updates through threshold-triggered open-loop pulses.
    This partition provides an optimizer-agnostic logical update interface for modern adaptive training.

    \item[\textbf{C2)}] \textbf{Co-designed weight mapping, converter management, and theoretical bounds.}
    To preserve signal fidelity across varying model dimensions, we present a unified framework that couples logical-to-conductance mapping with dynamic converter range alignment (bound management). 
    This co-design reconciles variance-preserving initialization with the physical conductance limits of crossbars while preventing signal saturation and quantization collapse at the DAC/ADC boundaries. 
    Theoretically, we prove that our dynamic max-alignment policy is asymptotically optimal within the norm--rail converter family, providing quantifiable error bounds under finite conversion precision.

    \item[\textbf{C3)}] \textbf{Validation and preliminary empirical scaling trends.}
    We evaluate our framework through architecture-level simulations in IBM AIHWKit~\citep{rasch2021aihwkit}, using device models calibrated to experimental ECRAM measurements and decoder-only Transformers with up to $123.6\text{M}$ parameters following the Chinchilla scaling protocol~\citep{hoffmann2022training}, under full non-idealities (asymmetric responses, discrete write granularity, and DAC/ADC with IO noise, limited resolution and range).
    Despite a persistent validation-loss gap relative to digital AdamW, preliminary fits across five model scales in parameter count $N$ yield $L \propto N^{-0.231}$ for analog training and $L \propto N^{-0.238}$ for digital AdamW.
    Component ablations and a seven-optimizer compatibility study characterize the roles of optimizer choice, weight mapping, converter alignment, and the evaluated hardware non-idealities.
\end{itemize}

\subsection{AIMC Hardware Non-Ideality Model}
\label{sec:aimc-hardware-model}
To faithfully reflect physical crossbar execution, we model the primary analog non-idealities that degrade operations. 

\noindent\textbf{Precision-limited DAC and ADC conversion.}
AIMC performs efficient MVM in the analog domain, requiring digital inputs to be converted into electrical signals before each array read and the resulting analog outputs to be converted back into digital values for subsequent processing~\citep{hu2016dot,shafiee2016isaac}.
DACs encode input values as voltages or pulses, whereas ADCs digitize the accumulated array outputs.
Although these converters perform physically distinct operations, we can formulate their mappings as a uniform saturating quantizer.
Let $z\in\mathbb R$ be the scalar signal, $C>0$ be the \textit{quantization rail} defining the range $[-C,C]$, and $K\geq1$ be the integer number of equal quantization intervals. With $\Delta_{K,C}=2C/K$ as the quantization step, the uniform saturating quantizer is defined as
\begin{equation}
Q_{K,C}(z)
=-C+\Delta_{K,C}\,
\operatorname{clip}\!\left(\operatorname{round}\!\left(
\frac{z+C}{\Delta_{K,C}}\right),0,K\right),
\qquad z\in\mathbb R.
\label{eq:uniform-converter-quantizer}
\end{equation}
The operator $\operatorname{round}(\cdotc)$ rounds to the nearest integer, with ties resolved to the nearest even integer, and $\operatorname{clip}(t,0,K)=\min\{K,\max\{0,t\}\}$ enforces saturation at the rails.
For a vector $x\in\mathbb R^D$, we extend the quantizer coordinatewise by $[Q_{K,C}(x)]_d=Q_{K,C}(x_i)$ for $d=1,\ldots,D$.
In practice, physical DAC/ADC are configured with a fixed hardware input range $C_{\DAC}$ and $C_{\ADC}$. Therefore, the process can be formulated as $\DAC(z) = Q_{K, C_{\DAC}}(z)$ and $\ADC(z) = Q_{K, C_{\ADC}}(z)$ for input $z$.

\noindent\textbf{Matrix representation and noisy crossbar output.}
AIMC hardware uses crossbar arrays consisting of resistive elements to store matrix $\widehat W\in\reals^{D\times D}$, whose elements are represented by their conductances.
For a given voltage vector $x\in\reals^{D}$, the array produces a current vector $y\in\reals^{D}$ through the linear map $y=\widehat W x+\xi$, where $\xi$ is the circuit noise \citep{hu2016dot,jain2019neural,cosemans2019towards}.
Subtracting a reference-array response permits signed weights even though individual physical conductances are non-negative~\citep{nandakumar2020mixed,gong2018signal}.
Built on that, the logical weight matrix can be expressed as $W=s(\widehat W-\widehat W^\diamond)$, where $\widehat W^\diamond$ is the reference array, and $s\ge 0$ is the logical-to-conductance mapping scale.
Composing the processes in the crossbar array and DAC/ADC yields the vanilla implementation of the elementary operation, \textit{analog MVM}; see the middle figure in Figure~\ref{fig:overview-comparison}.

\noindent\textbf{Pulsed updates, asymmetry, and granularity.}
The conductance of resistive elements changes when receiving positive or negative electrical pulses.
For a conductance $\widehat W$, the nominal single-pulse change is $+\Delta w_{\min}q_+(\widehat W)$ or $-\Delta w_{\min}q_-(\widehat W)$ as the response, where $\Delta w_{\min}$ and $q_\pm(\cdotc)$ are device-specific \textit{write granularity} and state-dependent response functions, respectively \citep{wu2025analog}. Pulse polarity determines the sign, while conductance state determines the response magnitude.
Unequal positive and negative response magnitudes produce \textit{update asymmetry}; their equality defines a symmetric operating point, which is usually used as the reference array $\widehat W^\diamond$~\citep{kim2019zero}.
Additionally, the conductance typically saturates at device-specific boundaries so that it is always bounded in a \textit{dynamic range} $[w_{\min},w_{\max}]$.
Finite granularity limits the direct realization of small requested updates, while state dependence and asymmetry make the realized change differ from a nominal pulse command~\citep{burr2015experimental, agarwal2016resistive, chen2015mitigating}.

\subsection{Related Work}
\label{sec:related-work}
Pioneered by early crossbar circuits and heterogeneous architectures~\citep{hu2012bsb,liu2014heterogeneous,hu2016dot,shafiee2016isaac}, extensive efforts have sought to bridge the gap between the promise of AIMC and its practical deployment, but its practical use remains constrained by hardware non-idealities, including device variability, array parasitics, and finite converter precision~\citep{chen2015mitigating,xiao2021parasitic,gong2018signal}.
Beyond accelerating inference, researchers are therefore exploring AIMC hardware that accelerates the complete training loop by executing forward and backward matrix operations on the array~\citep{agarwal2016resistive, gokmen2016acceleration, nandakumar2020mixed}.
However, training is more sensitive to these non-idealities than inference since the perturbations accumulate over many optimization steps and ultimately degrade model performance~\citep{gokmen2018lstm}.
To address this issue, several works have proposed solutions to mitigate the impact of hardware non-idealities.
For example, Tiki-Taka and its variants \citep{gokmen2020tiki,rasch2024fast} propose a compensation scheme to reduce bias from asymmetric conductance changes, and Residual Learning extends this idea and introduces a residual mechanism \citep{wu2025analog,li2026multitile}.
Among them, analog mixed-precision training separates low-precision analog MVMs from high-precision digital W-grad to stabilize training~\citep{nandakumar2020mixed}.
This paper complements these works by focusing on the co-design of the optimizer, mapping, and converter management to further stabilize training, as well as the scalability of mixed-precision AIMC systems.

\section{Holistic Co-Design for Mixed-Signal AIMC Systems}
\label{sec:method}

We consider training in a mixed-signal AIMC system~\citep{nandakumar2020mixed}, where forward and transposed backward MVMs run in the analog domain, while W-grad computation, nonlinear operations, and optimization remain digital.
For the $b$-th sample and a linear layer with weight matrix $W\in\reals^{D\times D}$, the forward pass computes $y_b=Wx_b$ from the activation $x_b\in\reals^{D}$, while the backward pass computes $W^{\mathsf T}\delta_b$ from the output error $\delta_b\in\reals^{D}$.
Having obtained $B$ pairs of activation and error signals, we compute \textit{weight gradient (W-grad)} via $G=B^{-1}\sum_{b=1}^{B}\delta_b x_b^{\mathsf T}$, which involves $B$ outer products and accumulation of the resulting matrices.

\noindent\textbf{Analog MVM operation.}
The baseline analog mixed-signal architecture proposed by \cite{nandakumar2020mixed} employs fixed DAC and ADC quantization rails.
Inspired by \cite{gokmen2017cnn}, we dynamically adapt these rails to reduce quantization distortion to accommodate such distribution shifts.
Because physical converters operate over fixed input ranges, adjusting the effective rail is equivalent to applying inverse digital scaling to the input before conversion.
To realize an effective dynamic rail $C$, we prescale the digital input to $x/C$, whose DAC conversion is the voltage vector applied to the crossbar.
Accounting for converter quantization and crossbar read noise $\xi$, the logical MVM is $y=sC\operatorname{ADC}\!\left((\widehat W-\widehat W^\diamond)\operatorname{DAC}(x/C)+\xi\right)$.
The quantity inside the ADC is the physical crossbar output current, while the reference-array subtraction and the scale $s$ convert the digitized response to logical coordinates.
Since the input to DACs can be freely scaled in the digital domain, we can set its input range to $[-1, 1]$, yielding $\operatorname{DAC}(z) = Q_{K, 1}(z)$.
In contrast, the input to ADCs is current produced directly by the crossbar and cannot be preconditioned digitally.
Consequently, setting the physical ADC rail $C_{\mathrm{ADC}}$, where $\operatorname{ADC}(z) = Q_{K, C_{\mathrm{ADC}}}(z)$, requires a more principled design.
As illustrated on the right of \cref{fig:overview-comparison}, this implements the basic analog MVM operation.

\noindent\textbf{Digital W-grad and optimizer.}
This work adopts the mixed-precision training paradigm with digital W-grad.
Beyond safeguarding numerical fidelity during accumulation, this digital handling allows the subsequent optimization step to be performed entirely in the digital domain, which decouples the choice of algorithm from restrictive analog programming primitives, readily unlocking advanced preconditioning techniques critical for modern networks, such as AdamW \citep{loshchilov2019decoupled}, Shampoo \citep{gupta2018shampoo}, Lion \citep{chen2023lion}, or Muon \citep{jordan2024muon, liu2025moonlight}.
At step $k$, the chosen optimizer produces a signed logical increment $\Delta W_k$.

\noindent\textbf{Transfer with open-loop pulsing.}
Translating these logical updates into physical conductance changes, however, exposes a key mismatch in precision and granularity.
Under the mapping design, this corresponds to an intended conductance increment $\Delta\widehat{W}_k = \Delta W_k / s$.
Under small learning rates, however, elements of $\Delta\widehat{W}_k$ frequently fall below the write granularity $\Delta w_{\min}$.
To avoid losing sub-threshold increments, we introduce an additional digital residual buffer $H_k$ to accumulate them until they can be realized by one or more physical pulses.
Once the elements in the buffer exceed the threshold, the controller transfers them to the crossbar array. In the digital domain, we compute the number of pulses $N_k$ to be applied to the array and update the residual buffer $H_k$:
\begin{equation}
  H_{k+\frac{1}{2}} = H_k+\Delta W_k,\quad
  N_k = 
    \operatorname{trunc}\!\left(
      \frac{H_{k+\frac{1}{2}}}{s\Delta w_{\min}}
    \right),
    \quad
    H_{k+1}=H_{k+\frac{1}{2}}-sN_k\Delta w_{\min}.
  \label{eq:mp-open-loop-pulse-count}
\end{equation}
where $\operatorname{trunc}(\cdotc)$ denotes truncation toward zero.
After that, $N_k$ pulses are applied to the crossbar array. 
Ideally, the conductance update should be $N_k\Delta w_{\min}$, but hardware non-idealities can cause the actual update to deviate.
In principle, write-and-verify can improve programming fidelity, but repeated reads and corrective pulses introduce additional latency and energy overhead \citep{yao2020fully}.
We therefore use open-loop programming without verifying each update.
Although each programming operation is open-loop, the training trajectory retains feedback: subsequent forward and backward passes use the conductance state produced by the preceding writes.
The next gradient is therefore evaluated at the perturbed iterate, allowing moderate programming errors to enter as optimization perturbations rather than requiring exact correction of every write.
We evaluate empirically whether this training-level feedback is sufficient, without claiming that individual write errors are measured or explicitly corrected.

Having established the end-to-end execution pipeline, we now step back to examine the fundamental trade-offs that govern this analog--digital partitioning.
In principle, W-grad could be mapped directly onto the crossbar using parallel analog rank-update \citep{gokmen2016acceleration}, drastically reducing the update complexity from $O(D^2)$ to $O(D)$.
\cite{nandakumar2020mixed} does not adopt this approach because it raises endurance concerns of crossbar devices.
The next section approaches this trade-off from an algorithmic and scalability perspective: tasks like Transformer training demand high-fidelity gradient representations to resolve accurate descent directions.

\begin{figure}[t]
\centering\vspace{-0.2cm}
\includegraphics[width=\linewidth]{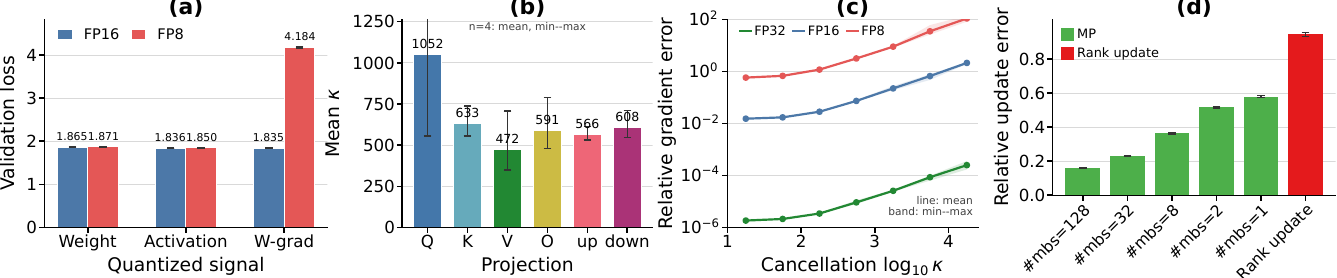}
\vspace{-0.4cm}\caption{
\textbf{(a)} Final validation loss of digital Shakespeare Transformer training with FP16 or FP8 weight, activation, or W-grad representations.
\textbf{(b)} Cancellation factor $\kappa=\sum_n|g_n|/|\sum_n g_n|$ of six projection types.
\textbf{(c)} Relative gradient error versus cancellation strength when the accumulator is rounded after every addition under FP32, FP16, and FP8.
\textbf{(d)} Single-step linear-layer update error at total batch size 128: green bars use mixed-precision (MP) digital-residual transfers at the indicated micro-batch sizes (\#mbs), while the red bar uses direct rank updates.
Bars and curves show four-seed means; whiskers and shaded bands span the observed seed min--max range.
}
\label{fig:precision-gradient-diagnostics}\vspace{-0.2cm}
\end{figure}

\subsection{Why Weight-Gradient Accumulation Should Remain Digital?}
\label{sec:digital-accumulation-scaling}
Two compounding factors lead us to adopt digital W-grad: severe numerical sensitivity during gradient reduction, and repeated write-path error when a fixed-batch update is fragmented into frequent quantized transfers.

\noindent\textbf{W-grad is sensitive to precision loss.}
Before addressing analog crossbars, we examine the fundamental precision limits of the training loop in a purely digital setting.
As shown in Figure~\ref{fig:precision-gradient-diagnostics}(a), the forward, backward, and weight-gradient paths exhibit drastically different numerical sensitivities.
While quantizing weight or activation representations to FP8 incurs almost no degradation in final validation loss compared to the FP16 baseline, applying identical FP8 quantization to the gradient path catastrophically impairs convergence.
Notably, in this diagnostic, intermediate error signals and parameter gradients are quantized while reduction operations remain in FP32.
This stark contrast demonstrates that gradient fidelity is uniquely fragile, underscoring that maintaining high precision during gradient handling is an unavoidable prerequisite for stable convergence.

\noindent\textbf{Gradient cancellation and accumulation amplification.}
The weight-gradient's acute sensitivity stems from \textit{numerical cancellation} during mini-batch accumulation.
For the $b$-th sample, let $g_b := \delta_b x_b^{\mathsf T}$ denote its outer-product contribution to a weight-gradient coordinate, and let $G := \sum_b g_b$ denote the net gradient.
To see how cancellation degrades precision, consider summing two opposite contributions $g_1 = +10.0$ and $g_2 = -9.9$: while each term has magnitude $\approx 10$, their true sum is a tiny residual $0.1$.
Under low-precision representations with limited mantissa bits, adding or rounding these large quantities introduces rounding noise that easily dwarfs the true residual $0.1$, or rounds it to zero entirely.
We quantify this effect via the cancellation factor $\kappa := \sum_b |g_b| / |\sum_b g_b|$, where $\kappa=1$ indicates identical signs across samples, and a large $\kappa$ signifies that the net update is merely a microscopic signed residual surviving vast opposing contributions.
The arithmetic mean of $\kappa$ is well above unity for each of the six projection types in the measured Transformer checkpoint (Figure~\ref{fig:precision-gradient-diagnostics}(b)), indicating substantial cancellation across these projection types.
As Figure~\ref{fig:precision-gradient-diagnostics}(c) shows, when sequential additions are subject to finite precision, the relative reconstruction error escalates sharply with $\kappa$: FP8 accumulation rapidly diverges from the true gradient, and even FP16 incurs considerable error at high cancellation regimes.
Hence, the core challenge is not merely storing an individual outer product, but reliably preserving this microscopic signed residual throughout mini-batch accumulation.

\noindent\textbf{Avoiding repeated quantized-transfer error.}
Digital W-grad also controls how frequently a consolidated update crosses the digital--analog boundary.
To isolate this effect, Figure~\ref{fig:precision-gradient-diagnostics}(d) fixes the total batch size ($B=128$), partitions the same ordered samples into micro-batches, and uses a symmetric finite-state soft-bounds device.
For MP updating, each exact micro-batch gradient enters a dense digital residual and then immediately requests a quantized full-rank tile transfer.
As the micro-batch becomes smaller, the same fixed-batch target is fragmented across more transfer events.
Each event reintroduces pulse-count truncation and capping, transfer-specific update-management effects, and finite-state programming error; these discrepancies accumulate over the fixed batch and increase the one-step realized-update error.
Larger micro-batches instead consolidate more contributions digitally and invoke the write path fewer times.
The direct rank-update result is invariant to the partition in this deterministic symmetric configuration and is therefore shown once.
Thus, panel (d) isolates transfer-fragmentation error in a controlled single-step setting; it neither relies on device asymmetry nor by itself establishes end-to-end training failure.

The remaining digital-to-analog transfer introduces physical constraints: finite write granularity, bounded conductance range, and programming non-idealities.
Next, we address how logical weights map to conductances, how optimizer updates transfer across the digital-analog boundary, and how finite-range data converters affect the analog MVMs.

\begin{figure}[t]
\centering
\includegraphics[width=0.8\linewidth]{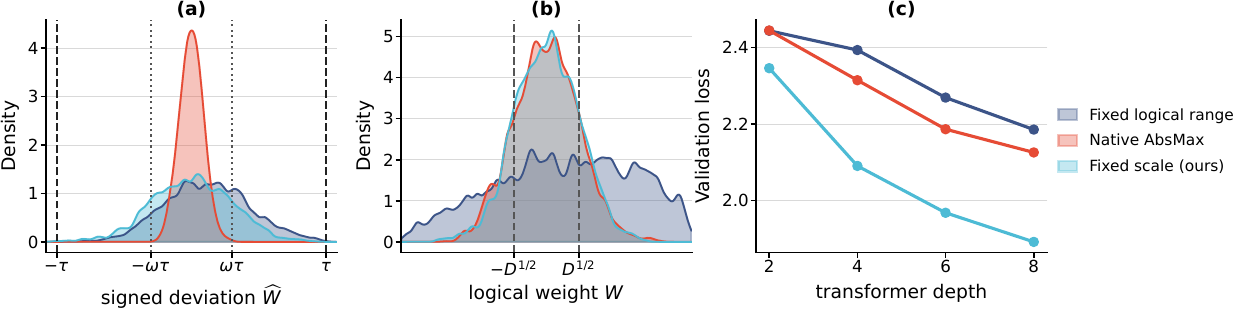}\vspace{-0.2cm}
\caption{
Initialization maps and analog Transformer training on the Shakespeare dataset.
\textbf{(a, b)} Signed conductance-deviation and logical-weight distributions for the fixed-logical-range control \citep{nandakumar2020mixed}, the native-AbsMax control \citep{rasch2020large}, and our proposed fixed-scale mapping.
\textbf{(c)} Final validation loss across model depth for the same mapping-based controls.
}\vspace{-0.2cm}
\label{fig:initial-native-versus-s-scaling}
\end{figure}

\subsection{Logical weight representation and physical mapping}
\label{sec:stable-weight-scaling}

A prerequisite for a stable and scalable analog training framework is a well-defined weight representation—specifically, how physical conductances are mapped to logical weights.
Because weights evolve in the logical domain driven by gradients and the optimizer, numerical stability across forward and backward passes remains a primary requirement. In conventional digital training, this is canonically achieved through variance-preserving initializations (e.g., Xavier or Kaiming init \citep{glorot2010initialization, he2015rectifier}) where weight standard deviations scale as $\Theta(D^{-1/2})$.
However, directly imposing this in the physical domain becomes problematic for wide layers: as target conductances shrink, they can fall below the write granularity $\Delta w_{\min}$ and waste the available dynamic range.
Conversely, device-friendly operation favors distributing initial conductances broadly across the available dynamic range around the symmetric reference point.
To reconcile these conflicting requirements, we introduce a scaling factor $s$ that decouples the layer-width-dependent logical initialization from the width-independent physical conductance occupancy.

For simplicity, we suppress the reference offset and let $\widehat W$ denote the signed deviation from the reference conductance, with coordinate range $[-\tau,\tau]$ centered at $0$; the underlying physical conductances remain non-negative.
On the physical side, we target a distribution centered at $0$ with pre-clipping standard deviation $\operatorname{Std}(\widehat W)=\tau/\omega$, where the inverse-occupancy factor $\omega>1$ leaves headroom to the device bounds.
This physical initialization scale is independent of layer width.
On the logical side, we impose the target initial logical-weight standard deviation $\sigma_w \propto D^{-1/2}$ by setting
$s={\omega\sigma_w}/{\tau}$.
Finite and asymmetric conductance bounds can truncate the physical initialization tails, so $\tau/\omega$ is a target pre-clipping standard deviation rather than an exact post-programming equality.
Figure~\ref{fig:initial-native-versus-s-scaling} (a) and (b) compare two mapping-based controls motivated by representative prior approaches: a fixed-logical-range control \citep{nandakumar2020mixed} and a native-AbsMax control \citep{rasch2020large}.
The fixed-logical-range control distributes signed conductance deviations broadly across the available dynamic range, but does not preserve the width-dependent logical initialization scale.
Conversely, the native-AbsMax control preserves the desired logical initialization scale, but confines most signed conductance deviations to a narrow range that is difficult to program accurately on hardware \citep{rasch2020large}.
As shown in \cref{fig:initial-native-versus-s-scaling} (c), both controls underperform the proposed fixed-scale mapping.

We now shift focus to the analog MVM, where moving signals through finite-range DACs and ADCs introduces additional distortion.

\subsection{Converter alignment by digital rescaling}
\label{sec:dac-range-alignment}
Since the DAC can be formulated as a uniform saturating quantizer, a natural way to reduce quantization error is to optimize the DAC rail to align with the input distribution.
We treat the distribution alignment as a joint choice of the vector statistic and the multiplier.
For $x \in \mathbb{R}^D$, we consider the vector-adaptive rail $A\|x\|_p$, where $\|\cdotc\|_p$ is the $\ell_p$-norm and $A>0$ is a multiplier.
The problem of finding the optimal rail then reduces to selecting the optimal norm order $p$ and multiplier $A$ that minimize the average reconstruction distortion $R_K(p,A;\mathcal V)$ for a given input distribution $\mathcal V$.
\begin{equation}
\begin{aligned}
r_K(\mathcal V)
&:=\inf_{p\in[1,\infty]}\inf_{A>0}
\left\{
  R_K(p,A;\mathcal V) := \frac{1}{D}\cdot
\mathbb E_{x\sim\mathcal V}\!\left[
\left\lVert Q_{K,A\lVert x\rVert_p}(x)-x\right\rVert_2^2
\right]
\right\}.
\end{aligned}
\label{eq:joint-norm-rail-distortion}
\end{equation}
Solving problem \eqref{eq:joint-norm-rail-distortion} exactly is generally challenging and expensive. 
Existing work has derived closed-form solutions for Gaussian or Laplace input distributions \citep{banner2019post} in the post-training phase.
However, during training, activation statistics may not remain stationary, so deploying a distribution-specific rule can require repeated distribution estimation and recalibration.
We therefore turn to the computationally efficient max-aligned $\ell_\infty$ rail, $(p,A)=(\infty,1)$, which requires computing the maximum absolute value of the input vector, \citep{rasch2020large}.
As shown below, this $\ell_\infty$ rail well approximates the distribution-specific optimum within the norm--rail family under a mild continuous assumption on the input distribution.

\begin{theorem}[Asymptotic optimality]
\label{thm:max-norm-alignment}
Suppose the joint distribution $\mathcal V$ is absolutely continuous with respect to Lebesgue measure on $\mathbb R^D$ and satisfies $\mathbb E_{x\sim \mathcal V}[\lVert x\rVert_2^2]<\infty$.
Then max alignment is asymptotically optimal within the norm--rail family:
$
\lim\limits_{K\to\infty}{R_K(\infty,1;\mathcal V)}/{r_K(\mathcal V)}=1.
$
\end{theorem}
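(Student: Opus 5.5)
The plan is to compute the exact first-order asymptotics of both $R_K(\infty,1;\mathcal V)$ and $r_K(\mathcal V)$ at scale $K^{-2}$, and to show that they share the same leading constant. We assume that constant is strictly positive, as is automatic when $D\ge 2$.

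Throughout, write $M(x):=\lVert x\rVert_\infty$ and use the homogeneous representation of a norm rail: $C(x)=A\lVert x\rVert_p=c_{p,A}(u)\,M(x)$, where $u=x/M(x)$ and $c_{p,A}(u)=A\lVert u\rVert_p\in[A,AD^{1/p}]$. Since $Q_{K,\lambda C}(\lambda z)=\lambda Q_{K,C}(z)$ for $\lambda>0$, the per-vector distortion factors as $M(x)^2\,e_K(u,c)$. Absolute continuity makes ties, coordinates landing exactly on grid points, and $x=0$ all null events.

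\emph{Step 1 (upper bound, max alignment).} With $C=M(x)$ nothing clips. The maximal coordinate sits exactly on a rail, so its error is $0$. Every other coordinate has error $\Delta^2\{\theta_d\}$-type, where $\Delta=2M(x)/K$ and $\theta_d=K(x_d+M(x))/(2M(x))$ is reduced modulo $1$ to its distance to the nearest integer. I would show that, for $x\sim\mathcal V$ absolutely continuous, the rounding residual $K(x_d+M)/(2M)\bmod 1$ converges in law to $\mathrm{Unif}[0,1)$ jointly with $x$. Because $(x_d+M)/(2M)$ is a smooth homogeneous-degree-$0$ map with nonvanishing gradient a.e. on each region where the argmax is fixed, its law is absolutely continuous. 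Multiplying by $K\to\infty$ then equidistributes the fractional part, which one proves via characteristic functions and Riemann--Lebesgue: $\mathbb E[\phi(x)e^{2\pi i m K t(x)}]\to0$. The bound $K^2\cdot\text{error}\le D\,M(x)^2\le D\lVert x\rVert_2^2$ supplies an integrable dominator. Dominated convergence then gives
\[
\lim_{K\to\infty}K^2R_K(\infty,1;\mathcal V)=\frac{D-1}{3D}\,\mathbb E\big[\lVert x\rVert_\infty^2\big]=:\Lambda .
\]

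\emph{Step 2 (lower bound over the family).} Trivially $r_K\le R_K(\infty,1)$, so it suffices to show $\liminf K^2 r_K\ge\Lambda$. I would take near-optimal parameters $(p_K,A_K)$ and argue along subsequences. For any rail factor $c$, the per-vector error is at least the clipping term $M^2(1-c)_+^2$ for the largest coordinate. It is also at least the granular error of the unclipped coordinates, which by Step 1's equidistribution behaves like $M^2c^2/(3K^2)$ per unclipped non-extremal coordinate.

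Consider first the case where, along the subsequence, $\mathbb P(c_{p_K,A_K}(u)<1-\varepsilon)$ stays bounded below. Then the clipping contribution is of order one, which dominates $K^{-2}$ and gives $K^2r_K\to\infty$.

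Otherwise $c_{p_K,A_K}(u)\ge1-\varepsilon$ in probability for every $\varepsilon$. In that case the unclipped set contains all non-extremal coordinates asymptotically, and the granular error is at least $(1-\varepsilon)^2M^2/(3K^2)$ per coordinate. Letting $\varepsilon\to0$ yields $\liminf K^2R_K(p_K,A_K)\ge\Lambda$. Using Fatou's lemma in place of dominated convergence here handles the possibly unbounded $A_K$.

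\emph{Main obstacle.} The delicate point is the equidistribution in Step 2. There the rail $c_{p_K,A_K}(u)M(x)$ changes with $K$, so I need uniformity of the Riemann--Lebesgue argument over the compact parameter set $p\in[1,\infty]$ and $A$ in a bounded interval. The case $A_K\to\infty$ is harmless, since the granular error then only grows.

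I would handle this by passing to a subsequence with $(p_K,A_K)\to(p^*,A^*)$. The phase $K(x_d+c_{p_K,A_K}M)/(2c_{p_K,A_K}M)$ has an $x$-gradient bounded away from zero uniformly near the limit parameters, and this uniform lower bound gives uniform decay of the oscillatory integrals. If that uniformity proves awkward, a fallback is cruder: lower-bound each coordinate's error by a fixed-step argument on a fine partition of the rail-factor range. This still recovers the $c^2/(3K^2)$ constant up to an arbitrarily small factor.
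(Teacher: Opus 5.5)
Your plan has the same architecture as the paper's proof. You identify the common leading constant $\frac{D-1}{3D}\mathbb E[\lVert x\rVert_\infty^2]$, obtain the max-rail limit from equidistribution of rounding phases, and analyze near-minimizers $(p_K,A_K)$ along subsequences with the clipped maximal coordinate as a first lower bound. However, your treatment of large rail multipliers has a genuine gap.

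\textbf{The regime $A_K/K\not\to0$ is not excluded.} Your granular bound $c^2M^2/(3K^2)$ needs the phases $\frac{K}{2A_K}\frac{u_d}{r_{p_K}(u)}+\frac K2$ to equidistribute, i.e.\ $A_K=o(K)$. Nothing in your Case~2 excludes $A_K/K\to h_0\in(0,\infty]$. There the quantization step is at least comparable to $M$, so the heuristic does not apply. The claim that ``the granular error then only grows'' is also not literally true: for even $K$ and $h_0=\infty$, every coordinate rounds to the level $0$ and the error saturates at $\lVert x\rVert_2^2$.

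\textbf{Fatou's lemma does not repair this.} The obstacle is oscillation, not integrability. Already for the max rail, $\liminf_K\operatorname{dist}(K(u_d+1)/2,\mathbb Z)=0$ whenever $u_d$ is irrational, so the pointwise $\liminf$ of your integrand is $0$ and Fatou returns only the trivial bound. Boundedness is not the real issue anyway: on $\{c_K\ge1-\varepsilon\}$ you can replace $c_K^2$ by $(1-\varepsilon)^2$ and use dominated convergence.

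\textbf{What the paper does instead.} It first proves $h_K=A_K/K\to0$ for competitive parameters. It lower-bounds the distortion by the distance to the infinite lattice $2h_Kr_{p_K}(u)(\mathbb Z-K/2)$, which is valid with or without clipping. If $h_K\to h_0\in(0,\infty)$, dominated convergence gives a strictly positive limit, because $u_i/r_{p_0}(u)$ has a density and almost surely misses a lattice of positive spacing. If $h_0=\infty$, even $K$ forces distortion at least $\int\lVert u\rVert_2^2\,d\nu\ge1$ (with $\nu$ the energy-weighted direction measure), and odd $K$ forces divergence. Only then can the averaging lemma be applied with arbitrary frequencies $t_K=1/(2h_K)\to\infty$. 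It shows the scaled distortion is at least $4A_K^2\Gamma_K$ with $\Gamma_K\to\frac{D-1}{12}\int r_{p_0}^2\,d\nu>0$. That is what proves $A_K$ is bounded, which your extraction $(p_K,A_K)\to(p^*,A^*)$ with finite $A^*$ presupposes.

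\textbf{Uniformity of equidistribution.} You flag this as the main obstacle, and your proposed fix is incomplete. A lower bound on the phase gradient alone does not give uniform decay of the oscillatory integrals for a merely integrable density. You would also need to approximate the density in $L^1$ by smooth compactly supported functions and control the phase's second derivatives uniformly in $p$ on their supports. The paper instead pushes the energy-weighted face measure forward to one dimension and proves $L^1$ convergence of the pushforward densities as $p_K\to p_0$. Since the averaging is uniform in the phase shift, this settles the issue. Your fallback of a fixed partition of the rail-factor range does not work: perturbing the rail factor by $\delta c$ shifts the phase by order $K\,\delta c$, which is not small unless the mesh is $o(1/K)$.
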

The proof of Theorem~\ref{thm:max-norm-alignment} is deferred to Appendix~\ref{app:max-norm-alignment}.
For any fixed dimension and input distribution satisfying its assumptions, \Cref{thm:max-norm-alignment} shows that the ratio of the expected distortion of max alignment to the minimum distortion within the norm--rail family converges to one as converter precision increases.
The following theorem complements this fixed-dimension result by upper-bounding the dimension dependence of the max-aligned distortion under coordinatewise sub-Weibull tails, which include Gaussian and Laplace-type tails as special cases.

\begin{theorem}[Sub-Weibull max alignment]
\label{thm:sub-weibull-max-bound}
Suppose that, for some $\alpha,L_s>0$ and $M_s\geq1$, every coordinate satisfies $\Pr(|x_i|>x)\leq M_s\exp(-(x/L_s)^\alpha)$ for all $x>0$.
Then
\begin{equation}
  R_K(\infty,1;\mathcal V)\lesssim_{\alpha}\frac{(D-1)L_s^2(\log(2M_sD))^{2/\alpha}}{DK^2}
  = \ccalO\lp\frac{(\log(2M_sD))^{2/\alpha}}{K^2}\rp.
\end{equation}
Here, $a\lesssim_{\alpha} b$ means that $a\leq M_{\alpha}b$ for an $\alpha$-dependent constant $M_{\alpha}>0$.
\end{theorem}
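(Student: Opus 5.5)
With $C=\lVert x\rVert_\infty$, no coordinate is clipped, so each coordinate's error is at most half a step. The plan is to bound the per-coordinate quantization error by the step size squared, and then control $\mathbb E[\lVert x\rVert_\infty^2]$ using a sub-Weibull maximal inequality.

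\textbf{Step 1 (deterministic reduction).} Fix $x\neq 0$ and set $C=\lVert x\rVert_\infty$. Every coordinate then lies in $[-C,C]$, so the clip in \eqref{eq:uniform-converter-quantizer} is inactive. Rounding gives $|Q_{K,C}(x_i)-x_i|\le \Delta_{K,C}/2 = C/K$. The factor $D-1$ comes from a coordinate attaining the maximum, say $|x_{i^\star}|=C$. Then $(x_{i^\star}+C)/\Delta_{K,C}\in\{0,K\}$ is an integer, so that coordinate is reproduced exactly. Hence
\[
\lVert Q_{K,\lVert x\rVert_\infty}(x)-x\rVert_2^2\le (D-1)\,\frac{\lVert x\rVert_\infty^2}{K^2}.
\]
Ties are a.s.\ irrelevant, but the bound holds regardless since at least one coordinate is exact. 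Taking expectations and dividing by $D$,
\[
R_K(\infty,1;\mathcal V)\le \frac{D-1}{DK^2}\,\mathbb E\!\left[\lVert x\rVert_\infty^2\right].
\]

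\textbf{Step 2 (maximal moment bound).} It remains to show $\mathbb E[\max_i x_i^2]\lesssim_\alpha L_s^2(\log(2M_sD))^{2/\alpha}$ without any independence assumption, using only the union bound. The plan is to write
\[
\mathbb E\!\left[\lVert x\rVert_\infty^2\right]=\int_0^\infty 2t\,\Pr(\lVert x\rVert_\infty>t)\,dt,
\]
and to bound $\Pr(\lVert x\rVert_\infty>t)\le\min\{1,\,DM_s e^{-(t/L_s)^\alpha}\}$. Split the integral at $t_0=L_s(\log(2M_sD))^{1/\alpha}$:
\begin{itemize}
\item The part on $[0,t_0]$ contributes at most $t_0^2$.
\item On $[t_0,\infty)$, substitute $t=L_s u^{1/\alpha}$ and set $a=\log(2M_sD)\ge\log 2$. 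The tail becomes $DM_sL_s^2\int_a^\infty (2/\alpha)\,u^{2/\alpha-1}e^{-u}\,du$.
\end{itemize}

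\textbf{Step 3 (main obstacle: the tail integral).} This step is the main technical point: the upper incomplete Gamma function $\Gamma(2/\alpha,a)$ must be bounded by $e^{-a}$ times a power of $a$, uniformly for $a\ge\log 2$.
\begin{itemize}
\item When $2/\alpha\le 1$, the integrand's power factor is decreasing, so $\Gamma(2/\alpha,a)\le a^{2/\alpha-1}e^{-a}$.
\item When $2/\alpha>1$, use $\Gamma(s,a)\le c_s\,a^{s-1}e^{-a}$ for $a$ bounded below. This follows, e.g., from $u^{s-1}\le (2a)^{s-1}$ on $[a,2a]$ plus crude exponential domination beyond $2a$. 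The constant depends only on $\alpha$ because $a\ge\log 2$.
\end{itemize}
Since $e^{-a}=1/(2M_sD)$, the tail contributes at most $(1/\alpha)\,c_\alpha L_s^2 a^{2/\alpha-1}\lesssim_\alpha L_s^2 a^{2/\alpha}$, using $a\ge\log 2$ again to absorb the lower power.

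Combining the three steps gives $\mathbb E[\lVert x\rVert_\infty^2]\lesssim_\alpha L_s^2(\log(2M_sD))^{2/\alpha}$, and therefore the stated bound. The final $\mathcal O$ form follows from $(D-1)/D\le 1$, treating $L_s$ as a constant.
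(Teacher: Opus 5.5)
Your proposal is correct and follows the paper's own route. The paper also first uses the all-resolution bound $R_K(\infty,1;\mathcal V)\le (D-1)\mathbb E[\lVert x\rVert_\infty^2]/(DK^2)$, obtained because the maximal coordinate is a reconstruction endpoint and every other coordinate has error at most $\lVert x\rVert_\infty/K$. It then applies the union bound and the layer-cake formula, splits the integral at $L_s(\log(2M_sD))^{1/\alpha}$, and closes with an incomplete-gamma tail estimate, which you write out explicitly rather than citing as standard. The only case you leave implicit is $x=0$. It needs the zero-input convention $Q_{K,0}(0)=0$, which the paper also invokes, because this theorem does not assume a density and $x=0$ can have positive probability.
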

The proof of Theorem~\ref{thm:sub-weibull-max-bound} is deferred to Appendix~\ref{app:max-norm-alignment}.
\Cref{thm:sub-weibull-max-bound} implies that the distortion of a max-aligned rail increases at most logarithmically with dimension $D$ for sub-Weibull distributions, including Gaussian ($\alpha=2$) and Laplace $O(\log^2 D)$ distributions as special cases.
Thus the maximum-coordinate contribution is $O(\log D)$ for sub-Gaussian tails ($\alpha=2$) and $O(\log^2 D)$ for Laplace-type tails ($\alpha=1$).
Therefore, the max-aligned rail offers a low-overhead, distribution-agnostic, and dimensionally robust approximation for the optimal DAC input alignment.

\begin{figure}[t]
\centering
\includegraphics[width=0.98\textwidth]{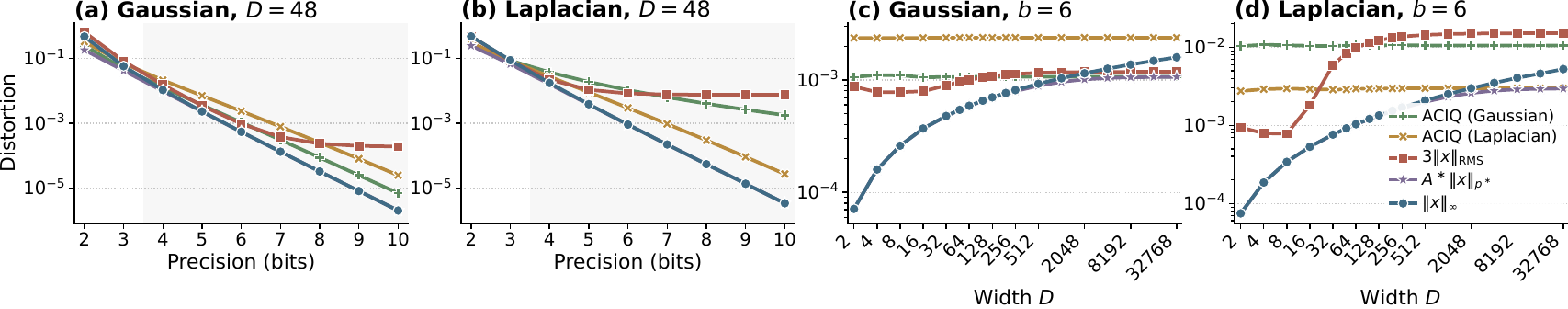}
\vspace{-0.2cm}
\caption{Reconstruction distortion under Gaussian and Laplacian distributions.
\textbf{(a)} and \textbf{(b)} show the effect of quantization precision.
The gray bands mark precisions at which the max-aligned rail is within 10\% of the optimal distortion.
\textbf{(c)} and \textbf{(d)} show the effect of width.
}
\label{fig:normalization-mmse}\vspace{-0.2cm}
\end{figure}

\paragraph{Empirical verification.}
Figure~\ref{fig:normalization-mmse} compares the reconstruction distortion of the candidate DAC rails.
In panels (a) and (b), the max-aligned curve converges toward, and at higher precision overlaps with, the calibrated optimum.
At $D=48$, from four bits onward, max alignment is within 10\% of the calibrated distortion for both product laws, whereas both fixed ACIQ \citep{banner2019post} comparators remain visibly separated from the optimum; at fixed precision, Appendix~\ref{app:normalization-mmse-calibration} shows a larger gap at sufficiently large widths.
This controlled finite-sample result is consistent with the asymptotic prediction of Theorem~\ref{thm:max-norm-alignment}.
Panels (c) and (d) show that the max-aligned distortion grows mildly with width, consistent with the tail-dependent dimension dependence in Theorem~\ref{thm:sub-weibull-max-bound}.
The ACIQ traces further show that a threshold optimized for one distribution can be suboptimal for another, which is particularly undesirable during training when activation distributions change.

\begin{figure*}[t]
\centering\vspace{-0.2cm}
\includegraphics[width=0.98\textwidth]{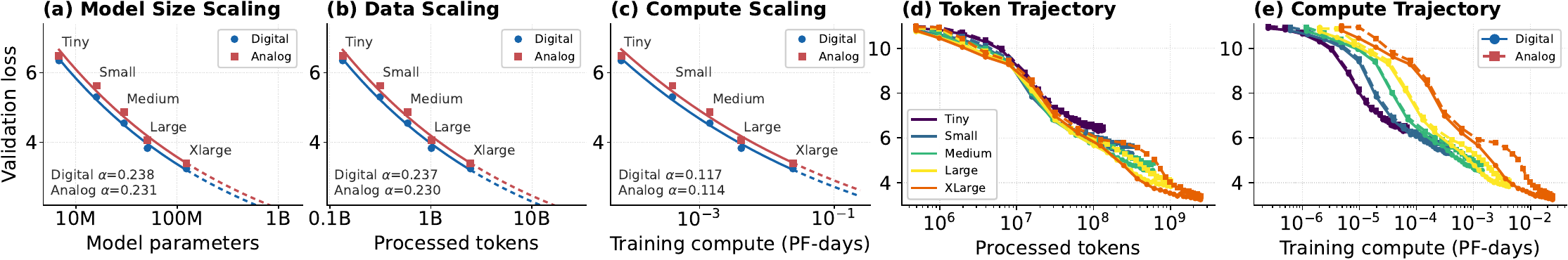}
\vspace{-0.2cm}
\caption{Scaling and training trajectories for the five completed OpenWebText schedules following the Chinchilla rule, comparing matched digital and analog training on Li-ECRAM tiles.
(a--c) Observed endpoints and preliminary power-law extrapolations fit $L=A X^{-\alpha}$ against parameter count $N$, processed tokens, and the full-model algorithmic compute proxy, respectively.
(d--e) Raw trajectories versus processed tokens and the shared compute proxy show the sample-efficiency and compute-normalized comparisons, respectively.
}
\label{fig:scaling}\vspace{-0.2cm}
\end{figure*}
\input{tables/nandakumar-mpsgd-values.tex}
\begin{table*}[t]
\centering\vspace{-0.2cm}
\setlength{\tabcolsep}{4pt}
\caption{Ablation study from MpSGD to our recipe.
}
\label{tab:nandakumar-sout-bridge}
\begin{tabular}{lrrrrrr}
\toprule
Scale & Params. & MpSGD & MpAdam & $+\mathrm{CA}$ & $+\mathrm{init}$ (ours) & Digital Adam \\
\midrule
Tiny   & 6.8M  & 10.7718   & 10.6035 & 6.8110 & \textbf{6.6075} & 6.3451 \\
Small  & 16.0M & 10.8545  & 10.5566 & 5.9143 & \textbf{5.6245} & 5.3033 \\
Medium & 29.9M & 10.8293 & 10.7397 & 5.2073 & \textbf{4.8696} & 4.5499 \\
\bottomrule
\end{tabular}\vspace{-0.2cm}
\end{table*}

\section{Scaling Up Analog In-Memory Training}
\label{sec:results-draft}

We evaluated whether the mixed-precision update path remains trainable as we scale a decoder-only transformer from 6.8M to 123.6M parameters.
Each model was trained from scratch on the OpenWebText dataset under matched schedules following the Chinchilla scaling rule~\citep{hoffmann2022training}.
We compared matched digital AdamW training with analog training on lithium-ion and metal-oxide electrochemical RAM (Li-ECRAM) tiles, using the proposed recipe in this work.

\noindent\textbf{Convergence and Scaling Laws.}
Figure~\ref{fig:scaling} illustrates the empirical scaling behavior and optimization trajectories.
Specifically, panels (a)-(c) display the scaling laws of analog training with respect to model size, processed tokens, and algorithmic training compute, while panels (d) and (e) show the loss convergence trajectories across tokens and compute.
At each evaluated scale, a persistent validation-loss gap exists between analog training and the digital Adam baseline.
However, under scaling, analog training achieves a power-law convergence rate comparable to digital training: the parameter-scaling fit yields $L \propto N^{-0.231}$, closely tracking $L \propto N^{-0.238}$ for digital training.
This demonstrates that despite device non-idealities and analog boundary noise, the mixed-precision architecture sustains predictable scaling dynamics across orders of magnitude.

\noindent\textbf{Bridging from baseline to This Work.}
To isolate contributors enabling robust scaling, we progressively bridge the baseline framework to our method.
We denote the SGD baseline in \cite{nandakumar2020mixed} as \textit{MpSGD}.
As reported in Table~\ref{tab:nandakumar-sout-bridge}, MpSGD fails during training across all scales.
MpAdam slightly mitigates the training collapse, but validation performance remains poor because converters lack alignment.
Incorporating converter alignment rectifies this degradation by jointly managing noise and dynamic range bounds, yielding a major performance gain.
Finally, pairing converter alignment with tailored weight initialization constitutes the complete configuration of this work, achieving stable convergence across Tiny through Medium models.

\section{Conclusion, Limitations, and Future Work}\label{sec:conclusion}
This paper identifies digital W-grad as a critical enabler for scaling mixed-signal AIMC training toward modern model regimes.
Our framework couples digital optimizer tracking with variance-preserving weight mapping and converter-range alignment.
In simulations calibrated to measured ECRAM characteristics, it supports training decoder-only Transformers from scratch on OpenWebText up to 123.6M parameters, establishing a first-order milestone: compounded modeled non-idealities do not inherently preclude scaling AIMC training dynamics to larger models.

\textbf{Limitations.}
The main scaling experiments use a single random seed, leaving run-to-run variability and uncertainty in the fitted scaling trends unquantified.
Although the simulated 16-bit DACs and 9-bit ADCs already introduce finite-resolution non-idealities, converters at these resolutions can incur substantial energy, area, and latency overheads in a physical implementation.
Because we do not model these costs, our results establish algorithmic and functional feasibility---a prerequisite for physical optimization---rather than a verified hardware-efficiency advantage.

\textbf{Future Work.}
Future work will align the simulations more closely with measured physical devices and pursue more effective, robust methods for handling device and circuit non-idealities, aiming to preserve scaling behavior at substantially lower DAC/ADC precision.
\section*{AI Use Statement}
In this work, we used generative AI tools to help develop theoretical models and conceptual frameworks, formulate mathematical claims, provide critical ingredients for proving mathematical claims, assist in writing proofs, propose or refine hypotheses, design or provide feedback on research methodology and experiments, implement methods, support qualitative and thematic data analysis, and interpret results.
We did not use generative AI tools to generate synthetic datasets, perform translation, or clean or reformat datasets; these tasks were not part of the AI-assisted workflow.
Additionally, we used generative AI tools to create or modify scientific figures or images, suggest experimental parameters, create or edit software code and other research artifacts, draft parts of this paper, summarize or analyze existing literature, identify research gaps, brainstorm, source and search for information, improve readability, identify relevant literature, format references, suggest paper structure, and propose titles or keywords.
Formulating survey or interview questions and transcribing recordings were not applicable to this work.
The authors reviewed all AI-assisted work by checking mathematical arguments and proofs, inspecting and testing code, validating analyses and reported results against experimental records, checking references against their sources, and reviewing figures and prose.
The authors take responsibility for the final content of this work, including all text, claims, and artifacts produced with the aid of generative AI.

\section*{Ethics Statement}
This work is an architecture-level simulation study and does not involve human participants, personal or sensitive data collection, or the release of a new dataset.
The study reports the modeled device and system assumptions, distinguishes simulated results from fabricated-hardware evidence, and states the limitations that could affect physical deployment.
The method could inform more energy-efficient training hardware, but it does not itself deploy such hardware or make claims about its safety, reliability, or societal effects outside the simulated setting.

\section*{Reproducibility Statement}
The assumptions and proofs for the converter-range analysis are stated in the main text and completed in the appendix.
The experimental protocol specifies the simulator, device and I/O settings, model configurations, datasets, evaluation metrics, and random seeds in the supplementary material.
Source code, configuration files, environment specifications, run logs, and the checkpoints underlying the reported results will be released publicly.
\bibliographystyle{iclr2027_conference}
\bibliography{
  bibs/refs,
  bibs/analog,
  bibs/publication
}
\newpage
\appendix

\begin{center}
  {\LARGE\bfseries Supplementary Material}
\end{center}
\vspace{-4em}
{
\renewcommand{\thepart}{}
\renewcommand{\partname}{}
\part{}
}
\parttoc 

\newpage

\section{Scaling landscape of analog training}
\label{app:training-scaling-landscape}

Table~\ref{tab:prior-training-scale} records the largest reported workload for which the analog-mapped weight count, training-data usage, architecture, training regime, and evidence can be determined among on-chip AIMC training studies.
Evidence H denotes a machine-learning task executed with physical hardware, whereas S denotes simulation only; physical hardware may still use a host computer, FPGA, or laboratory instrumentation for scheduling, gradient processing, routing, or pulse control.
The training regime is T for transfer learning from a pretrained model and F for training from scratch.
Throughout the table, the weight count measures distinct logical weights mapped to analog arrays for the stated workload, including fixed analog weights but excluding digitally held parameters, unused cells, differential-device multiplicity, auxiliary/reference arrays, and replicated mappings.
The table includes only experiments in which all learned matrix-multiplication and convolution weights of the evaluated network are mapped to analog arrays; experiments that map only a selected subset of network layers to analog while retaining other such layers digitally are excluded.
Counts omitted by the source are derived from reported mapped-layer dimensions or the corresponding public implementation, and token counts use only the training subset actually used rather than the complete upstream dataset; repeated epochs over a fixed dataset are not counted repeatedly, whereas schedule-based language-model training uses the number of token IDs actually processed.
For a like-for-like comparison, we exclude larger-scale experiments that leave one or more learned linear layers digital; an asterisk in the Work column identifies papers that also report such an excluded experiment.

\paragraph{Dataset size and number of tokens.}
Language tokens and image pixels do not carry commensurate amounts of information, so directly comparing their raw counts would be misleading.
For a relatively fair cross-modal comparison, we therefore express both language and image inputs in a common token-based unit: language examples are counted after tokenization, while images are converted to non-overlapping $16\times16$ patch tokens following the ViT convention~\citep{dosovitskiy2021image}, with input dimensions padded upward to the next patch boundary and no classification token, giving $\lceil H/16\rceil\lceil W/16\rceil$ tokens per image.
This equivalent token count is a scale proxy rather than a claim that one language token and one image patch contain exactly the same information.
Dataset size is the approximate compact input payload for one pass over the training subset actually used: image pixel-channels and binary features use 8-bit storage, numerical features use 32-bit storage, and language-model token IDs use 16-bit storage.
This storage proxy excludes preprocessing, augmentation, labels, and digital pretraining, and is not a training-compute metric.
For fixed datasets, repeated training epochs are not counted repeatedly.
For non-image datasets, \#tokens counts input scalars; for schedule-based OpenWebText training, it is the number of GPT-2 BPE token IDs processed by the completed schedule.
Dataset size and \#tokens use only the actual training subset or processed language-model stream, rather than the complete upstream dataset when the experiment uses a subset or a shorter schedule.

\paragraph{Prior scale frontier and algorithmic scope.}
The prior model frontiers are PANTHER's 34.0M-weight VGG-16 trained from scratch and the 26.6M-weight Swin-ViT transfer-learning simulation of \citet{fagbohungbe2025transfer}; full CIFAR-10 and CIFAR-100 are the largest prior fixed datasets by compact payload~\citep{ankit2020panther}.
PANTHER raises outer-product-accumulation precision by distributing weight bits heterogeneously across ReRAM slices and supports SGD and mini-batch SGD, but its evidence is simulated: bit slicing improves numerical resolution only, so accurate slice carries still rely on stable multilevel writes not demonstrated under realistic noise and nonlinearity.
Fagbohungbe et al. use c-TTv2, descended from two-array Tiki-Taka~\citep{gokmen2020tiki} and the offset-corrected variants of \citet{rasch2024fast}: an auxiliary analog array accumulates gradients, periodic transfers update the main weights, and sign chopping suppresses reference offsets.
Their Swin-ViT result is also simulation-only, starts from digital CIFAR-10 pretraining, fine-tunes on only two or five CIFAR-100 classes, and models transfer error as additive Gaussian noise.
The present work therefore moves the qualifying frontier from tens-of-millions-scale vision models and fixed image datasets to a 123.5M-weight decoder and a billion-token language-model training stream.

\paragraph{Why the three VGG-8 entries are distinct.}
The approximately 13.0M-weight VGG-8/CIFAR-10 rows provide different evidence: DNN+NeuroSim V2.0 is a software framework coupling PyTorch to abstract device, circuit, and architecture models~\citep{peng2021dnn}; \citet{kim2022cmos} calibrate simulation with a fabricated ferroelectric-transistor array that demonstrates selective programming and convolution; and \citet{chen2023open} additionally train a physical 60-weight bilayer classifier using linear, symmetric open-loop ECRAM updates, but retain activation, routing, and gradient calculation in software.
The complete VGG-8 loop remains simulated in all three studies.
Thus, the shared VGG-8 row is a common simulated model--dataset benchmark, not evidence of three equally scaled hardware training systems.

\paragraph{Hybrid hidden-weight training.}
The FeCAP--memristor design of \citet{martemucci2025ferroelectric} stores a frequently updated 10-bit signed hidden weight in binary FeCAPs and periodically transfers its sign and three most significant bits into a differential memristor pair, closely paralleling high-precision accumulation followed by sparse analog-weight transfer.
Its network results are simulations calibrated by measured transfer distributions, and the larger MobileNet-V2 studies freeze the feature extractor and train only a classifier of at most 256k analog weights; they are therefore excluded by the table's full-network mapping criterion.
The table instead records the qualifying end-to-end Fashion-MNIST MLP, whose four fully connected matrices comprise 187,800 analog-mapped weights.

\begin{table}[t]
\centering
\footnotesize

\setlength{\tabcolsep}{0.9pt}
\begin{tabular}{lcccrrr}
\toprule
Work & Evidence & Task & Model & \#analog wt. & Dataset/task & \#tokens \\
\midrule
\cite{li2021ecram} & H & F & Logic-gate net & $9$ & Truth tables & $8$ \\
\cite{ando2025transfer} & H & T & Binary classifier & $16$ & MNIST & $5$k \\
\cite{chen2023open} & H & F & Bilayer MLP & $60$ & Mushrooms & $81$k \\
\cite{ning2025index} & H & F & Sparse NN & $900$ & EMNIST L & $1.0$k \\
\cite{wang2019insitu} & H & F & CNN/RNN & $1.0$k & MNIST & $240$k \\
\cite{yao2020fully} & H & F & CNN & $2.8$k & MNIST & $240$k \\
\cite{li2018self} & H & F & MLP & $8.2$k & MNIST & $20$k \\
\cite{yi2023activity} & H & F & EN & $8.2$k & Braille words & $168$ \\
\cite{thunder2022variation} & S & F & MLP & $79$k & MNIST & $240$k \\
\cite{li2026multitile}$^{*}$ & S & F & LeNet-5 & $80$k & MNIST & $240$k \\
\cite{yilmaz2024neuro} & S & F & MLP & $102$k & MNIST & $240$k \\
\cite{burr2015experimental} & H & F & MLP & $165$k & MNIST & $20$k \\
\cite{martemucci2025ferroelectric}$^{*}$ & S & F & MLP & $188$k & Fashion-MNIST & $240$k \\
\cite{nandakumar2020mixed} & H & F & MLP & $199$k & MNIST & $240$k \\
\cite{ambrogio2018equivalent} & H & F & MLP & $205$k & CIFAR-100 & $200$k \\
\cite{gokmen2020tiki} & S & F & MLP & $235$k & MNIST & $240$k \\
\cite{gokmen2021} & S & F & MLP & $235$k & MNIST & $240$k \\
\cite{xiao2026dynamic}$^{*}$ & S & F & MLP & $235$k & MNIST & $240$k \\
\cite{wu2024towards}$^{*}$ & S & F & MLP & $235$k & CIFAR-10 & $200$k \\
\cite{falcone2025all} & S & F & MLP & $235$k & MNIST & $240$k \\
\cite{wu2025analog}$^{*}$ & S & F & MLP & $235$k & CIFAR-10 & $200$k \\
\cite{rasch2020large} & S & F & CNN & $270$k & CIFAR-100 & $200$k \\
\cite{xu2024trace} & S & F & SNN & $314$k & MNIST & $240$k \\
\cite{lim2019adaptive} & S & F & MLP & $415$k & MNIST & $240$k \\
\cite{cheng2019time} & S & F & MLP & $886$k & MNIST & $240$k \\
\cite{rasch2024fast} & S & F & ViT & $4.3$M & CIFAR-10 & $200$k \\
\cite{ando2025transfer} & S & T & ViT & $9.6$M & CIFAR-100 & $10$k \\
\cite{dang2026nova} & S & F & ResNet-18 & $11.2$M & CIFAR-100 & $200$k \\
\cite{peng2021dnn} & S & F & VGG-8 & $13.0$M & CIFAR-10 & $200$k \\
\cite{kim2022cmos} & S & F & VGG-8 & $13.0$M & CIFAR-10 & $200$k \\
\cite{chen2023open} & S & F & VGG-8 & $13.0$M & CIFAR-10 & $200$k \\
\cite{fagbohungbe2025transfer} & S & T & Swin-ViT & $26.6$M & CIFAR-100 & $10$k \\
\cite{ankit2020panther} & S & F & VGG-16 & $34.0$M & CIFAR-100 & $200$k \\
\midrule
\textbf{This work} & S & F & Transformer & $123.5$M & OpenWebText & $2.4$B \\
\bottomrule
\end{tabular}
\caption{Representative on-chip AIMC training studies. Evidence is H when the machine-learning task is executed with physical hardware and S for simulation only. Task is T for transfer learning from a pretrained model and F for training from scratch. Device characterization used to calibrate a simulation does not by itself qualify as H evidence. Studies reporting both forms of task-level evidence occupy separate H and S rows. The table includes only experiments in which all learned matrix-multiplication and convolution weights of the evaluated network are analog-mapped; experiments that retain any such network layers digitally and analog-map only selected layers are excluded. \#analog wt. counts distinct logical weights mapped to analog arrays for the stated workload, whether trainable or fixed; digitally held non-matrix parameters, unused capacity, differential-device multiplicity, auxiliary/reference arrays, and replicated mappings are excluded. ``Dataset/task'' gives the largest qualifying machine-learning workload with a determinable analog-mapped weight count. A superscript $*$ after a work indicates that its paper also reports a larger-scale experiment that leaves one or more learned linear layers digital; it is excluded to preserve a fair full-network analog-mapping comparison. EN denotes an equilibrium net.}
\label{tab:prior-training-scale}
\end{table}

The table reveals two separate gaps.
First, on-chip training architectures for modern networks are still evaluated predominantly in simulation and therefore remain exposed to mismatch between a finite device model and a particular deployed chip.
Second, the physical evidence for such architectures commonly uses external control and has not yet established a scalable integrated backward and optimizer path.
The goal of the present mixed-signal architecture is to close that second gap without requiring fragile analog circuitry to retain high-precision gradient history.

\section{Notation}
\label{app:notation}

This section fixes the mathematical conventions used in the converter-alignment analysis and distinguishes them from similarly named implementation parameters.

\paragraph{Random vectors, norms, and optimized distortion.}
Let $x=(x_1,\ldots,x_D)\in\mathbb R^D$ denote a $D$-dimensional random vector.
For $1\leq p<\infty$,
\begin{equation}
\lVert x\rVert_p=\left(\sum_{i=1}^D|x_i|^p\right)^{1/p},
\qquad
\lVert x\rVert_\infty=\max_{i\leq D}|x_i|.
\label{eq:notation-vector-norms}
\end{equation}
$\mathbb E [\cdotc]$ and $\Pr(\cdotc)$ denote expectation and probability, respectively.

\paragraph{Asymptotic notation.}
Unless stated otherwise, $O(\cdotc)$ and $o(\cdotc)$ refer to $K\to\infty$ at fixed $D$.
The relation $f(K)=O(g(K))$ means $|f(K)/g(K)|$ is eventually bounded, whereas $f(K)=o(g(K))$ means $f(K)/g(K)\to0$.
We write $f\lesssim_\alpha g$ when $f\leq C_\alpha g$ for a finite constant depending only on $\alpha$, and $f\asymp g$ when both $f\lesssim g$ and $g\lesssim f$ hold.

\section{Simulation Details and Additional Simulations}\label{app:simulation-details}

The language-model experiments use the decoder-only GPT; see \url{github.com/karpathy/nanogpt}.
For a batch of $B$ sequences with $T$ tokens, token indices are embedded by
\texttt{wte} and added to learned position embeddings from \texttt{wpe}; the
result has width $d$=\texttt{n\_embd}. The \texttt{wte} weight is tied to the
output matrix of \texttt{lm\_head}, so the vocabulary embedding and output
projection share one parameter tensor.

Each of the $L$=\texttt{n\_layer} pre-normalized decoder blocks contains two
residual sublayers. The attention sublayer applies \texttt{ln\_1}, followed by
the fused QKV projection \texttt{attn.c\_attn}: $d\rightarrow3d$. Its output
is split into queries, keys and values, reshaped into \texttt{n\_head} heads,
and processed with causal scaled dot-product attention; attention scores,
softmax and head concatenation remain digital. The attention result is then
projected by \texttt{attn.c\_proj}: $d\rightarrow d$, passed through dropout,
and added to the sublayer input. The second sublayer applies \texttt{ln\_2}
and the MLP \texttt{mlp.c\_fc}: $d\rightarrow4d$, \texttt{mlp.gelu}, and
\texttt{mlp.c\_proj}: $4d\rightarrow d$, followed by dropout and its residual
addition. A final \texttt{ln\_f} precedes \texttt{lm\_head}: $d\rightarrow V$,
which produces vocabulary logits used by the next-token cross-entropy loss.

\textbf{Transformer realization.}
For transformer workloads, the AIMC mapping is applied to the dense linear projections: the query, key, value, and attention-output projections, and the two feed-forward linear layers.
Each such projection executes its forward MVM on analog crossbar tiles, with digital-to-analog and analog-to-digital conversion at the interface described above.
The self-attention score computation, softmax, attention-weighted reduction, layer normalization, and other non-linear operations remain digital.

\textbf{Transformer-specific logical initialization}
For the nanoGPT transformer experiments, the general MVM-layer logical initialization standard deviation $\sigma_w \propto {1}/{\sqrt{D_{\mathrm{in}}}}$ is instantiated as
\begin{equation}
\sigma_w=0.02\sqrt{\frac{768}{D_{\mathrm{in}}}},\qquad
\sigma_{w,\mathrm{res}}(D_{\mathrm{in}},L)=\frac{0.02}{\sqrt{2L}}\sqrt{\frac{768}{D_{\mathrm{in}}}},
\label{eq:logical-weight-initialization}
\end{equation}
where $\sigma_{w,\mathrm{res}}$ is used for attention and MLP residual projections.
The prefactor preserves nanoGPT's width-768 convention, and the factor $1/\sqrt{2L}$ applies its residual-depth correction.
Position embeddings retain logical standard deviation $0.02$, because a one-hot lookup does not accumulate $D$ unit-RMS inputs.

This appendix collects the benchmark, evaluation, and simulator details for the experiments that test the theoretical results on synthetic and real datasets.
We use the open-source \ac{AIHWKIT} toolkit to simulate AIMC-hardware behavior~\citep{rasch2021aihwkit}; see \url{github.com/IBM/aihwkit}.

\subsection{Precision and gradient-accumulation diagnostics (Figure~\ref{fig:precision-gradient-diagnostics})}
\label{app:precision-gradient-diagnostic-settings}
Figure~\ref{fig:precision-gradient-diagnostics} combines a digital training control, two fixed-checkpoint gradient diagnostics, and one single-layer analog write diagnostic.
Panels (b) and (c) use a separately trained checkpoint with the same Transformer architecture as panel (a), whereas panel (d) uses synthetic inputs and output gradients.

\paragraph{(a) Digital precision sensitivity.}
The character-level Shakespeare model has eight decoder blocks, one attention head, width 48, context length 256, dropout 0.2, no bias terms, and an affine final layer normalization.
Each condition is trained for 5,000 AdamW updates with batch size 64 for each of four training seeds (1--4).
The learning rate has a 100-step linear warm-up, followed by cosine decay from $10^{-3}$ to $10^{-4}$, with $(\beta_1,\beta_2)=(0.9,0.99)$, optimizer $\epsilon=10^{-8}$, zero weight decay, and gradient clipping at 1.0.
The selected weight, activation, or gradient signal is round-tripped through FP16 or emulated FP8 E4M3FN, while arithmetic, gradient accumulation, and optimizer state remain FP32.
Activation conversions use a straight-through derivative, and gradient conversions are applied both to the error entering leaf-module backward computations and to the accumulated parameter gradient supplied to AdamW.
The FP8 weight-access condition uses FP8 read values from an FP16-rounded master weight that is restored before the AdamW update, so it tests low-precision weight access without replacing the optimizer's master state.
For each seed, the final validation cross-entropy is averaged over 200 mini-batches.
Each bar is the mean of these four seed-level values and its whiskers give their observed minimum and maximum, rather than a confidence interval.
Thus, the control isolates value-precision sensitivity and does not claim the use of native FP8 training kernels.

\paragraph{(b) Gradient cancellation by projection type.}
We freeze a step-5,000 digital AdamW checkpoint and, for each of four sampling seeds (1--4), draw eight batches of 64 length-256 sequences from the Shakespeare training split.
The model remains in training mode with dropout 0.2; dropout is seeded separately for each batch, and no optimizer step, gradient clipping, or autocast is applied.
For each linear projection $Y=XW^{\mathsf T}$, we capture its FP32 input $X$ and output gradient $D=\partial\mathcal L/\partial Y$, flattening the batch and token dimensions into $N=16{,}384$ positions.
The loss uses mean cross-entropy, so its normalization is already included in $D$.
We form the reference products and reductions in FP64:
\begin{equation}
g_{n,ij}=D_{n,i}X_{n,j},\qquad
G_{ij}=\sum_{n=1}^{N}g_{n,ij},\qquad
A_{ij}=\sum_{n=1}^{N}|g_{n,ij}|,\qquad
\kappa_{ij}=\frac{A_{ij}}{|G_{ij}|}.
\label{eq:gradient-cancellation-diagnostic}
\end{equation}
Here, $n$ indexes a local linear-layer contribution, not the parameter gradient of an isolated token loss, since attention couples loss positions.
The six bars correspond to query, key, value, attention output, and MLP up/down projections, with the fused QKV output split into three row groups.
For each seed, the statistic takes the arithmetic mean of $\kappa$ over weights and batches within a block, then averages equally over the eight blocks, without a logarithmic transformation or clipping.
Each bar is the arithmetic mean over the four seed-level statistics and its whiskers give their observed minimum and maximum, rather than a confidence interval.
These arithmetic means are sensitive to coordinates with nearly complete cancellation and should not be interpreted as typical coordinate-level values.
We exclude embeddings, the tied output head, layer-normalization parameters, and the attention-score and attention-value products.
All-zero contributions and exact cancellation are counted separately; neither occurs in the measured projections.
The reconstructed gradients agree with autograd to a maximum relative Frobenius error of $3.54\times10^{-7}$.

\paragraph{(c) Controlled accumulation-precision replay.}
For each seed, we sample 64 weight coordinates without replacement from each of the 48 projections and reuse these coordinates across that seed's eight batches, giving 24,576 weight--batch observations per seed.
For each coordinate, the same $N$ FP64 contributions are replayed in batch-major, token-major order, rounding the accumulator after every addition in FP32, FP16, or FP8 E4M3FN while retaining FP64 addends.
For each projection and batch, a fixed power-of-two scale places the largest sampled addend magnitude in $[0.5,1)$, with the same scale used for all coordinates and formats in that group.
The format conversions retain their finite exponent range and underflow behavior.
We restore the original units before comparison with the FP64 reference sum.
Observations are grouped into bins of width 0.5 in $\log_{10}\kappa$, and each seed-level error is $\|\widehat{\mathbf G}-\mathbf G\|_2/\|\mathbf G\|_2$ over the coordinates in that bin; bins containing fewer than 30 observations are omitted.
The lines average these bin-level errors over the four seeds and the shaded bands show their observed seed minimum and maximum, rather than confidence intervals.
This experiment isolates sequential accumulation error for fixed captured signals; it does not reproduce the reduction tree of a hardware matrix-multiplication kernel or establish end-to-end training failure.

\paragraph{(d) Micro-batch update error.}
The one-step diagnostic uses a bias-free $144\rightarrow32$ linear layer and, for each of four seeds (1--4), a fixed batch of 128 Gaussian input/output-gradient pairs.
The initial weights have zero prescribed mean and RMS 0.12, and the output gradients are rescaled so that the full-batch target update at learning rate 0.1 has RMS 0.012.
A single symmetric soft-bounds tile has conductance bounds $\pm0.8$ and nominal write increment $\Delta w_{\min}=0.8/600$, with device and cycle variation disabled, perfect forward/backward reads, deterministic implicit pulses, update management, and a 31-pulse limit.
Each condition starts from the same weights and ordered sample pairs, using micro-batch sizes 128, 32, 8, 2, or 1.
The MP path accumulates each micro-batch update digitally and immediately transfers truncated pulse quanta, capped at $\pm31$ per entry, retaining the untransferred remainder in a digital residual.
The rank-update path applies the sample outer products directly through the tile update operation.
The vertical axis is $\|\Delta W_{\mathrm{realized}}-\Delta W_{\mathrm{target}}\|_F/\|\Delta W_{\mathrm{target}}\|_F$ after processing the fixed batch, with no additional residual flush.
Each bar is the mean across the four seed-level errors and its whiskers give their observed minimum and maximum, rather than a confidence interval.
The red bar uses the rank-update error at micro-batch size 128 from each seed.
Because this diagnostic is symmetric and single-step, it isolates error accumulated through repeated quantized transfers rather than asymmetric device dynamics.
This controlled one-step diagnostic isolates the write-path consequence of micro-batch splitting; it is not an end-to-end network-training experiment.
The direct rank-update produces a large but bitwise identical realized update for every tested partition in this deterministic, near-linear symmetric-device configuration, so panel (d) displays one representative red bar.
This partition invariance is not a claim about stochastic, asymmetric, saturated, or multi-step training trajectories.

\subsection{Initialization maps (Figure~\ref{fig:initial-native-versus-s-scaling})}
\label{app:initialization-map-details}

This section specifies the illustrative width-48 non-residual initialization distributions in Figure~\ref{fig:initial-native-versus-s-scaling}.
They are visual comparisons of the stated mappings, not pooled histograms across the transformer or exact reproductions of every training-system choice in the cited work.

\paragraph{Rasch-inspired native-AbsMax control.}
We first draw a logical matrix $W_0$ with entries of target logical-space initialization standard deviation $\sigma_w=0.08$.
The plotted native virtual mapping uses $\omega_{\mathrm{map}}=0.3$, defines $a=\lVert W_0\rVert_\infty/(\omega_{\mathrm{map}}\tau)$, and stores $\widehat W=W_0/a$.
Thus, the realized maximum magnitude maps to $\omega_{\mathrm{map}}\tau$, and the compensating logical scale $a$ is matrix-specific.
This is the fixed-AbsMax virtual-mapping convention used in AIHWKit-style Rasch controls~\citep{rasch2020large,rasch2021aihwkit}, not a literal reproduction of the direct rank-one-SGD RPU training algorithm.

\paragraph{Nandakumar-inspired fixed-logical-range control.}
\cite{nandakumar2020mixed} reports PCM conductances initialized with mean $4.5\,\mu\mathrm{S}$ and standard deviation $1.25\,\mu\mathrm{S}$ over the device range $[0.1,8]\,\mu\mathrm{S}$.
For a shared display coordinate only, we transform these conductances as
\begin{equation}
\widetilde G = \frac{2(G-0.1\,\mu\mathrm{S})}{(8-0.1)\,\mu\mathrm{S}}-1 \in [-1,1].
\label{eq:nandakumar-display-normalization}
\end{equation}
The figure draws a representative sample from the reported initialization distribution, clipped to the stated device interval, then applies \eqref{eq:nandakumar-display-normalization}.
In their first epoch, the same conductance range maps to logical weights in $[-0.7,0.7]$, which the figure represents as $W^{(1)}=0.7\widetilde G$.
This PCM curve is therefore an illustrative normalization of the reported hardware initialization, not the S-NW ablation in Table~\ref{tab:profile-ablation-screen}; S-NW retains S's physical initialization and changes only the read-scale interpretation during its first two nominal epochs.

\subsection{Input-DAC reconstruction-distortion experiment (Figure~\ref{fig:normalization-mmse})}
\label{app:normalization-mmse-calibration}

This section gives the complete configuration and interpretation for Figure~\ref{fig:normalization-mmse}; it is a controlled input-quantization calculation with neither an MVM nor device-read noise.
For the precision scans, we draw $N=100{,}000$ vectors of width $D=48$ from each unit-variance product law, $x_i\sim\mathcal N(0,1)$ and $x_i\sim\operatorname{Laplace}(0,1/\sqrt 2)$, using pseudorandom seed $20{,}260{,}823$ and double-precision arithmetic.
The first two panels sweep $b=2,\ldots,10$ with $K_b=2^b-2$ quantization intervals.
The two right panels fix $b=6$ and sweep $D=2$ through $32{,}768$; each point uses $\max\{2{,}000,\min\{20{,}000,\lceil8{,}000{,}000/D\rceil\}\}$ vectors.

For calibration vectors $x^{(1)},\ldots,x^{(N)}$ and a set $\mathcal H_b\subset[1,\infty]\times(0,\infty)$ of supported norm--rail pairs, the selected rail is
\begin{equation}
(p_b,A_b)\in\operatorname*{arg\,min}_{(p,A)\in\mathcal H_b}
\frac1N\sum_{n=1}^N
\left\lVert Q_{K_b,A\lVert x^{(n)}\rVert_p}\bigl(x^{(n)}\bigr)-x^{(n)}\right\rVert_2^2,
\qquad
s_b(x)=A_b\lVert x\rVert_{p_b}.
\label{eq:exact-calibrated-range}
\end{equation}
Here, $K_b+1$ is the number of reconstruction levels.
The selected pair remains fixed after calibration, whereas $s_b(x)$ adapts to each input vector.
The candidate set includes $(p,A)=(\infty,1)$, so the calibration explicitly includes the no-clipping AbsMax rail.

The vertical coordinate is the finite-sample estimate of the average reconstruction distortion in \eqref{eq:joint-norm-rail-distortion},
\begin{equation}
\widehat R_b(q,a)
=\frac1{ND}\sum_{n=1}^N \rho_q(x^{(n)})^2
\left\lVert Q_{K_b,a}\!\left(\frac{x^{(n)}}{\rho_q(x^{(n)})}\right)-\frac{x^{(n)}}{\rho_q(x^{(n)})}\right\rVert_2^2,
\label{eq:normalization-calibration-empirical-distortion}
\end{equation}
where $q=1/p$ and
\begin{equation}
\rho_q(x)=
\begin{cases}
\left(D^{-1}\sum_{i=1}^D |x_i|^{1/q}\right)^q,&q>0,\\
\lVert x\rVert_\infty,&q=0.
\end{cases}
\label{eq:normalization-calibration-power-mean}
\end{equation}
Thus the plotted quantity estimates $R_K$ in the original input units; it is not the input-energy-normalized NMSE used in the preceding version of the figure.
For $q>0$, the corresponding norm--rail parameters are $p=1/q$ and $A=aD^{-q}$, with $p=\infty$ and $A=a$ at $q=0$.

The search starts from 17 equally spaced $q$ values on $[0,1]$.
For each precision, five adaptive midpoint-refinement rounds and one interior parabolic proposal refine the best sampled norm order.
At each sampled $q$, 401 rails on $0.02\leq a\leq1.25D^q$ are evaluated, and the neighboring interval around the best rail is refined three times with 101 points.
For each candidate, weighted prefix sums evaluate the deterministic round-to-even saturating quantizer directly, so both granular and clipping error are included.
The returned candidate is requantized in logical units as a numerical check.
The displayed fixed-RMS reference uses $q=1/2$ and $a=3$ without optimization; the one- and two-RMS references are omitted for visual economy.

Each panel also evaluates two static ACIQ thresholds: one from ACIQ's unit-variance Gaussian approximation and one from its unit-variance Laplace approximation~\cite{banner2019post}.
The legend identifies the prior used to derive the fixed threshold, while the panel title identifies the distribution on which that threshold is evaluated.
For the unit-variance Laplace prior, the threshold minimizes $2b_L^2\exp(-\alpha/b_L)+\alpha^2/(3\,2^{2b})$ with $b_L=1/\sqrt2$.
For the unit-variance Gaussian prior, it numerically minimizes $(\alpha^2+1)\operatorname{erfc}(\alpha/\sqrt2)+\alpha^2/(3\,2^{2b})-\sqrt{2/\pi}\,\alpha\exp(-\alpha^2/2)$.
These approximations use ACIQ's $2^b$ mid-rise bins, whereas all displayed curves directly evaluate the resulting threshold with the manuscript's endpoint-including, round-to-even quantizer with $K_b=2^b-2$.

At $D=48$, AbsMax is within 10\% of the calibrated norm--rail distortion from four bits onward for both product laws, and they nearly coincide at higher precision.
At six bits, the same 10\% criterion holds through $D=1{,}024$; the curves separate from about $D=2{,}048$, especially for Laplace inputs.
The fixed ACIQ thresholds can reduce low-bit error when their assumed prior matches the input law, but their clipping cost becomes visible at higher precision and under a mismatched prior.
The ACIQ curves are nearly independent of $D$ because each applies one coordinatewise, distribution-level threshold.
The largest width exceeds the $d_{\mathrm{model}}=12{,}288$ of the 175B-parameter dense GPT-3 reference~\cite{brown2020language}, although width alone does not determine parameter count or end-to-end AIMC-training impact.
Because the inputs are independent product distributions rather than measured training traces, this experiment validates the distributional quantization calculation rather than the representativeness of either law for every model layer.

\subsection{Additional ADC Range-Alignment Designs}
\label{sec:adc-range-alignment}

The near-optimal performance of max-aligned DAC rails motivates an analogous ADC principle: avoid clipping the largest output coordinate.
Unlike DACs, however, the ADC receives the already-formed analog accumulator $z=\widehat W x+\eta$, so its range cannot be enlarged retrospectively.
To avoid clipping, we adopt \textit{bound management (BM)} proposed by \cite{gokmen2017cnn}.
When the maximum output coordinate clips on a first read, BM repeatly reduces the input from $x$ to $x/2$ until no clipped coordinate remains after the ADC read.
This is the BM design at the right of Figure~\ref{fig:overview-comparison}; it avoids irreversible maximum-coordinate clipping, but every retry repeats the array MVM and its peripheral conversions.
Consequently, the ADC range $C_{\mathrm{ADC}}$ should therefore balance quantization resolution against retries.
If $C_{\mathrm{ADC}}$ is too large, typical outputs occupy only a small fraction of the converter range and waste quantization levels.
If it is too small, BM is triggered repeatedly and increases the computation required for one logical MVM.

\begin{wrapfigure}[13]{r}{0.40\textwidth}
\centering
\vspace{-2.5em}
\includegraphics[width=\linewidth]{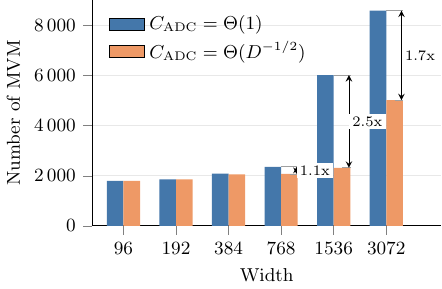}
\vspace{-2em}
\caption{Reduction of bound-management MVM retries through output-rail alignment.}
\label{fig:forward-rail-large-width-reads}
\end{wrapfigure}
The conductance distribution designed in Section~\ref{sec:stable-weight-scaling} makes this tradeoff predictable.
By initialization, the target conductance standard deviation is $\tau/\omega$, and the input has standard deviation $\sigma_x$, so the standard deviation of each output coordinate is approximately $(\tau/\omega)\sqrt{D}\,\sigma_x$.
It motivates us to choose
\begin{equation}
C_{\mathrm{ADC}}
=c_{\mathrm{out}}(\tau/\omega)\sqrt{D}\,\sigma_x,
\label{eq:predicted-adc-rail}
\end{equation}
where $c_{\mathrm{out}}$ is a calibrated tail multiplier.
In S-OUT, native AbsMax noise management first rescales each input vector for the tile read and restores that scale after the ADC; the implemented tile rail is the static dimension-dependent value $c_{\mathrm{out}}(\tau/\omega)\sqrt{D}$.
The calibrated $c_{\mathrm{out}}$ accounts for the distribution of the managed input scale rather than evaluating $\sigma_x$ separately for each vector.
Figure~\ref{fig:forward-rail-large-width-reads} provides empirical evidence showing that this choice reduces the number of MVM retries, especially at larger widths.
To isolate the operational effect of the output rail at larger widths, Figure~\ref{fig:forward-rail-large-width-reads} retains literal native S for the language-model head.
The OUT condition changes only non-head forward MVMs, applying $B_{\mathrm{out}}=6\sqrt{D_{\mathrm{in}}}/3$ while retaining iterative bound management; this prevents the unusually heavy head-output tail from conflating a static-rail comparison for ordinary Transformer MVMs with a separate head-range intervention.
Width 96 uses 5,000 updates; widths 192, 384, and 768 continue their 5,000-update checkpoints to 10,000 updates at the original minimum learning rate of $10^{-4}$, identically for S and OUT.
Widths 1536 and 3072 continue their original checkpoints to 20,000 and 40,000 updates, respectively, retaining their original decay endpoints of 10,000 and 20,000 updates and minimum learning rate of $10^{-4}$.
The width-192--768 continuation retains its original learning-rate decay endpoint at 5,000 updates; all continued conditions have their read counts recomputed from the extended checkpoints.
All cells use seed 1337.

\subsection{Scaling Analog Training (Figure~\ref{fig:scaling} and Table~\ref{tab:scaling-model-specifications})}
\label{app:simulation-details-scaling}

We compared five decoder-only GPT models with tied token-embedding and output-projection weights, no bias terms, a 50,257-token vocabulary, and a context length of 1,024 tokens.
Table~\ref{tab:scaling-model-specifications} lists the model specifications and schedules.
\begin{table}[t]
\centering
\caption{Decoder-only GPT model specifications and training schedules for the scaling-law simulations.}
\label{tab:scaling-model-specifications}
\small
\setlength{\tabcolsep}{3pt}
\begin{tabular}{lrrrrrr}
\toprule
Model & \# blocks & Width & \# heads & \# parameters & Updates & Final validation step \\
\midrule
Tiny   & 2  & 128 & 4  & 6.83M   & 278   & 273 \\
Small  & 4  & 256 & 4  & 16.01M  & 652   & 640 \\
Medium & 6  & 384 & 6  & 29.92M  & 1,218 & 1,200 \\
Large  & 8  & 512 & 8  & 50.91M  & 2,072 & 2,060 \\
XLarge & 12 & 768 & 12 & 123.55M & 5,028 & 5,020 \\
\bottomrule
\end{tabular}
\end{table}
All models were trained from random initialization on the tokenized OpenWebText training split and evaluated on its held-out validation split.
Writing the model parameter count as $N$, the planned token budget at each scale was approximately $\#\mathrm{data}=20N$, following the Chinchilla allocation rule~\citep{hoffmann2022training}.
Each optimizer update used a global batch of 480 sequences of length 1,024, or 491,520 tokens.
Both paths used linear warm-up followed by cosine learning-rate decay from a peak of $6\times10^{-4}$ to $6\times10^{-5}$, AdamW coefficients $\beta_1=0.9$ and $\beta_2=0.95$, zero weight decay, and gradient clipping at 1.0.

Each validation loss averages ten evaluation batches.
PyTorch was seeded with 1337, and each condition was run once; differences between curves therefore describe these runs rather than run-to-run variability.

The digital baseline used PyTorch AdamW with BF16 autocasting.
The analog path used the complete S-OUT algorithm: Li-ECRAM tiles, 16-bit DACs and 9-bit ADCs in both forward and backward directions, native input management, and matrix-specific ADC output rails with $c_{\mathrm{out}}=6.0$.
The analog path computed digital W-grad, while its logical-space AdamW optimizer maintained moment estimates and retained sub-threshold update residuals digitally before issuing bounded, quantized pulse updates to the tiles.
The analog runs used float32 because this AIHWKit training path did not support BF16 autocasting.
Dense linear maps, including the tied vocabulary projection, were mapped to analog tiles without array-size tiling limits; attention-score products, softmax, normalization, and other nonlinear operations remained digital.
The tile updates used fixed bit-line length, update and bit-line management, and deterministic implicit pulses; no Hadamard preconditioning was enabled.

For both paths, we denote the full-model algorithmic compute proxy by $\#\mathrm{compute}$.
It is $\#\mathrm{data}$ times the sum of $6\,\#\mathrm{nonemb}$, $6\,\#\mathrm{vocab}\,\#\mathrm{width}$, and $12\,\#\mathrm{layers}\,\#\mathrm{width}\,\#\mathrm{context}$.
Here, $\#\mathrm{nonemb}$ is the number of non-embedding parameters, $\#\mathrm{vocab}$ is the vocabulary size, $\#\mathrm{width}$ is the embedding width, $\#\mathrm{layers}$ is the number of Transformer blocks, and $\#\mathrm{context}$ is the context length.
This proxy includes the vocabulary head and attention while excluding the extra work of the simulator's dense one-hot embedding implementation.
Figures~\ref{fig:scaling} show the final recorded evaluations and observed trajectories of the five completed matched schedules.
For each training path and each coordinate $X\in\{\#\mathrm{compute},\#\mathrm{data},N\}$ in Figure~\ref{fig:scaling}, we fit $L=A X^{-\alpha}$ by ordinary least squares in $\log L$--$\log X$ space using its five completed endpoints.
Solid curves show the fitted relationships over the observed coordinate ranges; dashed curves extend them to ten times the largest observed coordinate.
These two-parameter fits omit an irreducible-loss term and serve as preliminary guides for selecting the next model scale.
Because $\#\mathrm{compute}$, $\#\mathrm{data}$, and $N$ covary across single-seed schedules, the fits cannot isolate separate parameter and data exponents or establish an asymptotic scaling law.

\subsection{Zero-shot downstream evaluation}
\label{app:simulation-details-downstream}

We evaluated the final Tiny-through-Xlarge checkpoints from digital AdamW and analog training on 22 multiple-choice or verbalized-label tasks.
The latter are separately trained hardware-aware models whose dense layers execute through the simulated EcRam tiles during inference; they are not direct-programmed versions of the digital checkpoints.
All models used the GPT-2 byte-pair tokenizer, a 50,257-token vocabulary and a maximum context of 1,024 tokens.
For each question, we selected the answer continuation with the largest mean token log-likelihood conditional on the task prompt, following the conditional-likelihood formulation used for zero-shot language-model task transfer.
If a prompt and candidate exceeded the context window, we retained the final 1,024 tokens so that the complete answer continuation was scored.
The table reports accuracy under this length-normalized criterion; no gradient computation, calibration or downstream fine-tuning was performed.

The commonsense and multiple-choice reasoning suite comprises HellaSwag, PIQA, ARC-Easy, ARC-Challenge, OpenBookQA, WinoGrande, CommonsenseQA, COPA, SciQ and WSC.
The complementary language-understanding suite comprises AG News, BoolQ, CB, CoLA, MNLI, MRPC, QNLI, QQP, RTE, SST-2, WiC and WNLI.
We used the labelled validation split for WinoGrande, GLUE and SuperGLUE tasks, the test split for ARC, OpenBookQA, AG News and SciQ, and the validation split for HellaSwag, PIQA and CommonsenseQA.
For evaluation sets with at least 128 labelled examples, we used a fixed random sample of 128 examples with seed 1337; smaller sets were evaluated in full.
The tasks were expressed through simple task-specific natural-language prompts and answer-label continuations, with the same prompt and example selection for every compared checkpoint.

\begin{table*}[t]
\centering
\footnotesize
\setlength{\tabcolsep}{2.5pt}

\begin{tabular}{lrrrrrrrrrr@{\hspace{6pt}}r}
\toprule
& \multicolumn{10}{c}{Task accuracy $\uparrow$} & \multirow{2}{*}{\textbf{Avg} $\uparrow$} \\
\cmidrule(lr){2-11}
Model & HSw & PIQA & ARCe & ARCc & OBQA & Wino & CSQA & COPA & SciQ & WSC & \\
\midrule
Digital Tiny    & 29.7 & 57.0 & 29.7 & 23.4 & 21.9 & 47.7 & 18.8 & 53.0 & 35.9 & 36.5 & \textbf{35.4} \\
Digital Small   & 28.9 & 61.7 & 33.6 & 22.7 & 17.2 & 50.0 & 21.1 & 49.0 & 43.0 & 59.6 & \textbf{38.7} \\
Digital Medium  & 27.3 & 58.6 & 33.6 & 18.8 & 21.9 & 49.2 & 27.3 & 52.0 & 48.4 & 60.6 & \textbf{39.8} \\
Digital Large   & 26.6 & 60.2 & 33.6 & 21.1 & 18.0 & 50.8 & 25.0 & 54.0 & 39.8 & 63.5 & \textbf{39.3} \\
Digital Xlarge  & 28.9 & 64.1 & 35.9 & 21.1 & 21.1 & 50.8 & 21.9 & 59.0 & 46.1 & 41.3 & \textbf{39.0} \\
\midrule
Analog Tiny   & 26.6 & 56.2 & 27.3 & 21.1 & 21.1 & 48.4 & 22.7 & 47.0 & 37.5 & 62.5 & \textbf{37.0} \\
Analog Small  & 25.0 & 57.0 & 25.8 & 18.8 & 21.9 & 50.8 & 18.8 & 52.0 & 38.3 & 39.4 & \textbf{34.8} \\
Analog Medium & 22.7 & 56.2 & 28.9 & 20.3 & 21.1 & 48.4 & 21.1 & 51.0 & 38.3 & 45.2 & \textbf{35.3} \\
Analog Large  & 30.5 & 59.4 & 29.7 & 20.3 & 21.9 & 41.4 & 22.7 & 49.0 & 38.3 & 55.8 & \textbf{36.9} \\
Analog Xlarge & 25.8 & 55.5 & 36.7 & 25.8 & 17.2 & 45.3 & 26.6 & 55.0 & 43.8 & 44.2 & \textbf{37.6} \\
\bottomrule
\end{tabular}
\caption{Zero-shot commonsense and multiple-choice reasoning screening accuracy (\%, length-normalized conditional likelihood).
Digital AdamW checkpoints appear first, followed by separately trained analog checkpoints.
Each task uses 128 fixed-seed labelled examples where available, and otherwise its complete labelled evaluation split (COPA: 100 and WSC: 104 examples).
HSw: HellaSwag; ARCe/ARCc: ARC-Easy/ARC-Challenge; OBQA: OpenBookQA; Wino: WinoGrande; CSQA: CommonsenseQA.
Avg is the unweighted average across the ten displayed tasks.}
\label{tab:downstream-zero-shot}
\end{table*}

\begin{table*}[t]
\centering
\footnotesize
\setlength{\tabcolsep}{2.5pt}

\begin{tabular}{l*{12}{r}r}
\toprule
& \multicolumn{12}{c}{Task accuracy $\uparrow$} & \multirow{2}{*}{\textbf{Avg} $\uparrow$} \\
\cmidrule(lr){2-13}
Model & AG & Bool & CB & CoLA & MNLI & MRPC & QNLI & QQP & RTE & SST-2 & WiC & WNLI & \\
\midrule
Digital Tiny    & 18.8 & 39.8 & 41.1 & 46.1 & 35.9 & 39.1 & 47.7 & 51.6 & 52.3 & 58.6 & 49.2 & 56.3 & \textbf{44.7} \\
Digital Small   & 21.1 & 42.2 & 41.1 & 43.0 & 38.3 & 35.9 & 46.1 & 39.8 & 52.3 & 57.0 & 48.4 & 62.0 & \textbf{43.9} \\
Digital Medium  & 21.9 & 40.6 & 41.1 & 41.4 & 38.3 & 32.8 & 56.2 & 68.0 & 51.6 & 64.8 & 46.1 & 59.2 & \textbf{46.8} \\
Digital Large   & 25.0 & 40.6 & 41.1 & 40.6 & 40.6 & 32.8 & 47.7 & 65.6 & 51.6 & 60.9 & 46.1 & 56.3 & \textbf{45.7} \\
Digital Xlarge  & 22.7 & 56.2 & 41.1 & 39.1 & 38.3 & 33.6 & 58.6 & 60.2 & 52.3 & 57.0 & 46.1 & 63.4 & \textbf{47.4} \\
\midrule
Analog S Tiny   & 20.3 & 40.6 & 41.1 & 39.1 & 38.3 & 32.8 & 47.7 & 65.6 & 52.3 & 53.9 & 46.9 & 57.7 & \textbf{44.7} \\
Analog S Small  & 20.3 & 49.2 & 41.1 & 50.0 & 38.3 & 42.2 & 50.0 & 44.5 & 56.2 & 50.8 & 49.2 & 59.2 & \textbf{45.9} \\
Analog S Medium & 24.2 & 45.3 & 41.1 & 50.0 & 38.3 & 37.5 & 57.8 & 45.3 & 51.6 & 53.1 & 48.4 & 50.7 & \textbf{45.3} \\
Analog S Large  & 19.5 & 36.7 & 41.1 & 38.3 & 38.3 & 34.4 & 50.8 & 64.1 & 50.0 & 48.4 & 46.9 & 53.5 & \textbf{43.5} \\
Analog S Xlarge & 32.8 & 56.2 & 41.1 & 41.4 & 39.1 & 32.8 & 57.8 & 64.8 & 53.9 & 52.3 & 46.1 & 53.5 & \textbf{47.7} \\
\bottomrule
\end{tabular}
\caption{Complementary zero-shot language-understanding screening accuracy (\%, length-normalized conditional likelihood). Digital AdamW checkpoints appear first, followed by separately trained analog checkpoints.
Each task uses 128 fixed-seed labelled examples where available, and otherwise its complete labelled evaluation split (CB: 56 and WNLI: 71 examples). Avg is the unweighted average across the 12 displayed tasks.}
\label{tab:downstream-language-understanding}
\end{table*}

The digital and analog models show task- and scale-dependent macro-average screening accuracy, with variation in either direction across individual tasks.
This comparison is descriptive: it uses one realization per training condition and fixed evaluation samples, and several tasks contain fewer than 128 labelled validation examples.
The scaling-law pre-training losses nevertheless remain lower for the digital checkpoints (Figure~\ref{fig:scaling}), so the downstream screen does not remove the observed optimization gap.

This protocol is intended as a controlled transfer screen rather than a leaderboard evaluation.
In particular, it reports accuracy for all tasks, whereas several GLUE and SuperGLUE benchmarks conventionally use other primary metrics, and it does not reproduce every benchmark's canonical prompt or calibration procedure.
Single device realizations and the fixed subsets mean that task-level differences, especially on small evaluation sets, should not be interpreted as estimates of statistical significance.

\subsection{Algorithm robustness across general RPU device models}
\label{app:simulation-details-device-comparison}

We test the algorithms through AIHWKit's general \texttt{SingleRPUConfig} interface, so the device update rule is an experimental axis rather than an unexamined simulator default.
Tables~\ref{tab:aihwkit-real-device-rpu-configs-a}--\ref{tab:aihwkit-real-device-rpu-configs-b} list every AIHWKit preset whose documentation identifies a measured physical device or array; the idealized and Gokmen--Vlasov algorithmic reference devices are excluded.
Each row cites the measurement paper used for the fit, and a dash denotes a parameter that the corresponding class does not define.
The AIHWKit implementation is cited with its officially requested toolkit reference~\cite{buechel2024aihwkit}.
\emph{ReRamES} and \emph{ReRamSB} denote, respectively, exponential-step and soft-bounds fits to the HfO$_2$ ReRAM signal-and-noise measurements of \cite{gong2022reram}.
\emph{Capacitor} denotes the capacitor cross-point array fit of \cite{li2018capacitor}; \emph{Li-ECRAM} and \emph{MO-ECRAM} denote lithium-ion and metal-oxide electrochemical RAM fits from \cite{tang2018ecram} and Kim et al.~\cite{kim2019moecram}, respectively.
\emph{PCM} denotes the Ge$_2$Sb$_2$Te$_5$ phase-change-memory model used by Nandakumar et al.~\cite{nandakumar2019phase}.
\emph{ReRamArrayOM} and \emph{ReRamArrayHfO$_2$} denote, respectively, the optimized-material and baseline-HfO$_2$ ReRAM-array fits reported by \cite{gong2022reram}.

\begin{table*}[t]
\centering
\small
\setlength{\tabcolsep}{2.5pt}
\begin{tabular}{lrrrrrrr}
\toprule
Preset device & $\mathrm{dw}_{\min}$ & $\mathrm{up\_down}$ & $w_{\max}$ & $w_{\min}$ & $\gamma_{\mathrm{up}}$ & $\gamma_{\mathrm{down}}$ & \# states \\
\midrule
ReRamES~\cite{gong2022reram} & 0.00135 & 0.259359 & 1 & -1 & 5 & 5 & 1481.5 \\
ReRamSB~\cite{gong2022reram} & 0.002 & 0 & 1.25 & -0.75 & -- & -- & 1000 \\
Capacitor~\cite{li2018capacitor} & 0.005 & 0 & 1 & -1 & 0.05 & 0.05 & 400 \\
Li-ECRAM~\cite{tang2018ecram} & 0.002 & 0 & 1.1724 & -0.8276 & 0.1153 & 0.5085 & 1000 \\
MO-ECRAM~\cite{kim2019moecram} & 0.00028214 & 0 & 1.1714 & -0.8286 & 0.4152 & 0.7342 & 7088.7 \\
PCM~\cite{nandakumar2019phase} & 0.01 & 0 & 2 & 0 & 2.5 & 2.5 & 200 \\
ReRamArrayOM~\cite{gong2022reram} & 0.0949 & 0 & 1 & -1 & -- & -- & 21.1 \\
ReRamArrayHfO$_2$~\cite{gong2022reram} & 0.4622 & 0 & 1 & -1 & -- & -- & 4.3 \\
\bottomrule
\end{tabular}
\caption{AIHWKit \texttt{SingleRPUConfig} presets fitted to measured physical devices or arrays: primary update and conductance parameters. \# states denotes the nominal ratio $(w_{\max}-w_{\min})/\mathrm{dw}_{\min}$. The citation in each row identifies the source measurement.}
\label{tab:aihwkit-real-device-rpu-configs-a}
\end{table*}

\begin{table*}[t]
\centering
\scriptsize
\resizebox{\textwidth}{!}{%
\begin{tabular}{lrrrrrrrr}
\toprule
Preset device & $\mathrm{dw}_{\min,\mathrm{d2d}}$ & $\mathrm{up\_down}_{\mathrm{d2d}}$ & $w_{\max,\mathrm{d2d}}$ & $w_{\min,\mathrm{d2d}}$ & $\gamma_{\mathrm{up,d2d}}$ & $\gamma_{\mathrm{down,d2d}}$ & $\mathrm{dw}_{\min,\mathrm{std}}$ & $\mathrm{write\_noise}_{\mathrm{std}}$ \\
\midrule
ReRamES~\cite{gong2022reram} & 0.2 & 0.05 & 0.3 & 0.3 & -- & -- & 5 & 75 \\
ReRamSB~\cite{gong2022reram} & 0.3 & 0.01 & 0.24 & 0.4 & -- & -- & 3.75 & 56 \\
Capacitor~\cite{li2018capacitor} & 0.1 & 0.06 & 0.07 & 0.07 & 0.01 & 0.01 & 0.3 & 0 \\
Li-ECRAM~\cite{tang2018ecram} & 0.1 & 0.01 & 0.05 & 0.05 & 0.05 & 0.05 & 0.3 & 0 \\
MO-ECRAM~\cite{kim2019moecram} & 0.1 & 0.01 & 0.05 & 0.05 & 0.05 & 0.05 & 2 & 0 \\
PCM~\cite{nandakumar2019phase} & 0.2 & 0.05 & 0.1 & 0 & -- & -- & 0.6 & 0 \\
ReRamArrayOM~\cite{gong2022reram} & 0.7829 & 0.01 & 0.3499 & 0.5695 & -- & -- & 0.4158 & 1.4113 \\
ReRamArrayHfO$_2$~\cite{gong2022reram} & 0.7125 & 0.01 & 0.4295 & 0.599 & -- & -- & 0.2174 & 0.5841 \\
\bottomrule
\end{tabular}}
\caption{AIHWKit \texttt{SingleRPUConfig} presets fitted to measured physical devices or arrays: device and cycle variation parameters. The citation in each row identifies the source measurement.}
\label{tab:aihwkit-real-device-rpu-configs-b}
\end{table*}

Figure~\ref{fig:measured-rpu-update-response} visualizes five stochastic potentiation--depression trajectories for every measured preset.
The curves retain the preset's cycle and device variation and therefore illustrate the modelled update behavior, not an idealized mean response.

\begin{figure*}[t]
\centering
\includegraphics[width=\textwidth]{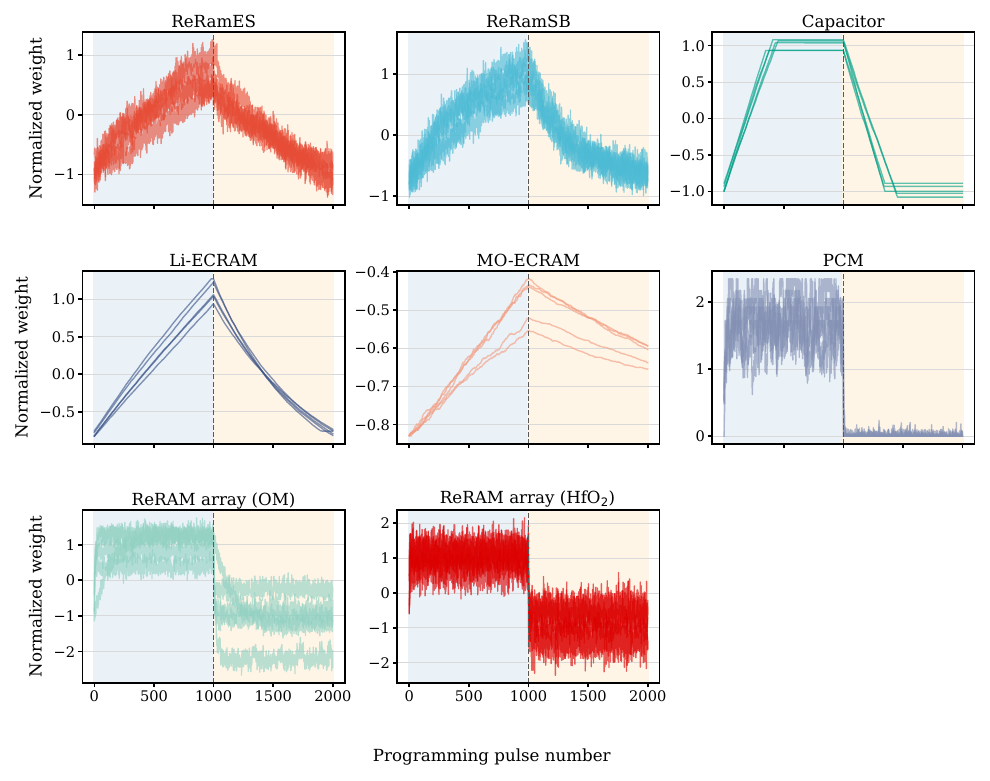}
\caption{Response curves for all measured-device \texttt{SingleRPUConfig} presets in Tables~\ref{tab:aihwkit-real-device-rpu-configs-a}--\ref{tab:aihwkit-real-device-rpu-configs-b}. Each panel shows five simulated traces under 1,000 potentiation pulses (pale blue) followed by 1,000 depression pulses (pale orange). The dashed line marks the reversal.}
\label{fig:measured-rpu-update-response}
\end{figure*}

For the algorithm-performance comparison, we evaluate the calibrated SoftBounds control and all eight measured-device RPU models documented above: ReRamES, ReRamSB, capacitor, Li-ECRAM, MO-ECRAM, PCM, oxide-memory (OM) ReRAM, and HfO$_2$ ReRAM.
We compare methods S and SFB on the Shakespeare character-level task with one attention head, embedding width 48, dropout 0.2, and 2, 4, 6, or 8 transformer blocks; a matched Digital run is included as a non-analog baseline.
Each condition runs for 5,000 iterations and is evaluated every 250 iterations over 200 validation batches at seed 1337.
The results are therefore descriptive paired comparisons rather than estimates of seed-to-seed variation.

\begin{table}[H]
\centering
\small

\begin{tabular}{lrrrr}
\toprule
Device & L2 & L4 & L6 & L8 \\
\midrule
Digital & 2.2276 & 2.0239 & 1.9006 & 1.8422 \\
\midrule
SoftBounds & 2.3302 & 2.0935 & 1.9733 & 1.8949 \\
ReRamES & 2.4486 & 2.3741 & 2.2386 & 2.1687 \\
ReRamSB & 2.4782 & 2.4455 & 2.3214 & 2.2486 \\
Capacitor & 2.3305 & 2.0934 & 1.9517 & 1.8699 \\
Li-ECRAM & 2.3139 & 2.0665 & 1.9424 & 1.8637 \\
MO-ECRAM & 2.2937 & 2.0675 & 1.9516 & 1.8792 \\
PCM & 2.9107 & 2.8736 & 2.8732 & 2.7889 \\
OM ReRAM & 2.4031 & 2.2654 & 2.1149 & 2.0429 \\
HfO$_2$ ReRAM & 2.4741 & 2.4451 & 2.3554 & 2.2672 \\
\bottomrule
\end{tabular}
\caption{Final Shakespeare validation loss after 5,000 iterations for method S and the matched Digital baseline, across all general RPU device models. Lower is better; each cell is one seed.}
\label{tab:appendix-device-comparison-s}
\end{table}

\begin{table}[H]
\centering
\small

\begin{tabular}{lrrrr}
\toprule
Device & L2 & L4 & L6 & L8 \\
\midrule
Digital & 2.2276 & 2.0239 & 1.9006 & 1.8422 \\
\midrule
SoftBounds & 2.4357 & 2.3168 & 2.2038 & 2.1422 \\
ReRamES & 2.5540 & 2.4946 & 2.4970 & 2.4409 \\
ReRamSB & 2.5309 & 2.5390 & 2.4911 & 2.4616 \\
Capacitor & 2.5324 & 2.5215 & 2.4885 & 2.5088 \\
Li-ECRAM & 2.4887 & 2.4689 & 2.3347 & 2.2591 \\
MO-ECRAM & 2.4411 & 2.3919 & 2.2671 & 2.1957 \\
PCM & 3.0324 & 3.6237 & 3.5720 & 3.6668 \\
OM ReRAM & 2.5392 & 2.5096 & 2.4766 & 2.5024 \\
HfO$_2$ ReRAM & 2.7441 & 2.8103 & 2.6667 & 2.6385 \\
\bottomrule
\end{tabular}
\caption{Final Shakespeare validation loss after 5,000 iterations for method SFB and the matched Digital baseline, across all general RPU device models. Lower is better; each cell is one seed.}
\label{tab:appendix-device-comparison-sfb}
\end{table}

Across all nine RPU models and four depths, S attains lower final loss than SFB in this matched one-seed comparison (Tables~\ref{tab:appendix-device-comparison-s} and~\ref{tab:appendix-device-comparison-sfb}).
The shared Digital row is a reference without analog device non-idealities; the remaining rows assess algorithm behavior across the full set of distinct, non-ideal RPU update models.
Within the fitted simulator models, MO-ECRAM has a roughly seven-fold larger nominal state count than Li-ECRAM (7,089 versus 1,000), owing to its smaller $\mathrm{dw}_{\min}$, but it also specifies a larger cycle update variation ($\mathrm{dw}_{\min,\mathrm{std}}=2$ versus $0.3$; Tables~\ref{tab:aihwkit-real-device-rpu-configs-a} and~\ref{tab:aihwkit-real-device-rpu-configs-b}).
For S, the two ECRAM models give closely matched final losses, with Li-ECRAM slightly lower at the deeper models; for SFB, MO-ECRAM is lower at every depth (Tables~\ref{tab:appendix-device-comparison-s} and~\ref{tab:appendix-device-comparison-sfb}).
This is a comparison of the specified AIHWKit fits under one training protocol, not a general ranking of fabricated ECRAM technologies.

\subsection{Robustness to Device Non-Idealities}\label{sec:robustness}
We separately perturbed four non-quantization hardware properties of the canonical S configuration to distinguish update-device variation from MVM-periphery limitations.
The experiments used LogicalMpAdamW with the SoftBoundsReferenceDevice ($\tau=1$, 1,200 states), fixed construction and data seeds of 1337, one attention head, width 48, dropout 0.2, batch size 64, context length 256, 5,000 updates, and 200-batch evaluations on character-level Shakespeare.
For each property, all settings other than the perturbation were held fixed and we repeated the protocol at $L\in\{2,4,6,8\}$.
\begin{figure*}[t]
\centering
\includegraphics[width=0.48\textwidth]{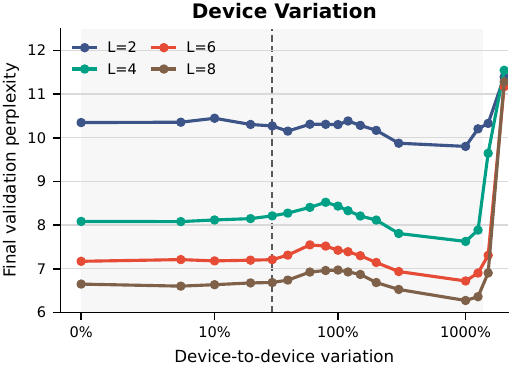}
\includegraphics[width=0.48\textwidth]{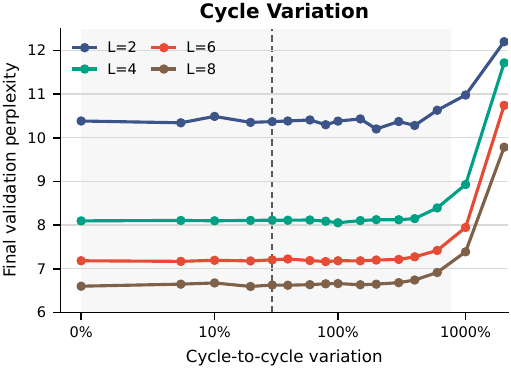}
\\[4pt]
\includegraphics[width=0.48\textwidth]{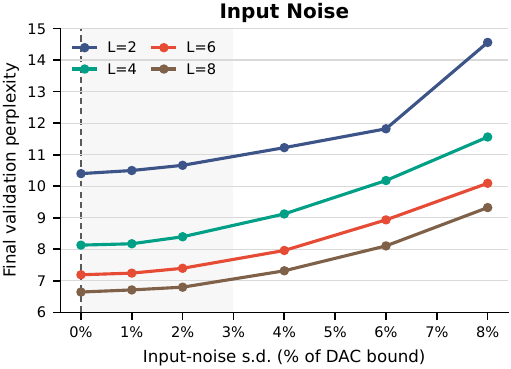}
\includegraphics[width=0.48\textwidth]{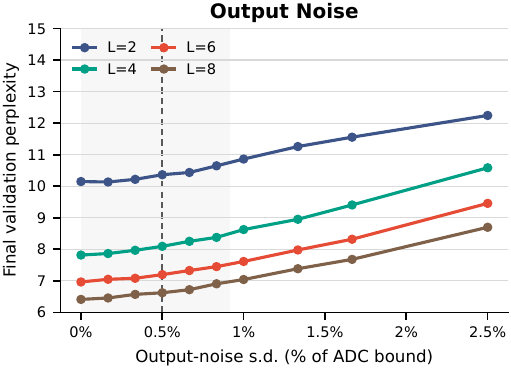}
\caption{Final Shakespeare validation perplexity after 5,000 updates for S under device variation, cycle variation, input noise and output noise, with one curve per transformer depth and all other settings held fixed.
Device variation jointly changes the device variation of the conductance limits and minimum update step; cycle variation changes the cycle update-step standard deviation.
Input- and output-noise values are matched between the forward and backward directions and are labelled as standard deviations relative to the simulator DAC and ADC bounds, respectively.
The input-noise panel ends at 8\% (the first evaluated level above 7.5\%) so that the two noise panels share a common ordinate.
Dashed vertical lines identify the corresponding AIHWKit default: 30\% for device and cycle variation, zero input noise, and 0.5\% of the ADC bound for output noise.
The gray background marks the common prefix in which all four depths remain within 10\% of their respective zero-perturbation final perplexities.
Each condition uses one fixed data and device-construction seed, so the curves show observed sensitivity in this simulator configuration rather than uncertainty estimates.}
\label{fig:appendix-s-hardware-imperfections}
\end{figure*}

Figure~\ref{fig:appendix-s-hardware-imperfections} shows the resulting sensitivity curves.
We jointly varied $w_{\max}$, $w_{\min}$, and $\mathrm{dw}_{\min}$ to model device variation, and independently varied the cycle update-step standard deviation; both sweeps extend to 2,000\% relative standard deviation.
The gray background marks the common range in which every depth remains within 10\% of its own zero-perturbation validation perplexity.
Across the four depths, performance remains near the zero-variation baseline over a substantial device-variation range but deteriorates sharply at the largest variation, showing that digital W-grad does not eliminate the finite margin for device and cycle variation.

For peripheral noise, the input-noise standard deviation spans zero to 20\% of the simulator DAC bound of 1, and the output-noise standard deviation spans zero to 2.5\% of the ADC bound of 12.
The plotted input-noise range ends at 8\%, the first evaluated point above 7.5\%, to share an ordinate with the output-noise panel; these normalized Gaussian-noise coordinates are simulator settings rather than calibrated voltage- or current-noise measurements.
Input noise degrades perplexity more rapidly over the plotted range, whereas output noise causes a smaller progressive degradation.
These single-seed measurements isolate the selected SoftBounds variations and do not establish robustness for measured devices, correlated variations, alternative device models, or other training workloads; moreover, the depths differ in parameter count, so their curve separations are not independent depth-robustness estimates.

\subsection{ADC and DAC quantization sensitivity}
\label{app:simulation-details-quantization}

We independently swept forward and backward converter resolution over 4--9
bits in a 2-layer, 1-head, width-48 Shakespeare GPT trained with MpAdamW for
5,000 iterations (seed 1337). The 36 ADC conditions set only the forward and
backward output resolutions and retained the default approximately 7-bit DAC;
the 36 DAC conditions set only input resolutions and retained the default
approximately 9-bit ADC. The two groups therefore have different fixed
counterparts and should be compared by their within-group trends.

Figure~\ref{fig:appendix-quantization} shows that final validation loss across the ADC grid ranged from 2.326 (8-bit forward,
8-bit backward) to 3.527 (4-bit forward, 4-bit backward), with the most severe
degradation at 4-bit forward ADC. Across the DAC grid, values ranged from
2.273 (8-bit forward, 9-bit backward) to 2.529 (5-bit forward, 4-bit
backward), including all 4-bit DAC conditions. These one-seed results show that the S training algorithm is more sensitive to output (ADC) quantization than to input (DAC) quantization in this configuration, especially in the forward pass.
\begin{figure*}[t]
\centering
\includegraphics[width=0.8\textwidth]{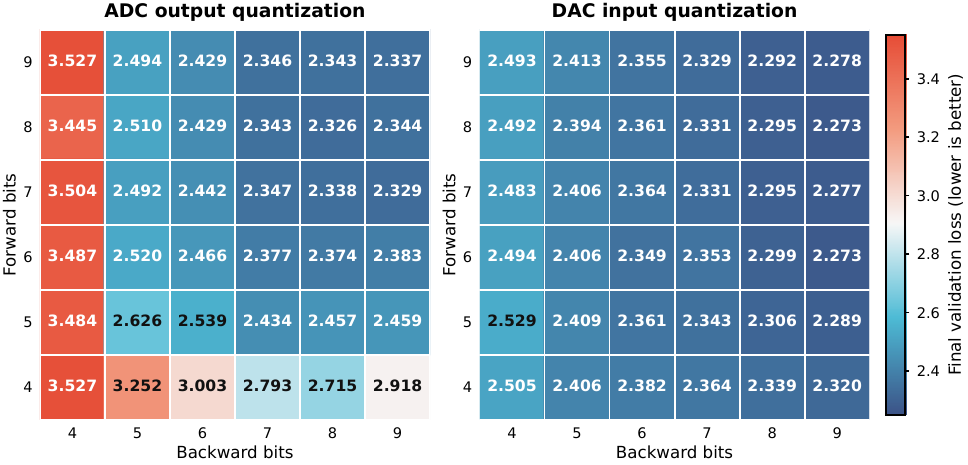}
\caption{Final Shakespeare validation loss after 5,000 iterations for method S under converter-resolution sweeps.
Rows encode forward bits and columns backward bits; lower loss is better.
The ADC panel sweeps the forward and backward output resolution while holding DAC at its default resolution, and the DAC panel analogously sweeps input resolution while holding ADC at its default resolution.
}
\label{fig:appendix-quantization}
\end{figure*}

We also swept matched DAC/ADC resolution for the SFB periphery at depths
2, 4, 6, and 8 under the same protocol. Table~\ref{tab:appendix-sfb-matched-bits}
reports final validation loss for the resulting 28 conditions.

\begin{table}[H]
\centering

\begin{tabular}{lrrrr}
\toprule
Bits & L2 & L4 & L6 & L8 \\
\midrule
3 & 2.9552 & 2.9166 & 2.9166 & 2.8549 \\
4 & 2.5548 & 2.5036 & 2.4457 & 2.3802 \\
5 & 2.4658 & 2.4212 & 2.2929 & 2.2218 \\
6 & 2.4379 & 2.3800 & 2.2233 & 2.1778 \\
7 & 2.4362 & 2.3370 & 2.2036 & 2.1418 \\
8 & 2.4212 & 2.3420 & 2.1939 & 2.1389 \\
9 & 2.4139 & 2.3103 & 2.1654 & 2.1250 \\
\bottomrule
\end{tabular}
\caption{Final Shakespeare validation loss after 5,000 iterations for SFB with matched DAC and ADC resolution.}
\label{tab:appendix-sfb-matched-bits}
\end{table}

Expected \#MVM is estimated from a deterministic 32-validation-batch replay and counts both base array invocations and bound-management retries per 256-token inference; it is a read-count proxy rather than a simulator-internal latency or energy measurement.
The largest-width read-count comparisons use the extended training endpoints and do not establish a general monotonic dependence of savings on width.

\subsection{Tied token embedding and language-head mapping}
\label{app:simulation-details-tied-head}

We compared the legacy digital tied token embedding/language head with a
fully analog mapping on character-level Shakespeare. Both conditions used a
two-layer, one-head GPT with embedding dimension 48, batch size 64, context
length 256, dropout 0.2, and the default 5,000-update schedule with a fixed
seed (1337). We used the mixed-precision \texttt{MpAdamW} optimizer and a
\texttt{Softbounds} device ($\tau=1$, 1,200 states) with imperfect I/O. Each
evaluation used 200 batches and occurred every 250 updates.

With the analog mapping, the tied matrix is represented by one analog tile:
the language head uses its forward direction and the token embedding uses its
reverse direction. The digital-head control retains the legacy digital tied
embedding/output matrix; all transformer linear layers remain analog in both
conditions.
The final-LayerNorm ablation retains normalization without learned affine parameters.

The analog tied-matrix mapping increased final validation loss by 0.0922
relative to the digital-head control. Its late-training iteration time was
12.326~ms versus 11.398~ms, an increase of 0.928~ms (8.1\%).
Disabling the final LayerNorm affine decreased validation loss by 0.0056
(2.4632 to 2.4576), while increasing late-training time by 0.301~ms (2.4\%).
These are single-seed results for a small model: they establish observed
trade-offs for this configuration, rather than multi-seed estimates of convergence.

The fully digital baseline reached a final validation loss of 2.2287 and a
late-training iteration time of 3.961~ms. It differs in all linear layers and
the optimizer path, so it should be interpreted as an all-digital reference,
not as a tied-matrix-only ablation.

\section{AIMC Component Ablation}
\label{app:profile-component-ablation}

Here, a \emph{profile} is a complete, named training configuration that fixes the initialization, weight representation, optimizer coordinate system, array-transfer rule, and forward/backward I/O path.
The ablation compares these profiles and controlled variants of them, rather than treating each row as a change to a single scalar hyperparameter.
Table~\ref{tab:profile-ablation-screen} reports the resulting validation losses.
All runs use the Shakespeare setup, digital positional embeddings, seed 1337, and 5,000 updates; each evaluation averages 200 held-out mini-batches.
This one-seed screen identifies qualitative trends but does not estimate cross-seed uncertainty.

\begin{table}[t]
\centering
\footnotesize
\renewcommand{\arraystretch}{0.96}
\setlength{\tabcolsep}{4pt}

\begin{tabular}{@{}lrrrr@{}}
\toprule
Profile & $L=2$ & $L=4$ & $L=6$ & $L=8$ \\
\midrule
\multicolumn{5}{@{}l}{\textit{Pure-digital initialization controls}} \\
Digital & 2.2276 & 2.0238 & 1.9006 & 1.8422 \\
Digital-I & 2.2355 & 2.0073 & 1.9018 & 1.8258 \\
\midrule
\multicolumn{5}{@{}l}{\textit{Original stack and factorization of S (analog)}} \\
MpSGD & 3.4444 & 3.4605 & 3.4571 & 3.4403 \\
Native-map Adam ($\omega_{\mathrm{map}}=1$) & 2.4576 & 2.4279 & 2.3902 & 2.3472 \\
Native-map Adam ($\omega_{\mathrm{map}}=0.4$) & 2.4446 & 2.3145 & 2.1863 & 2.1255 \\
O & 2.4627 & 2.4190 & 2.3236 & 2.2860 \\
O-WS & 2.4039 & 2.3245 & 2.1831 & 2.1473 \\
I & 2.4477 & 2.4111 & 2.3142 & 2.2206 \\
R & 2.4416 & 2.3922 & 2.2654 & 2.2221 \\
S & 2.3460 & 2.0902 & 1.9676 & 1.8914 \\
S-NW & 2.3358 & 2.0987 & 1.9748 & 1.8974 \\
\midrule
\multicolumn{5}{@{}l}{\textit{Complete algorithm and I/O upper control (analog)}} \\
\textbf{S-OUT} & 2.3105 & 2.0969 & 1.9657 & 1.8897 \\
S-PIO & 2.2205 & 2.0209 & 1.9202 & 1.8484 \\
\midrule
\multicolumn{5}{@{}l}{\textit{Periphery replacements applied to O (analog)}} \\
F & 3.6390 & 3.5855 & 3.9767 & 3.6345 \\
FB & 3.2442 & 3.3321 & 3.3054 & 3.2136 \\
\midrule
\multicolumn{5}{@{}l}{\textit{Cumulative periphery replacements applied after S (analog)}} \\
SF & 2.4058 & 2.2066 & 2.1036 & 2.0502 \\
SFB & 2.4105 & 2.2754 & 2.1385 & 2.0724 \\
\bottomrule
\end{tabular}
\caption{Integrated profile ablation: validation loss after 5,000 updates (lower is better).
All entries use seed 1337 and evaluations averaged over 200 held-out mini-batches.
All analog profiles retain a digital positional embedding; the definitions and interventions are given in the surrounding text.}
\label{tab:profile-ablation-screen}
\end{table}

\paragraph{Digital.}
Digital is the pure-PyTorch AdamW reference with nanoGPT's default initialization and no analog conversion.

\paragraph{Digital-I.}
Digital-I changes only the Digital initialization to the width-stable standard deviation $0.02\sqrt{768/D}$, including the residual-projection correction, and therefore tests whether that initialization is intrinsically advantageous without AIMC effects.

\paragraph{MpSGD.}
MpSGD is the mixed-precision computational-memory training method of \cite{nandakumar2020mixed}.
It executes forward and backward MVMs on the analog array, accumulates the high-precision SGD weight update in a digital buffer, and programs the array only when the accumulated update crosses the device-update threshold.
For this matched screen, MpSGD uses O's initialization and forward/backward I/O configuration together with vanilla SGD without momentum or adaptive preconditioning.
Because O instead uses MpAdamW, their difference is not an optimizer-matched test of digital weight-gradient accumulation alone~\cite{loshchilov2019decoupled,gupta2018shampoo}.

\paragraph{Native-map Adam ($\omega_{\mathrm{map}}=1$).}
This profile combines S's logical initialization and AdamW schedule with AIHWKit's native virtual mapping $W=a\widehat W$, choosing $a$ so that the initial logical AbsMax fills the physical bound.
Forward reads compute $aA_f(\widehat W x)$, backward reads apply $A_b(ae)$, and the moments, residual, and pulse quantum remain in the native mapped-tile coordinate system.

\paragraph{Native-map Adam ($\omega_{\mathrm{map}}=0.4$).}
This matched-headroom variant keeps the same native mapped forward, backward, and update transforms but maps the initial logical AbsMax to $0.4$ of the physical bound.
It therefore tests array occupancy within native mapping rather than implementing S's logical optimizer semantics.

\paragraph{O.}
O is the unmodified analog stack: nanoGPT initialization, native-conductance MpAdamW, learned columnwise output scaling, and the original AIHWKit forward/backward noise and bound management.
Turning off every proposed replacement recovers O within the scope of this ablation.

\paragraph{O-WS.}
O-WS keeps O's initialization, conductance-domain optimizer, learned output gain, and periphery, but enables AIHWKit's scalar AbsMax weight mapping with $\omega_{\mathrm{map}}=0.3$.
It isolates whether added physical headroom within the original stack is sufficient to explain S.

\paragraph{I.}
I changes only O's initialization to the width-stable rule $0.02\sqrt{768/D}$, retaining native MpAdamW, the learned columnwise output scale, and the original I/O path.

\paragraph{R.}
R replaces O's native parameterization and optimizer with the fixed-$s$ logical representation and LogicalMpAdamW, while restoring nanoGPT's default $0.02$ initialization and its $0.02/\sqrt{2L}$ residual-projection scaling.
It isolates the logical-update system without the width-stable initialization.

\paragraph{S.}
S combines I and R: it initializes conductances consistently with $W=s(\widehat W-\widehat W^\diamond)$ and accumulates the unnormalized outer-product gradient in logical-weight units.
LogicalMpAdamW keeps its moments, $\epsilon$, schedule, and pending residual in the same logical coordinate system, converts each desired increment using $\Delta\widehat W=\Delta W/s$, and sends it through the open-loop pulse-transfer operation without explicit weight readback.
The logical parameterization and optimizer are treated as one replacement because changing the optimizer alone would not preserve these logical-space semantics.

\paragraph{S-NW.}
S-NW applies the read-scale warm-up inspired by \cite{nandakumar2020mixed}: the effective logical read scale progresses from $0.7s$ to $0.85s$ and then $s$ over the first two nominal epochs, while the base write scale and LogicalMpAdamW coordinate remain fixed.

\paragraph{S-OUT.}
S-OUT is the complete algorithm evaluated in this work.
It combines S's logical parameterization and LogicalMpAdamW update path with native input noise management, forward bound management, and dimension-aware, matrix-specific ADC output rails.
Ordinary forward and backward reads use $c_{\mathrm{out}}(\tau/\omega)\sqrt{D_{\mathrm{in}}}$ and $c_{2,b}(\tau/\omega)\sqrt{D_{\mathrm{out}}}$, respectively, while the L2-normalized tied-head backward rail remains $O(\tau/\omega)$.

\paragraph{S-PIO.}
S-PIO keeps S's initialization, representation, and update path but makes both forward and backward I/O perfect.
It is an upper control for the loss attributable to residual DAC, ADC, and read-path nonidealities.

\paragraph{F.}
F keeps O's conductance-domain initialization, optimizer, and learned output scale but replaces ordinary forward management by explicit RMS normalization and static rails.
For a raw activation $x$, it computes $r_x=\sqrt{D^{-1}\lVert x\rVert_2^2}$, quantizes $x/r_x$ against the fixed input rail $c_{\mathrm{in}}$, and rescales the analog output by $r_x$ through O's learned output-scale path.
Native noise and bound management are disabled, and the static output rails are calibrated from O's physical initialization scale.
Unlike per-vector AbsMax management, this rule clips when $\lVert x\rVert_\infty>c_{\mathrm{in}}r_x$, so converter precision alone cannot remove overload error.

\paragraph{FB.}
FB adds an explicit backward path to F.
For output error $e\in\mathbb R^M$, it forms $\rho_e=\sqrt{M^{-1}\lVert e\rVert_2^2}$, quantizes $e/\rho_e$ against a static input rail, and rescales the analog input-gradient output by $\rho_e$ in O's native conductance coordinate system.

\paragraph{SF.}\label{sec:distribution-matched-mvm}
SF applies F's explicit forward RMS normalization after S.
For $z=x/r_x$ with $r_x=\sqrt{D^{-1}\lVert x\rVert_2^2}$, it computes
\begin{equation}
y=s r_x\,A_f(z),
\label{eq:forward-rms-normalization}
\end{equation}
where $A_f$ is the imperfect DAC--tile--ADC operation, the DAC rail is $c_{\mathrm{in}}$, and the ADC rail is $c_2(\tau/\omega)\sqrt D$.
The physical read changes activation transport only: the logical outer-product gradient is still accumulated from the original signed pair $(x,e)$, and the LogicalMpAdamW update path is unchanged.

\paragraph{SFB.}
SFB adds FB's explicit backward normalization and static rails to SF.
For $\bar e=e/\rho_e$ with $\rho_e=\sqrt{M^{-1}\lVert e\rVert_2^2}$, it recovers the input gradient as $g_x=s\rho_eA_b(\bar e)$ using rails $c_b$ and $c_{2,b}(\tau/\omega)\sqrt M$.
It therefore replaces both ordinary read directions after S while leaving logical gradient accumulation and open-loop array transfer unchanged.

\paragraph{Main observations.}
The O/I/R/S factorization shows that initialization or logical parameterization alone yields only a limited improvement, whereas their combination in S improves consistently with depth.
Digital and Digital-I remain close, indicating that the interaction is specific to analog conversion rather than a general advantage of the width-stable initialization.
Native weight mapping and added headroom help the original stack, but they do not reproduce S, which points to the importance of keeping gradients, optimizer state, and transfer residuals in a common logical coordinate system.

\paragraph{Read-path and periphery observations.}
S-OUT, the complete algorithm, improves on S, and S-PIO improves it further, showing that read-path nonidealities remain after the logical-update problem is stabilized.
By contrast, the explicit F/B periphery does not improve its corresponding O- or S-based profiles at the default precision, so its effect should be interpreted as conditional on the conductance scale, parameterization, and optimizer rather than as a universal main effect.
The bound-management screen also leaves S essentially unchanged when iterative retries are disabled but native input noise management is retained, and a deterministic replay observes no forward-ADC retry.
This replay omits unexposed stochastic tile noise, so it establishes operating-point headroom rather than zero clipping probability; bound management should not be removed at larger scale without a comparable occupancy check.

\section{Optimizer Dependence of the Logical Update}
\label{app:optimizer-s-sfb}

The S replacement couples a logical parameterization to a digital optimizer and quantized open-loop transfers to the analog array, so it is important to test whether its improvement is specific to AdamW.
We therefore repeat the profile-ablation protocol for AdamW, the matrix-orthogonalizing Muon, Moonlight-scaled Muon, sign-momentum Lion, matrix-preconditioned Shampoo, factored-second-moment Adafactor, and RMSprop~\cite{loshchilov2019decoupled,gupta2018shampoo,liu2025moonlight,tieleman2012rmsprop,gupta2018shampoo,shazeer2018adafactor,jordan2024muon}.
For each optimizer, we additionally run a matched pure-digital control: it uses the S/SFB width-stable initialization and every other training setting unchanged, but performs no AIMC conversion.
All methods retain the same width-stable initialization, logical scale, SoftBounds device, converter settings, digital positional embedding, seed, cosine schedule, and 5,000-update Shakespeare protocol as the initialization-map comparison in Figure~\ref{fig:initial-native-versus-s-scaling}.
The comparison changes only the digital update rule that accumulates in logical-weight units before the identical array-transfer operation.
Muon uses its standard matrix update with an Adam-style auxiliary update for non-matrix parameters, while Shampoo uses two-sided matrix preconditioners and Adafactor uses factored second-moment state.
For Moonlight, we retain its matrix-shape-calibrated update scale and its AdamW-style auxiliary update, but set decoupled weight decay to zero in all three profiles: applying nonzero decay directly to an open-loop AIMC weight would require a readback-and-rewrite operation outside this matched protocol.

\paragraph{Update formulas.}
Let $G_k$ denote the stochastic gradient at iteration $k$ in logical-weight units, and let $\Delta W_k$ be the update returned by an optimizer.
With the zero-decay setting used in Table~\ref{tab:optimizer-s-sfb}, every logical weight or residual buffer is updated as $W_{k+1}=W_k+\Delta W_k$.
For all elementwise operations below, division, square root, and powers are elementwise; $\operatorname{RMS}(X)=\sqrt{\operatorname{mean}(X\odot X)}$.
For a matrix gradient, let $D_{\mathrm{out}}$ and $D_{\mathrm{in}}$ be its row and column dimensions after flattening all trailing dimensions.
For the matrix-sign (polar) map used by Muon and Moonlight, we define
\begin{equation}
\operatorname{msign}(X)=UV^{\mathsf T},\qquad X=U\Sigma V^{\mathsf T},
\label{eq:matrix-sign}
\end{equation}
where $X=U\Sigma V^{\mathsf T}$ is an SVD of $X$.
Thus, unlike the elementwise $\operatorname{sgn}$ used by Lion, $\operatorname{msign}$ replaces the singular values of a matrix direction by one while retaining its left and right singular vectors.
In practice, we approximate $\operatorname{msign}$ with five bfloat16 Newton--Schulz iterations rather than forming an SVD~\cite{liu2025moonlight}.
The implemented rules are
\begin{align*}
\text{AdamW:}\quad &m_k=\beta_1m_{k-1}+(1-\beta_1)G_k,\qquad v_k=\beta_2v_{k-1}+(1-\beta_2)G_k\odot G_k,\\
&\Delta W_k=-\eta_k\frac{m_k/(1-\beta_1^k)}{\sqrt{v_k/(1-\beta_2^k)}+\epsilon}.\\[0.35em]
\text{Lion:}\quad &\Delta W_k=-\eta_k\operatorname{sgn}\!\left(\beta_1m_{k-1}+(1-\beta_1)G_k\right),\qquad m_k=\beta_2m_{k-1}+(1-\beta_2)G_k.\\[0.35em]
\text{RMSprop:}\quad &v_k=\rho v_{k-1}+(1-\rho)G_k\odot G_k,\qquad \Delta W_k=-\eta_k\frac{G_k}{\sqrt{v_k}+\epsilon}.\\[0.35em]
\text{Adafactor:}\quad &a_k=\beta_2a_{k-1}+(1-\beta_2)\operatorname{mean}_j(G_k\odot G_k),\\
&b_k=\beta_2b_{k-1}+(1-\beta_2)\operatorname{mean}_i(G_k\odot G_k),\\
&V_k=\left(a_k/\operatorname{mean}(a_k)\right)b_k^{\mathsf T},\qquad D_k=G_k/\max\!\left(\sqrt{V_k},\epsilon\right),\\
&\Delta W_k=-\eta_k\frac{D_k}{\max\!\left(1,\operatorname{RMS}(D_k)\right)}.\\[0.35em]
\text{Shampoo:}\quad &L_k=\beta L_{k-1}+(1-\beta)G_kG_k^{\mathsf T},\\
&R_k=\beta R_{k-1}+(1-\beta)G_k^{\mathsf T}G_k,\qquad \Delta W_k=-\eta_kL_{k,\epsilon}^{-1/4}G_kR_{k,\epsilon}^{-1/4}.\\[0.35em]
\text{Muon:}\quad &m_k=\beta m_{k-1}+(1-\beta)G_k,\qquad p_k=(1-\beta)G_k+\beta m_k,\\
&\Delta W_k=-\eta_k\sqrt{\max\!\left(1,D_{\mathrm{out}}/D_{\mathrm{in}}\right)}\,\operatorname{msign}(p_k),\\[0.35em]
\text{Moonlight:}\quad &\Delta W_k=-0.2\eta_k\sqrt{\max\!\left(D_{\mathrm{out}},D_{\mathrm{in}}\right)}\,\sqrt{\max\!\left(1,D_{\mathrm{out}}/D_{\mathrm{in}}\right)}\,\operatorname{msign}(p_k).
\end{align*}
The Nesterov-style momentum is explicit in $p_k$: after forming the momentum buffer $m_k$, the matrix direction contains a further current-gradient term.
Equivalently, $p_k=\beta^2m_{k-1}+(1-\beta^2)G_k$; without Nesterov acceleration, the corresponding direction would be $m_k$ instead.
For Shampoo, $X_{k,\epsilon}^{-1/4}$ denotes the inverse quarter root obtained from the eigendecomposition of $X_k$ after clipping each eigenvalue below by $\epsilon$.
Muon and Moonlight apply their matrix rules only to matrix parameters, and use the AdamW rule above for vectors, embeddings, and scalar parameters.
For non-matrix parameters, Shampoo uses the RMSprop direction and Adafactor uses its unfactored second-moment variant.
For an analog context, $W_k$ is the pending logical residual rather than the live conductance matrix, and the resulting increment $\Delta W_k$ is subsequently sent through the same quantized open-loop array-transfer operation as in S or SFB.
AdamW is the adaptive-moment reference, Muon conditions hidden-matrix update geometry, Moonlight adds shape-calibrated Muon update scaling, Lion replaces magnitude adaptation by a signed momentum direction, Shampoo models matrix correlations, Adafactor reduces second-moment storage, and RMSprop supplies a first-moment-free coordinate-adaptive control.
This is a fixed-hyperparameter one-seed screen rather than a per-optimizer tuning study, so it assesses optimizer compatibility with matched digital, S, and SFB data paths but does not establish a universal ranking.
Table~\ref{tab:optimizer-s-sfb} reports the final losses.

\input{tables/optimizer-s-sfb-results.tex}

Muon gives the lowest loss at every depth in the matched Digital control, whereas Lion is best at every depth under S.
Under SFB, Lion is best at $L=2$, but Moonlight is best at $L=4$, $6$, and $8$; its shape-calibrated Muon scaling is therefore particularly compatible with the explicit SFB periphery in this fixed setting.
Shampoo, Adafactor, and RMSprop are consistently worse in this unchanged hyperparameter setting.
Replacing S's managed reads with the explicit SFB periphery increases final loss for every matched optimizer--depth pair, but optimizer choice materially changes the size of that gap.
Thus, the benefit of the logical update path is not restricted to AdamW, while the S-versus-SFB ordering persists across the diverse digital update rules tested here.

\section{Finite-Precision and Asymptotic Bounds for Norm--Rail Alignment}
\label{app:max-norm-alignment}

\subsection{Periodic averaging for norm-normalized directions}
The lower bound must accommodate norm orders and rail multipliers that vary with precision.
We first establish the averaging property needed for such parameter sequences.
For $D\geq2$, let
\[
F_{j,\sigma}=\{u\in[-1,1]^D:u_j=\sigma,\ |u_i|<1\text{ for }i\ne j\},
\qquad j=1,\ldots,D,\quad \sigma\in\{-1,1\}.
\]
Let $\nu$ be a probability measure supported on the union of these open faces, with a density relative to $(D-1)$-dimensional Lebesgue measure on each face.
Write $J(u)=j$ on $F_{j,\sigma}$ and $r_p(u)=\lVert u\rVert_p$, so that $1\leq r_p(u)\leq D$ and $r_\infty(u)=1$.
Convergence in $p\in[1,\infty]$ is understood through the compact parameter $1/p\in[0,1]$, with $1/\infty=0$.

\begin{lemma}[Periodic averaging with varying norm order]
\label{lem:alignment-periodic-averaging}
Define $\psi(v)=\operatorname{dist}(v,\mathbb Z)^2$, a continuous $1$-periodic function with $\int_0^1\psi(v)\,dv=1/12$.
For any sequences $p_n\to p_0\in[1,\infty]$, $t_n\to\infty$, and $b_n\in\mathbb R$,
\begin{equation}
\begin{aligned}
&\int r_{p_n}(u)^2\sum_{i\ne J(u)}
\psi\!\left(t_n\frac{u_i}{r_{p_n}(u)}+b_n\right)\nu(du)
\\
&\hspace{2cm}\longrightarrow
\frac{D-1}{12}\int r_{p_0}(u)^2\,\nu(du).
\end{aligned}
\label{eq:alignment-periodic-averaging}
\end{equation}
\end{lemma}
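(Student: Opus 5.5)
The plan is to reduce \eqref{eq:alignment-periodic-averaging} to a Riemann--Lebesgue (equidistribution) statement for a \emph{fixed} one-dimensional density, absorbing the dependence on $n$ into an $L^1$-convergence argument. First decompose $\nu=\sum_{j,\sigma}\nu|_{F_{j,\sigma}}$ and split the sum over $i\ne J(u)$. By linearity it suffices to fix one face $F=F_{j,\sigma}$, one index $i\ne j$, and a density $h\in L^1(F)$. The target is then
\[
\int_F r_{p_n}(u)^2\,\psi\!\left(t_n\frac{u_i}{r_{p_n}(u)}+b_n\right)h(u)\,du\ \longrightarrow\ \frac1{12}\int_F r_{p_0}(u)^2h(u)\,du .
\]
Since $0\le\psi\le 1/4$ and $r_p\le D$, both sides are controlled in $h$ by $D^2\lVert h\rVert_{L^1}$, uniformly in $n$. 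So by density I may assume $h$ is continuous with compact support in the open face, i.e.\ supported where $|u_k|\le 1-\delta$ for all $k\ne j$.

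Next, fix the coordinates $u_{-i}$ and view $\phi_p(u_i)=u_i/r_p(u)$ as a function of $u_i\in(-1,1)$. For finite $p$ a direct computation gives
\[
\partial_{u_i}\phi_p=\frac{r_p^p-|u_i|^p}{r_p^{p+1}} .
\]
The numerator is at least $|u_j|^p=1$ and $r_p^{p+1}\le D^{(p+1)/p}\le D^2$, so $\partial_{u_i}\phi_p\ge D^{-2}$. For $p=\infty$ we have $\phi_\infty=u_i$ with derivative $1$. Hence each $\phi_p$ is a $C^1$ diffeomorphism onto its image, with derivative bounded below uniformly in $p\in[1,\infty]$. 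Substituting $v=\phi_{p_n}(u_i)$ fiberwise turns the integral into $\int\psi(t_nv+b_n)f_n(v)\,dv$, where $f_n$ is the pushforward density of $r_{p_n}^2h\,du$ under $u\mapsto\phi_{p_n}(u)$.

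I then show $f_n\to f_0$ in $L^1(\mathbb R)$, where $f_0$ is the corresponding density for $p_0$. On the compact support of $h$, $r_p$ and $\partial_{u_i}\phi_p$ are jointly continuous in $(u,1/p)$; the only delicate point is $p_0=\infty$, where the needed continuity holds because $|u_k|\le1-\delta$ for $k\ne j$. Therefore the fiberwise inverse maps and Jacobians converge uniformly, giving $f_n\to f_0$ pointwise. The $f_n$ are uniformly bounded (by the Jacobian bound $D^2$) with supports in a fixed compact set, so dominated convergence gives $L^1$ convergence. Since $\psi$ is bounded, $\int\psi(t_nv+b_n)(f_n-f_0)\,dv\to0$.

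It remains to show $\int\psi(t_nv+b_n)f_0(v)\,dv\to\frac1{12}\int f_0$ for a fixed $f_0\in L^1$. Expand $\psi$ in its uniformly convergent Fourier series (it is Lipschitz and $1$-periodic, with mean $1/12$). Each nonzero mode contributes $e^{2\pi imb_n}\widehat{f_0}(-mt_n)$, which tends to $0$ by Riemann--Lebesgue since $t_n\to\infty$; the phase $b_n$ is harmless because it has unit modulus. Finally, $\int f_0=\int_F r_{p_0}^2h$. Summing over $i\ne j$ yields the factor $D-1$, and summing over faces gives the claim.

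I expect the main obstacle to be the uniform-in-$n$ control: the Jacobian lower bound uniform in $p$, together with the joint continuity near $p_0=\infty$ where $|u_i|^p$ degenerates. This is why I first restrict $h$ to compact subsets of the open faces, since that restriction is what makes the convergence $f_n\to f_0$ clean.
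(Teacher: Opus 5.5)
Your proposal is correct and follows essentially the paper's proof: you work face by face and coordinate by coordinate, make the fiberwise change of variables $u_i\mapsto u_i/r_p(u)$ with a derivative lower bound uniform in $p$ (your $D^{-2}$ versus the paper's $1/(2D)$), get $L^1$ convergence of the pushforward densities after approximating the face density by continuous functions compactly supported in the open face, and finish with equidistribution of $\psi(tv+b)$ against a fixed $L^1$ density. The only real variation is that last step: you use the uniformly convergent Fourier series of $\psi$ with Riemann--Lebesgue, truncating the series before applying Riemann--Lebesgue mode by mode, whereas the paper approximates by interval indicators split into complete periods plus residual pieces; both give the limit uniformly in the phase $b_n$.
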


\begin{proof}
Fix a face $F_{j,\sigma}$ and a coordinate $i\ne j$.
With the other free coordinates held fixed, put $T_p(u_i)=u_i/r_p(u)$.
For finite $p$, differentiation gives
\begin{equation}
\frac{\partial T_p}{\partial u_i}
=\frac1{r_p(u)}\left(1-\frac{|u_i|^p}{r_p(u)^p}\right),
\qquad
\frac1{2D}\leq\frac{\partial T_p}{\partial u_i}\leq1.
\label{eq:alignment-direction-derivative}
\end{equation}
Indeed, $|u_j|=1$ and $|u_i|<1$ imply $|u_i|^p/r_p(u)^p\leq1/2$.
For $p=\infty$, $T_\infty(u_i)=u_i$ and its derivative is one.
These maps are strictly increasing with uniformly bounded inverse derivatives.
As $p_n\to p_0$, the maps and their derivatives converge on compact subsets of the open face, including when $p_0=\infty$.

Let $g_p$ be the one-dimensional density obtained by pushing the finite measure $r_p(u)^2\nu(du)$ restricted to this face through $u\mapsto T_p(u_i)$.
The change of variables justified by \eqref{eq:alignment-direction-derivative} gives
\begin{equation}
\lVert g_{p_n}-g_{p_0}\rVert_{L^1(\mathbb R)}\longrightarrow0.
\label{eq:alignment-pushforward-continuity}
\end{equation}
To see this without any smoothness assumption on the face density, first approximate it in $L^1$ by continuous functions compactly supported in the open face.
For each approximant, convergence of the maps and inverse Jacobians gives $L^1$ convergence after the change of variables, also after integrating the other free coordinates.
The error introduced by the approximation is at most $D^2$ times its $L^1$ error, uniformly in $p$, because $r_p^2\leq D^2$ and pushforward does not increase the total variation of a finite signed measure.
This proves \eqref{eq:alignment-pushforward-continuity} for the original density.

For every $g\in L^1(\mathbb R)$,
\[
\int g(v)\psi(tv+b)\,dv\longrightarrow\frac1{12}\int g(v)\,dv
\qquad(t\to\infty),
\]
uniformly in the phase $b$.
For an interval indicator this follows by splitting into complete periods and two residual intervals; finite linear combinations of interval indicators and $L^1$ approximation give the general case, since $0\leq\psi\leq1/4$.
Apply this fact to $g_{p_0}$ and use \eqref{eq:alignment-pushforward-continuity} to replace $g_{p_0}$ by $g_{p_n}$.
Summing over the $D-1$ nonmaximal coordinates on each face and then over the faces gives \eqref{eq:alignment-periodic-averaging}.
\end{proof}

\subsection{Proof of Theorem~\ref{thm:max-norm-alignment}}
\begin{proof}
\textbf{All-resolution upper bound and the one-dimensional case.}
The uniform quantizer definition \eqref{eq:uniform-converter-quantizer}, with $K$ equal intervals over $[-s,s]$, gives
\begin{equation}
|Q_{K,s}(z)-z|\leq\frac{s}{K}
\quad\text{whenever }|z|\leq s.
\label{eq:uniform-quantizer-inrange-error}
\end{equation}
For any distribution $\mathcal V$ satisfying the theorem's finite-second-moment condition, put $M=\lVert x\rVert_\infty$ and choose the smallest index $J$ attaining $|x_J|=M$.
At the max rail, $x_J$ is a reconstruction endpoint and has zero error, including under the zero-input convention.
Consequently,
\begin{equation}
\begin{aligned}
R_K(\infty,1;\mathcal V)
&=\frac{1}{D}\mathbb E_{\mathcal V}\!\left[\sum_{i\ne J}
\bigl(Q_{K,M}(x_i)-x_i\bigr)^2\right]\\
&\leq\frac{(D-1)\mathbb E_{\mathcal V}[M^2]}
{DK^2}.
\end{aligned}
\label{eq:exact-max-rail-identity-bound}
\end{equation}
Taking the infimum over $(p,A)$ also gives $r_K(\mathcal V)\leq R_K(\infty,1;\mathcal V)$.
For $D=1$, this gives $r_K(\mathcal V)=R_K(\infty,1;\mathcal V)=0$ for every $K$ and every distribution satisfying the theorem's moment condition.
This proves the one-dimensional assertion; the theorem's asymptotic ratio is asserted only for $D\geq2$.
For the remainder of the proof, fix $D\geq2$ and a distribution satisfying the theorem's moment and absolute-continuity conditions.

\textbf{Reduction to an energy-weighted direction distribution.}
The joint density implies that $M>0$ and that its maximizing coordinate is unique almost surely, while the theorem's moment condition gives $\mathbb E_{\mathcal V}[M^2]<\infty$.
Thus $0<\mathbb E_{\mathcal V}[M^2]<\infty$.
Set $U=x/M$ and define the probability measure
\begin{equation}
\nu(B)=\frac{\mathbb E_{\mathcal V}[M^2\mathbf1_{\{U\in B\}}]}
{\mathbb E_{\mathcal V}[M^2]}.
\label{eq:alignment-energy-weighted-direction}
\end{equation}
This measure is supported on the open faces used in Lemma~\ref{lem:alignment-periodic-averaging} and has a density on each face.
In fact, if $f$ is the joint density of $x$ and $u_{j,\sigma}(v)$ inserts the fixed coordinate $\sigma$ at index $j$ into $v\in(-1,1)^{D-1}$, the face density is
\begin{equation}
\frac1{\mathbb E_{\mathcal V}[M^2]}
\int_0^\infty m^{D+1}f\!\left(m u_{j,\sigma}(v)\right)\,dm.
\label{eq:alignment-face-density}
\end{equation}
Here the radial change of variables contributes $m^{D-1}$ and the energy weight contributes $m^2$.
Tonelli's theorem and $\mathbb E_{\mathcal V}[M^2]<\infty$ ensure that these face densities are integrable.
No further tail or density regularity is needed.

The identity $Q_{K,ms}(mz)=mQ_{K,s}(z)$ for $m>0$ gives
\begin{equation}
\begin{aligned}
R_K(p,A;\mathcal V)&=c_{\mathcal V}\,\mathcal E_K(p,A),
\qquad
c_{\mathcal V}:=\frac{1}{D}\mathbb E_{\mathcal V}[M^2],\\
\mathcal E_K(p,A)&:=\int
\lVert Q_{K,A r_p(u)}(u)-u\rVert_2^2\,\nu(du).
\end{aligned}
\label{eq:alignment-directional-distortion}
\end{equation}
In particular, $\mathcal E_K(\infty,1)\leq(D-1)/K^2$.

\textbf{Leading constant for the max rail.}
At $(p,A)=(\infty,1)$, there is no clipping and the maximal coordinate is reconstructed exactly.
For the remaining coordinates, the squared distance to the reconstruction grid equals $4K^{-2}\psi(Ku_i/2+K/2)$.
Lemma~\ref{lem:alignment-periodic-averaging}, with $p_n=\infty$, $t_n=K/2$, and $b_n=K/2$, yields
\begin{equation}
K^2\mathcal E_K(\infty,1)
=4\int\sum_{i\ne J(u)}\psi\!\left(\frac{Ku_i}{2}+\frac K2\right)\nu(du)
\longrightarrow\frac{D-1}{3}.
\label{eq:alignment-max-rail-leading-constant}
\end{equation}

\textbf{Lower bound for precision-dependent norm--rail choices.}
Choose approximate minimizers satisfying
\[
\mathcal E_K(p_K,A_K)
\leq\inf_{p,A}\mathcal E_K(p,A)+K^{-3}
\leq\frac{D-1}{K^2}+K^{-3}.
\]
Thus their distortion tends to zero and their distortion multiplied by $K^2$ is bounded.
The maximal coordinate gives the clipping lower bound
\begin{equation}
\mathcal E_K(p_K,A_K)
\geq\int(1-A_Kr_{p_K}(u))_+^2\,\nu(du).
\label{eq:alignment-clipping-lower-bound}
\end{equation}
Since $r_{p_K}\leq D$, this excludes any subsequence with $A_K\to0$.

Write $h_K=A_K/K$.
The finite reconstruction alphabet is a subset of the infinite lattice
$2h_Kr_{p_K}(u)(\mathbb Z-K/2)$.
Distance to that lattice therefore gives, with or without clipping,
\begin{equation}
\mathcal E_K(p_K,A_K)\geq4h_K^2\Gamma_K,
\qquad
\Gamma_K:=\int r_{p_K}(u)^2\sum_{i\ne J(u)}
\psi\!\left(\frac{u_i}{2h_Kr_{p_K}(u)}+\frac K2\right)\nu(du).
\label{eq:alignment-infinite-lattice-lower-bound}
\end{equation}

We first show that $h_K\to0$.
Otherwise, compactness in $1/p_K$ gives a subsequence with $p_K\to p_0$ and $h_K\to h_0\in(0,\infty]$; we may also fix the parity of $K$ on this subsequence.
If $h_0<\infty$, the right-hand side of \eqref{eq:alignment-infinite-lattice-lower-bound} converges by dominated convergence to the squared distance of the nonmaximal coordinates to a lattice of positive spacing.
This integral is strictly positive: on each face, \eqref{eq:alignment-direction-derivative} gives an absolutely continuous distribution for every $u_i/r_{p_0}(u)$, $i\ne J(u)$, so its probability of lying on that discrete lattice is zero.
This contradicts $\mathcal E_K(p_K,A_K)\to0$.
If $h_0=\infty$ and $K$ is even, then zero is eventually the nearest reconstruction level to every coordinate, and $\mathcal E_K(p_K,A_K)=\int\lVert u\rVert_2^2\,\nu(du)\geq1$.
If $h_0=\infty$ and $K$ is odd, every reconstruction level has absolute value at least $h_Kr_{p_K}(u)\geq h_K$, so the distortion diverges.
Both cases are again impossible, proving $h_K\to0$.

Along any subsequence with $p_K\to p_0$, Lemma~\ref{lem:alignment-periodic-averaging} now applies with $t_K=1/(2h_K)\to\infty$ and gives
\begin{equation}
\Gamma_K\longrightarrow\frac{D-1}{12}\int r_{p_0}(u)^2\,\nu(du)>0.
\label{eq:alignment-varying-rail-average}
\end{equation}
Since \eqref{eq:alignment-infinite-lattice-lower-bound} implies $K^2\mathcal E_K(p_K,A_K)\geq4A_K^2\Gamma_K$, the boundedness of $K^2\mathcal E_K$ excludes any subsequence with $A_K\to\infty$.
Together with the exclusion of $A_K\to0$, this places the multipliers in a compact subset of $(0,\infty)$ for all sufficiently large $K$.

Take a subsequence realizing the limit inferior of $K^2\mathcal E_K(p_K,A_K)$ and then a further subsequence with $p_K\to p_0$ and $A_K\to A_0\in(0,\infty)$.
The norms converge pointwise (indeed uniformly on the cube boundary), and \eqref{eq:alignment-clipping-lower-bound} forces
\begin{equation}
A_0r_{p_0}(u)\geq1\qquad\text{for }\nu\text{-almost every }u.
\label{eq:alignment-limit-rail-covers-max}
\end{equation}
Combining \eqref{eq:alignment-infinite-lattice-lower-bound}, \eqref{eq:alignment-varying-rail-average}, and \eqref{eq:alignment-limit-rail-covers-max} gives
\begin{equation}
\begin{aligned}
\liminf_{K\to\infty}K^2\inf_{p,A}\mathcal E_K(p,A)
&\geq\frac{D-1}{3}\int A_0^2r_{p_0}(u)^2\,\nu(du)\\
&\geq\frac{D-1}{3}.
\end{aligned}
\label{eq:alignment-optimized-leading-lower-bound}
\end{equation}
The $K^{-3}$ approximation error vanishes after multiplication by $K^2$.
The matching upper limit follows by choosing $(p,A)=(\infty,1)$ in \eqref{eq:alignment-max-rail-leading-constant}.
Multiplying by $c_{\mathcal V}$ in \eqref{eq:alignment-directional-distortion} gives the common leading constant:
\begin{equation}
\begin{aligned}
\lim_{K\to\infty}K^2r_K(\mathcal V)
&=\lim_{K\to\infty}K^2R_K(\infty,1;\mathcal V)
=\frac{D-1}{3D}\mathbb E_{\mathcal V}[\lVert x\rVert_\infty^2].
\end{aligned}
\label{eq:fixed-distribution-alignment-asymptotics}
\end{equation}
For $D\geq2$ this constant is positive, so dividing the two scaled distortions proves the theorem's asymptotic ratio and completes its proof.
The one-dimensional case established above also satisfies \eqref{eq:fixed-distribution-alignment-asymptotics}, with both limits equal to zero.

For use in the quantitative threshold analysis below, we also record the resulting lower bound.
For every $\varepsilon\in(0,1/3)$ there is an integer $K_0=K_0(D,\mathcal V,\varepsilon)$ such that, for all $K\geq K_0$,
\begin{equation}
r_K(\mathcal V)\geq
\left(\frac13-\varepsilon\right)\frac{D-1}{DK^2}
\mathbb E_{\mathcal V}[\lVert x\rVert_\infty^2].
\label{eq:fixed-distribution-alignment-lower-bound}
\end{equation}
For $D\geq2$ this follows from \eqref{eq:fixed-distribution-alignment-asymptotics}; for $D=1$ it holds with equality for every $K$.
\end{proof}

\begin{remark}[Scope of the regularity and precision assumptions]
The proof only uses absolute continuity of the energy-weighted direction measure on the open faces; a joint density is a sufficient condition that guarantees this property.
Absolute continuity of each marginal alone is insufficient: if $x=(Z,\ldots,Z)$ with a nondegenerate Gaussian scalar $Z$, the max rail reconstructs every coordinate exactly, while the proposed positive lower term for $D\geq2$ would remain nonzero.
Nor does the theorem provide a threshold uniform over all absolutely continuous distributions.
For example, let $x_i=\epsilon_i(1+\delta Z_i)$, where the signs $\epsilon_i$ are independent Rademacher variables and the $Z_i$ have independent smooth densities supported on $[-1,1]$, independent of the signs.
For $0<\delta<1/2$ this distribution has a joint density, but every nonmaximal coordinate is within $2\delta$ of a max-rail endpoint, giving $R_K(\infty,1;\mathcal V)\leq4(D-1)\delta^2/D$ for every $K$.
At any fixed precision this tends to zero as $\delta\to0$, whereas $(D-1)\mathbb E[M^2]/(DK^2)\geq(D-1)(1-\delta)^2/(DK^2)$.
Thus the high-resolution threshold may depend on the distribution, even within smooth bounded product distributions.
It may also depend on dimension: for independent standard Gaussian coordinates, $(D-1)\mathbb E[M^2]/D$ grows with $D$, whereas $r_K(\mathcal V)\leq1$ follows by letting $A\downarrow0$.
An unqualified positive lower constant multiplying the maximum-moment expression therefore cannot hold for all $D,K$ in that family.
\end{remark}

\subsection{A quantitative threshold under bounded variation}
\label{app:alignment-quantitative-threshold}
An explicit sufficient threshold follows if the direction densities involved in quantization have uniformly bounded variation.

\begin{assumption}[Uniform bounded variation of weighted direction densities]
\label{ass:alignment-direction-bv}
Let $D\geq2$, and suppose the joint distribution $\mathcal V$ is absolutely continuous with respect to Lebesgue measure on $\mathbb R^D$ and satisfies $\mathbb E_{\mathcal V}[\lVert x\rVert_2^2]<\infty$.
Set $M=\lVert x\rVert_\infty$, $U=x/M$, and let $J(u)$ be the unique index with $|u_{J(u)}|=1$, which exists almost surely.
Define
\[
\nu(B)=\frac{\mathbb E_{\mathcal V}[M^2\mathbf1_{\{U\in B\}}]}{\mathbb E_{\mathcal V}[M^2]},
\qquad
r_p(u)=\lVert u\rVert_p,
\qquad
m_p=(D-1)\int r_p(u)^2\,\nu(du).
\]
For each $p\in[1,\infty]$, let $f_p$ be the probability density determined, for every bounded Borel function $\phi$, by
\begin{equation}
\int_{\mathbb R}\phi(y)f_p(y)\,dy
=\frac1{m_p}\int r_p(u)^2\sum_{i\ne J(u)}
\phi\!\left(\frac{u_i}{r_p(u)}\right)\nu(du).
\label{eq:alignment-weighted-scalar-density}
\end{equation}
There is a constant $T\geq1$ such that
\begin{equation}
\sup_{p\in[1,\infty]}\operatorname{Var}_{\mathbb R}(f_p)\leq T<\infty.
\label{eq:alignment-uniform-density-variation}
\end{equation}
Here $f_p$ is supported on $[-1,1]$ and extended by zero outside its support, and $\operatorname{Var}_{\mathbb R}(f_p)$ denotes the total variation of that function, including boundary jumps, rather than the variance of a random variable.
\end{assumption}
The densities in \eqref{eq:alignment-weighted-scalar-density} exist under joint absolute continuity, as justified by the change of variables in \eqref{eq:alignment-direction-derivative}.
The additional bound \eqref{eq:alignment-uniform-density-variation} controls their concentration and variation uniformly over the norm order; it does not follow from absolute continuity or a coordinatewise tail bound alone.

\begin{corollary}[A quantitative precision threshold]
\label{cor:alignment-quantitative-threshold}
Under Assumption~\ref{ass:alignment-direction-bv}, put $H_{D,T}=2\sqrt{D-1}+24T^2$.
For every integer $K\geq H_{D,T}$,
\begin{equation}
r_K(\mathcal V)\geq
\frac{D-1}{3DK^2}
\mathbb E_{\mathcal V}[\lVert x\rVert_\infty^2]
\left(1-\frac{H_{D,T}}{K}\right).
\label{eq:alignment-bv-finite-precision-lower}
\end{equation}
Consequently, for every $\varepsilon\in(0,1/3)$, a valid choice of threshold in \eqref{eq:fixed-distribution-alignment-lower-bound} is
\begin{equation}
K_0(D,\mathcal V,\varepsilon)
=\left\lceil\frac{2\sqrt{D-1}+24T^2}{3\varepsilon}\right\rceil
=O\!\left(\frac{\sqrt D+T^2}{\varepsilon}\right).
\label{eq:alignment-explicit-precision-threshold}
\end{equation}
\end{corollary}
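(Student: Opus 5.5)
The plan is to work entirely in the directional normalization used in the proof of Theorem~\ref{thm:max-norm-alignment}. By \eqref{eq:alignment-directional-distortion}, $r_K(\mathcal V)=c_{\mathcal V}\inf_{p,A}\mathcal E_K(p,A)$ with $c_{\mathcal V}=\mathbb E_{\mathcal V}[M^2]/D$. It therefore suffices to show that every fixed pair $(p,A)$ satisfies
\[
K^2\mathcal E_K(p,A)\geq\frac{D-1}{3}\Bigl(1-\frac{H_{D,T}}{K}\Bigr)
\qquad\text{whenever }K\geq H_{D,T}.
\]
The threshold statement \eqref{eq:alignment-explicit-precision-threshold} is then immediate. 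For $K\geq H_{D,T}/(3\varepsilon)$ we have $\frac13(1-H_{D,T}/K)\geq\frac13-\varepsilon$, which is \eqref{eq:fixed-distribution-alignment-lower-bound}.

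\textbf{Two competing lower bounds.} For a fixed pair $(p,A)$ I would combine two bounds already available from the proof of Theorem~\ref{thm:max-norm-alignment}. The first is the clipping bound \eqref{eq:alignment-clipping-lower-bound}, namely $\mathcal E_K\geq\delta:=\int(1-Ar_p)_+^2\,d\nu$. The second is the infinite-lattice bound \eqref{eq:alignment-infinite-lattice-lower-bound} with $h=A/K$. By the definition \eqref{eq:alignment-weighted-scalar-density} of $f_p$, this bound reads
\[
\mathcal E_K\geq 4h^2 m_p\int f_p(y)\,\psi\!\Bigl(\tfrac{y}{2h}+\tfrac K2\Bigr)dy .
\]
Here I would replace the qualitative averaging of Lemma~\ref{lem:alignment-periodic-averaging} by a quantitative version. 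Let $\Psi$ be the $1$-periodic antiderivative of $\psi-1/12$, which is bounded by an absolute constant $c_\Psi$. For a density of bounded variation, integration by parts (Stieltjes form) gives
\[
\Bigl|\int f_p(y)\bigl(\psi(ty+b)-\tfrac1{12}\bigr)dy\Bigr|\leq \frac{c_\Psi\operatorname{Var}(f_p)}{t}\leq 2c_\Psi T h,
\qquad t=\frac1{2h}.
\]
This yields $K^2\mathcal E_K\geq \frac{A^2m_p}{3}\,(1-24c_\Psi T h)$. To convert $A^2m_p$ into $D-1$, I would use Minkowski's inequality in $L^2(\nu)$: $\lVert Ar_p\rVert_{L^2(\nu)}\geq 1-\sqrt\delta$, hence $A^2m_p\geq(D-1)(1-\sqrt\delta)^2\geq(D-1)(1-2\sqrt\delta)$.

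\textbf{Case split.} If $\sqrt\delta\geq\sqrt{D-1}/K$, then the clipping bound alone gives $K^2\mathcal E_K\geq D-1$, which beats the target. Otherwise $A^2m_p\geq(D-1)(1-2\sqrt{D-1}/K)$; this is the source of the $2\sqrt{D-1}$ term in $H_{D,T}$. Combining this with the averaging estimate gives the target up to the error $O(Th)=O(TA/K)$. This error is at most $24T^2/K$ as long as $A\lesssim T$.

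\textbf{Main obstacle: large multipliers.} The main difficulty I expect is the regime in which $A$ is large, say $A\gtrsim T$. There the averaging error $Th$ is no longer $O(T^2/K)$, and the lattice bound degenerates. For this regime I would use a separate crude estimate. A BV density supported on $[-1,1]$ satisfies $\sup f_p\leq T/2$, so the $f_p$-mass within distance $h/2$ of the lattice $2h(\mathbb Z-K/2)$ is at most about $T$ times the relative fraction $1/2$ of each period. This gives $\int f_p\psi\geq c'$ for an absolute $c'>0$ once $h$ is not tiny relative to $1/T$. Then $K^2\mathcal E_K\geq 4c'A^2 m_p$, and $m_p\geq D-1$ because $r_p\geq1$. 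This already exceeds $(D-1)/3$ when $A^2\geq 1/(12c')$. So large multipliers are harmless, and the only delicate interval is $A\in[1,\,cT]$, where the quantitative averaging bound suffices.

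The constant bookkeeping, namely matching $c_\Psi$ and the crossover point between the two regimes so that the total error is exactly $24T^2/K$, is where I expect the real care to lie. I would first try to choose the crossover at $A\asymp T$ and check that both bounds give at least $(D-1)(1-H_{D,T}/K)/3$ there.
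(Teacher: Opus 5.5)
Most of your plan matches the paper's proof. The shared steps are: the reduction to $\mathcal E_K(p,A)$; discarding pairs with large clipping (you split on $\delta$, the paper on $\mathcal E_K(p,A)\le(D-1)/K^2$, which implies your condition); the bound $K^2\mathcal E_K\ge\frac{A^2m_p}{3}(1-2AT/K)$ from integrating the bounded periodic primitive of $\psi-\tfrac1{12}$ against $\mathrm{D}f_p$; Minkowski in $L^2(\nu)$ to convert $A^2m_p$ into $D-1$; and $(1-x)^2(1-y)\ge1-2x-y$.

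The gap is in the step you flag as the crux, the large-multiplier regime: the estimate you propose there is false. Multiplying $\sup f_p\le T/2$ by the relative fraction $1/2$ bounds the mass of the $h/2$-neighbourhood of the lattice only by roughly $T(1+3h)/2$, which is vacuous once $T\ge2$. In fact no absolute $c'$ exists. Take $K$ even and $f_p=\tfrac T2\mathbf 1_{[-1/T,1/T]}$, which is supported on $[-1,1]$ with variation exactly $T$. Every point of its support lies within $1/T$ of the lattice point $0$, so for $h\ge1/T$ one gets $\int f_p\psi\le1/(4h^2T^2)$; this is arbitrarily small for $h=\lambda/T$ with $\lambda$ large. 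Moreover, for any fixed $f_p$ and even $K$, $\int f_p\psi=\frac1{4h^2}\int y^2f_p\,dy$ once $h\ge1$, so $K^2\mathcal E_K\ge4c'A^2m_p$ cannot hold uniformly as $A\to\infty$. Your bookkeeping reflects the same confusion. An absolute $c'$ would put the crossover at an absolute constant rather than at $A\asymp T$. Also, ``$h$ not tiny relative to $1/T$'' means $A\gtrsim K/T\ge24T$, not $A\gtrsim T$.

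What is missing is the correct lattice-separation estimate, the second half of Lemma~\ref{lem:alignment-bv-scalar-bounds}. Only $N\le1/h+3$ points of $2h(\mathbb Z-K/2)$ matter on $[-1,1]$, and the $f_p$-mass within distance $s$ of them is at most $2TNs$. The layer-cake formula then gives $\int\operatorname{dist}^2(y,2h(\mathbb Z-K/2))f_p(y)\,dy\ge\frac1{12T^2(1/h+3)^2}$. This scales like $T^{-2}$, not like an absolute constant, but it still closes your split. With $h=A/K$ it yields $K^2\mathcal E_K(p,A)\ge\frac{m_p}{12T^2(1/A+3/K)^2}$. For $A\ge3T$, $K\ge24T^2$ and $T\ge1$ one has $T(1/A+3/K)\le\frac{11}{24}$, hence $K^2\mathcal E_K\ge\frac{48}{121}m_p>\frac{D-1}{3}$. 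For $A\le3T$, your averaging error $2AT/K\le6T^2/K$ fits inside $24T^2/K$. The paper uses the same estimate in the other direction. On competitive pairs it gives $K\le\sqrt{12}\,T(K/A+3)$, hence the a priori bound $A\le2\sqrt{12}\,T$, so no separate large-$A$ case is needed.
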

This is a conservative sufficient threshold, with the distribution and dimension allowed to affect $T$; no optimal dependence on these quantities is claimed.
A common bound on $T$ makes the guarantee uniform over the corresponding class at fixed $D$.
Applying this threshold to a particular tail family requires a separate estimate of its weighted direction densities.

The following scalar estimates make the averaging argument quantitative and control rail multipliers before taking an infimum.

\begin{lemma}[Averaging and lattice separation for bounded-variation densities]
\label{lem:alignment-bv-scalar-bounds}
Let $f$ be a probability density supported on $[-1,1]$, extended by zero to $\mathbb R$, with $\operatorname{Var}_{\mathbb R}(f)\leq T$.
With $\psi(v)=\operatorname{dist}(v,\mathbb Z)^2$, for every $t>0$ and $b\in\mathbb R$,
\begin{equation}
\left|\int_{\mathbb R}\psi(ty+b)f(y)\,dy-\frac1{12}\right|
\leq\frac{T}{12t}.
\label{eq:alignment-bv-averaging-remainder}
\end{equation}
Moreover, for every $h>0$ and $b\in\mathbb R$,
\begin{equation}
\int_{\mathbb R}\operatorname{dist}^2\!\left(y,h(\mathbb Z+b)\right)f(y)\,dy
\geq\frac1{12T^2(2/h+3)^2}.
\label{eq:alignment-bv-lattice-separation}
\end{equation}
\end{lemma}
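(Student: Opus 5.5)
The plan is to prove the two estimates separately. The first uses an integration by parts against a bounded periodic primitive. The second uses a sup-norm bound on $f$ together with a counting argument near the lattice.

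\emph{Averaging remainder \eqref{eq:alignment-bv-averaging-remainder}.}
Set $\Psi(v)=\int_0^v\bigl(\psi(s)-\tfrac1{12}\bigr)\,ds$. Since $\int_0^1\psi=1/12$, the function $\Psi$ is continuous and $1$-periodic.

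\begin{itemize}
\item On $[0,1/2]$ one has $\Psi(v)=v^3/3-v/12$, which vanishes at $v=0$ and $v=1/2$.
\item Its extremum is at $v=1/(2\sqrt3)$ and has absolute value $1/(36\sqrt3)$.
\item By symmetry, $\sup|\Psi|\le 1/(36\sqrt3)<1/12$.
\end{itemize}

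The integrand satisfies $\psi(ty+b)-\tfrac1{12}=\tfrac{d}{dy}\bigl[t^{-1}\Psi(ty+b)\bigr]$. Integrate by parts in the Lebesgue--Stieltjes sense against the finite signed measure $df$. The boundary terms vanish because $f$ is extended by zero and has compact support, and the jumps of $f$ at $\pm1$ are included in $df$. This gives
\[
\int_{\mathbb R}\Bigl(\psi(ty+b)-\tfrac1{12}\Bigr)f(y)\,dy=-\frac1t\int_{\mathbb R}\Psi(ty+b)\,df(y),
\]
whose modulus is at most $t^{-1}\sup|\Psi|\,\operatorname{Var}_{\mathbb R}(f)\le T/(12t)$.

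The only technical point is justifying integration by parts for a general BV density. I would handle it by taking the right-continuous representative of $f$, which does not change $\int\phi f$, and then applying the standard product rule for a $C^1$ function against a BV function.

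\emph{Lattice separation \eqref{eq:alignment-bv-lattice-separation}.}
I would not route this through the averaging bound, since that route only works for $h\lesssim 1/T$. Instead I would use a uniform argument valid for all $h>0$.

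\begin{itemize}
\item \textbf{Sup bound.} Because $f$ vanishes outside $[-1,1]$, for any $y$ the variation from $-\infty$ to $y$ and from $y$ to $+\infty$ each bound $|f(y)|$. Hence $\|f\|_\infty\le \operatorname{Var}_{\mathbb R}(f)/2\le T/2$ for the good representative.
\item \textbf{Counting.} Fix $\delta\in(0,h/2]$. The set of $y\in[-1,1]$ within distance $\delta$ of $h(\mathbb Z+b)$ is covered by at most $2/h+3$ intervals of length $2\delta$. Here $2/h+1$ counts lattice points in $[-1,1]$, and two more account for neighbours just outside the interval.
\item \textbf{Mass near the lattice.} The $f$-mass of this set is therefore at most $(2/h+3)\,T\delta$.
\item \textbf{Choice of $\delta$.} Take $\delta=1/\bigl(2T(2/h+3)\bigr)$. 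Then $\delta\le h/2$ holds automatically, because $T\ge1$ in the application and in any case $T\ge 2\|f\|_\infty\ge1$ after normalizing $\int f=1$ on an interval of length $2$. With this $\delta$, at least half the mass lies at distance at least $\delta$ from the lattice.
\item \textbf{Conclusion.} The integral is then at least $\tfrac12\delta^2=1/\bigl(8T^2(2/h+3)^2\bigr)$, which exceeds the claimed $1/\bigl(12T^2(2/h+3)^2\bigr)$.
\end{itemize}

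I expect the main obstacle to be the bookkeeping in the counting step: getting a lattice-point count that holds uniformly in the phase $b$ and covers every $h$, including $h>2$, where only one or two lattice points matter. The constant $3$ in $2/h+3$ is meant to absorb this, and I would check the count directly at the end with $N\le \lfloor (2+2\delta)/h\rfloor+1\le 2/h+2$.
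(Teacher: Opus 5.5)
Your proposal is correct and follows the paper's proof. The averaging estimate is the same integration by parts against the $1$-periodic primitive $\Psi(v)=v^3/3-v/12$ with $\lVert\Psi\rVert_\infty=1/(36\sqrt3)$. The separation estimate uses the same ingredients: a sup-norm bound on $f$ from its variation, at most $2/h+3$ relevant lattice points, and a mass bound for $\delta$-neighbourhoods that is linear in $\delta$.

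The only differences are in the last step. You combine the sharper bound $\lVert f\rVert_\infty\le T/2$ with a single threshold, $\mathbb E[Z^2]\ge\delta^2/2$. The paper instead combines $\lVert f\rVert_\infty\le T$ with the layer-cake integral $\int_0^\infty 2s\Pr(Z>s)\,ds$. These choices are coupled: with the cruder bound $\lVert f\rVert_\infty\le T$, a single threshold yields at most $1/(27T^2N^2)$, short of the claimed $1/(12T^2N^2)$. So your factor-$2$ sharpening of the sup-norm bound is genuinely needed for your route.
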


\begin{proof}
The periodic function $\psi-1/12$ has a $1$-periodic primitive $\Psi$ given on $[-1/2,1/2]$ by $\Psi(v)=v^3/3-v/12$.
Its endpoint values agree, and $\lVert\Psi\rVert_\infty=1/(36\sqrt3)\leq1/12$.
Let $\mathrm D f$ denote the distributional derivative of the zero-extended density, a finite signed measure with $|\mathrm D f|(\mathbb R)=\operatorname{Var}_{\mathbb R}(f)$.
Integration by parts gives
\[
\int_{\mathbb R}\left(\psi(ty+b)-\frac1{12}\right)f(y)\,dy
=-\frac1t\int_{\mathbb R}\Psi(ty+b)\,\mathrm D f(dy).
\]
This identity may be justified using a compactly supported cutoff equal to one on a neighborhood of $[-1,1]$.
The bound on $\Psi$ proves \eqref{eq:alignment-bv-averaging-remainder}; counting boundary jumps in $\operatorname{Var}_{\mathbb R}(f)$ is essential here.

Since $f$ vanishes off $[-1,1]$, its bounded-variation representative satisfies $\lVert f\rVert_\infty\leq T$.
For points in $[-1,1]$, the nearest point of the lattice $h(\mathbb Z+b)$ lies among its points in that interval or its two nearest exterior neighbors.
These candidates form a finite nonempty set $\mathcal G$ with $N:=|\mathcal G|\leq2/h+3$, and distance to $\mathcal G$ agrees with distance to the lattice throughout $[-1,1]$.
If $Y$ has density $f$, the union of the radius-$s$ neighborhoods of these $N$ points has Lebesgue measure at most $2Ns$, so
\[
\Pr\!\left(\operatorname{dist}(Y,\mathcal G)\leq s\right)\leq2TNs.
\]
Writing $Z=\operatorname{dist}(Y,\mathcal G)$, Tonelli's theorem and the preceding bound give
\[
\mathbb E[Z^2]
=\int_0^\infty2s\Pr(Z>s)\,ds
\geq\int_0^{1/(2TN)}2s(1-2TNs)\,ds
=\frac1{12T^2N^2}.
\]
Using $N\leq2/h+3$ proves \eqref{eq:alignment-bv-lattice-separation}.
\end{proof}

\begin{proof}[Proof of Corollary~\ref{cor:alignment-quantitative-threshold}]
Use the direction measure $\nu$, the factor $c_{\mathcal V}$, and the directional distortion $\mathcal E_K(p,A)$ in \eqref{eq:alignment-directional-distortion}, and write $d=D-1\geq1$.
For each $p$, Assumption~\ref{ass:alignment-direction-bv} supplies the density $f_p$ and the normalizing mass $m_p=d\int r_p(u)^2\,\nu(du)\geq d$.
Fix an integer $K\geq H_{D,T}$ and an arbitrary pair $(p,A)$.
If $\mathcal E_K(p,A)>d/K^2$, the desired lower bound already holds for this pair, since $(1-H_{D,T}/K)/3\leq1$.
It therefore suffices to treat pairs satisfying
\begin{equation}
\mathcal E_K(p,A)\leq\frac d{K^2}.
\label{eq:alignment-competitive-pair}
\end{equation}
This restriction retains every pair that could improve on the all-resolution max-rail guarantee and does not require existence of a minimizer.

\textbf{A uniform bound on the rail multiplier.}
Let $\mathcal L_{K,A}$ be the reconstruction alphabet at rail $A$ and extend it to the infinite lattice
\[
\Lambda_{K,A}=\frac{2A}{K}(\mathbb Z-K/2),
\qquad \mathcal L_{K,A}\subset\Lambda_{K,A}.
\]
Quantizer homogeneity and \eqref{eq:alignment-weighted-scalar-density}, retaining only the nonmaximal coordinates, yield
\begin{equation}
\begin{aligned}
\mathcal E_K(p,A)
&\geq m_p\int\operatorname{dist}^2(y,\mathcal L_{K,A})f_p(y)\,dy\\
&\geq m_p\int\operatorname{dist}^2(y,\Lambda_{K,A})f_p(y)\,dy\\
&\geq\frac{m_p}{12T^2(K/A+3)^2}.
\end{aligned}
\label{eq:alignment-bv-rail-separation}
\end{equation}
The last step applies \eqref{eq:alignment-bv-lattice-separation} with $h=2A/K$ and $b=-K/2$.
Combining \eqref{eq:alignment-competitive-pair}, \eqref{eq:alignment-bv-rail-separation}, and $m_p\geq d$ gives
\[
K\leq\sqrt{12}\,T(K/A+3).
\]
Since $T\geq1$ and $K\geq H_{D,T}\geq24T^2\geq6\sqrt{12}\,T$, rearrangement yields
\begin{equation}
A\leq\frac{\sqrt{12}\,TK}{K-3\sqrt{12}\,T}
\leq2\sqrt{12}\,T.
\label{eq:alignment-bv-multiplier-bound}
\end{equation}

\textbf{Quantitative averaging of the rounding error.}
Distance to $\Lambda_{K,A}$ also gives
\[
\mathcal E_K(p,A)
\geq\frac{4A^2m_p}{K^2}
\int\psi\!\left(\frac{Ky}{2A}+\frac K2\right)f_p(y)\,dy.
\]
Apply \eqref{eq:alignment-bv-averaging-remainder} with $t=K/(2A)$ and $b=K/2$, followed by \eqref{eq:alignment-bv-multiplier-bound}, to obtain
\begin{equation}
\begin{aligned}
\mathcal E_K(p,A)
&\geq\frac{A^2m_p}{3K^2}\left(1-\frac{2AT}{K}\right)\\
&\geq\frac{A^2m_p}{3K^2}\left(1-\frac{4\sqrt{12}\,T^2}{K}\right).
\end{aligned}
\label{eq:alignment-bv-rounding-lower}
\end{equation}
The final factor is nonnegative because $K\geq24T^2>4\sqrt{12}\,T^2$.
The infinite-lattice comparison makes these inequalities valid even when the finite quantizer clips some coordinates.

\textbf{Clipping forces sufficient mean squared rail.}
The maximal coordinate gives
\[
\int(1-Ar_p(u))_+^2\,\nu(du)\leq\mathcal E_K(p,A)\leq\frac d{K^2}.
\]
The pointwise inequality $1\leq Ar_p+(1-Ar_p)_+$ and the triangle inequality in $L^2(\nu)$ imply
\[
\left(\int A^2r_p(u)^2\,\nu(du)\right)^{1/2}
\geq1-\frac{\sqrt d}{K}.
\]
The right-hand side is nonnegative because $K\geq H_{D,T}\geq2\sqrt d$.
Consequently,
\begin{equation}
A^2m_p\geq d\left(1-\frac{\sqrt d}{K}\right)^2.
\label{eq:alignment-bv-mean-rail-lower}
\end{equation}
Combining \eqref{eq:alignment-bv-rounding-lower} and \eqref{eq:alignment-bv-mean-rail-lower} now gives
\begin{equation}
\begin{aligned}
\mathcal E_K(p,A)
&\geq\frac d{3K^2}
\left(1-\frac{\sqrt d}{K}\right)^2
\left(1-\frac{4\sqrt{12}\,T^2}{K}\right)\\
&\geq\frac d{3K^2}
\left(1-\frac{2\sqrt d+4\sqrt{12}\,T^2}{K}\right)\\
&\geq\frac d{3K^2}\left(1-\frac{H_{D,T}}{K}\right).
\end{aligned}
\label{eq:alignment-bv-directional-lower}
\end{equation}
The second inequality uses $(1-x)^2(1-y)\geq1-2x-y$ for $x,y\in[0,1]$, and the last uses $4\sqrt{12}<24$.
Pairs excluded by \eqref{eq:alignment-competitive-pair} satisfy the same final lower bound, so it holds for every $p\in[1,\infty]$ and $A>0$.
Taking their infimum and multiplying by $c_{\mathcal V}$ proves \eqref{eq:alignment-bv-finite-precision-lower}.
Finally, $\varepsilon\in(0,1/3)$ and $K\geq\lceil H_{D,T}/(3\varepsilon)\rceil$ imply both $K\geq H_{D,T}$ and $H_{D,T}/K\leq3\varepsilon$.
Substitution into \eqref{eq:alignment-bv-finite-precision-lower} gives \eqref{eq:fixed-distribution-alignment-lower-bound} with the explicit threshold \eqref{eq:alignment-explicit-precision-threshold}.
\end{proof}

\subsection{Tail-dependent upper bounds for a fixed law: long version and proof of \cref{thm:sub-weibull-max-bound}}

The following bounds retain the tail dependence of the fixed-law benchmark in \eqref{eq:exact-max-rail-identity-bound}.

\begin{theorem}[Tail-dependent finite-precision bounds, long version of \cref{thm:sub-weibull-max-bound}]
\label{cor:tail-dependent-max-bound}
For any probability law $\mathcal V$ on $\mathbb R^D$, each of the following bounds holds under its stated coordinatewise assumption; neither coordinate independence nor identical marginal laws are required:
\begin{enumerate}
\item If $|x_i|\leq B$ almost surely for every $i$, then $\mathbb E_{\mathcal V}[\lVert x\rVert_\infty^2]\leq B^2$.
\item If, for some $\alpha,L_s>0$ and $M_s\geq1$, $\Pr(|x_i|>x)\leq M_s\exp(-(x/L_s)^\alpha)$ for every $i$ and $x>0$, then $\mathbb E_{\mathcal V}[\lVert x\rVert_\infty^2]\lesssim_\alpha L_s^2(\log(2M_sD))^{2/\alpha}$.
This sub-Weibull class includes sub-Gaussian tails at $\alpha=2$, Laplace-type sub-exponential tails at $\alpha=1$, and heavier stretched-exponential tails for smaller $\alpha$~\citep{vladimirova2020sub}.
\item If, for some $M,L>0$ and $\beta>2$, $\Pr(|x_i|>x)\leq M(L/x)^\beta$ for every $i$ and $x>0$, then $\mathbb E_{\mathcal V}[\lVert x\rVert_\infty^2]\leq [\beta/(\beta-2)]L^2(MD)^{2/\beta}$.
This bound requires no coordinate independence; for independent identically distributed coordinates with a regularly varying tail of index $\beta$, its $D^{2/\beta}$ dependence is sharp up to the tail's slowly varying factor~\citep{resnick1987extreme}.
\item If $\mathbb E_{\mathcal V}[|x_i|^r]<\infty$ for every $i$ and some $r>2$, then $\mathbb E_{\mathcal V}[\lVert x\rVert_\infty^2]\leq\bigl\{\sum_{i=1}^D\mathbb E_{\mathcal V}[|x_i|^r]\bigr\}^{2/r}$.
\item If $\mathbb E_{\mathcal V}[\lVert x\rVert_2^2]<\infty$, then $\mathbb E_{\mathcal V}[\lVert x\rVert_\infty^2]\leq\mathbb E_{\mathcal V}[\lVert x\rVert_2^2]$, giving $R_K(\infty,1;\mathcal V)\leq(D-1)\mathbb E_{\mathcal V}[\lVert x\rVert_2^2]/(DK^2)$.
If the total second moment is infinite, a finite squared-error guarantee requires additional assumptions, truncation, or a lower-order distortion criterion.
\end{enumerate}
\end{theorem}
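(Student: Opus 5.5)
The plan is to bound $\mathbb E_{\mathcal V}[\lVert x\rVert_\infty^2]$ item by item, and then combine each bound with the all-resolution estimate \eqref{eq:exact-max-rail-identity-bound}, $R_K(\infty,1;\mathcal V)\leq(D-1)\mathbb E[M^2]/(DK^2)$. That estimate holds for every $\mathcal V$ with finite second moment and uses no independence. All items rest on two tools, neither of which needs independence: the layer-cake formula $\mathbb E[M^2]=\int_0^\infty 2x\Pr(M>x)\,dx$, and the union bound $\Pr(M>x)\leq\min\{1,\sum_i\Pr(|x_i|>x)\}$.

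\textbf{Trivial items.} Item 1 is immediate, since $M\leq B$ almost surely. Item 5 follows from the pointwise inequality $\lVert x\rVert_\infty\leq\lVert x\rVert_2$, substituted into \eqref{eq:exact-max-rail-identity-bound}. Item 4 follows from $M^r\leq\sum_i|x_i|^r$ together with Jensen's inequality (concavity of $t\mapsto t^{2/r}$): $\mathbb E[M^2]=\mathbb E[(M^r)^{2/r}]\leq(\mathbb E M^r)^{2/r}\leq(\sum_i\mathbb E|x_i|^r)^{2/r}$.

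\textbf{Item 3 (polynomial tails).} The union bound gives $\Pr(M>x)\leq\min\{1,MD(L/x)^\beta\}$. I will split the layer-cake integral at $x_0=L(MD)^{1/\beta}$. The part below $x_0$ contributes at most $x_0^2$. The part above contributes $2MDL^\beta x_0^{2-\beta}/(\beta-2)=2x_0^2/(\beta-2)$. Summing gives exactly $\tfrac{\beta}{\beta-2}L^2(MD)^{2/\beta}$.

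\textbf{Item 2 (sub-Weibull tails).} This item, which is \Cref{thm:sub-weibull-max-bound}, is the one step needing care. The union bound gives $\Pr(M>x)\leq\min\{1,M_sD\,e^{-(x/L_s)^\alpha}\}$. I will split at $x_0=L_s(\log(2M_sD))^{1/\alpha}$, so the region below contributes at most $x_0^2$. Above $x_0$, substitute $u=(x/L_s)^\alpha$; the tail becomes
\[
\tfrac{2}{\alpha}L_s^2M_sD\int_{u_0}^\infty u^{2/\alpha-1}e^{-u}\,du, \qquad u_0=\log(2M_sD)\geq\log 2.
\]
The main obstacle is bounding this upper incomplete gamma integral by $C_\alpha u_0^{2/\alpha-1}e^{-u_0}$ uniformly for $u_0\geq\log 2$, with a constant depending only on $\alpha$. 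When $2/\alpha\leq1$ this follows from monotonicity of $u^{2/\alpha-1}$. When $2/\alpha>1$, I would use $u^{a}e^{-u/2}\lesssim_a u_0^{a}e^{-u_0/2}$ for $u\geq\max(u_0,2a)$ and treat the compact range of small $u_0$ by a constant; an alternative is the standard bound $\Gamma(a,u_0)\lesssim_a (1+u_0)^{a-1}e^{-u_0}$. Since $M_sDe^{-u_0}=1/2$, the tail is then $\lesssim_\alpha L_s^2u_0^{2/\alpha-1}\lesssim_\alpha L_s^2u_0^{2/\alpha}$, using $u_0\geq\log2$ when $2/\alpha<1$.

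Together with the first region this gives $\mathbb E[M^2]\lesssim_\alpha L_s^2(\log(2M_sD))^{2/\alpha}$. Inserting this into \eqref{eq:exact-max-rail-identity-bound} yields \Cref{thm:sub-weibull-max-bound}. Note that that bound does not invoke the absolute-continuity hypothesis; it needs only a finite second moment, which the tail bound guarantees.
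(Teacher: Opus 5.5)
Your proposal is correct and follows the paper's proof essentially step for step. Both arguments use the union bound with the layer-cake formula, the same split points $L(MD)^{1/\beta}$ and $L_s(\log(2M_sD))^{1/\alpha}$, the substitution $u=(x/L_s)^\alpha$ with an incomplete-gamma tail bound, $\lVert x\rVert_\infty^r\leq\sum_i|x_i|^r$ with moment monotonicity (your Jensen step), and the pointwise $\lVert x\rVert_\infty\leq\lVert x\rVert_2$ inserted into \eqref{eq:exact-max-rail-identity-bound}. You give more detail than the paper on the incomplete-gamma step; the only cosmetic slip is that the final comparison $u_0^{2/\alpha-1}\lesssim u_0^{2/\alpha}$ needs $u_0\geq\log 2$ for every $\alpha$, not only when $2/\alpha<1$.
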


\begin{proof}
The corresponding fixed-law distortion bounds follow by substituting each estimate into \eqref{eq:exact-max-rail-identity-bound}.
The bounded case is immediate.
For each remaining tail calculation, the union bound and the uniform coordinatewise tail hypothesis give $\Pr(\lVert x\rVert_\infty>t)\leq\sum_{i=1}^D\Pr(|x_i|>t)$, while the layer-cake identity gives $\mathbb E[\lVert x\rVert_\infty^2]=\int_0^\infty 2t\Pr(\lVert x\rVert_\infty>t)\,dt$.
For the sub-Weibull case, split the integral at $L_s(\log(2M_sD))^{1/\alpha}$ and use $\Pr(\lVert x\rVert_\infty>t)\leq\min\{1,M_sD\exp(-(t/L_s)^\alpha)\}$.
The change of variables $u=(t/L_s)^\alpha$, followed by the standard incomplete-gamma tail bound, yields $\mathbb E[\lVert x\rVert_\infty^2]\lesssim_\alpha L_s^2(\log(2M_sD))^{2/\alpha}$.
For the polynomial-tail case, putting $u=L(MD)^{1/\beta}$ gives
\begin{equation}
\mathbb E[\lVert x\rVert_\infty^2]
\leq u^2+2MDL^\beta\int_u^\infty t^{1-\beta}\,dt
=\frac{\beta}{\beta-2}L^2(MD)^{2/\beta}.
\label{eq:polynomial-tail-max-second-moment}
\end{equation}
Finally, for $r>2$, monotonicity of $L^q$ norms and $\lVert x\rVert_\infty^r\leq\sum_{i=1}^D|x_i|^r$ give $\mathbb E[\lVert x\rVert_\infty^2]\leq\{\mathbb E[\lVert x\rVert_\infty^r]\}^{2/r}\leq\bigl\{\sum_{i=1}^D\mathbb E[|x_i|^r]\bigr\}^{2/r}$.
The final assertion follows directly from the pointwise inequality $\lVert x\rVert_\infty^2\leq\lVert x\rVert_2^2$.
\end{proof}

\begin{remark}[Activation distributions in normalized Transformers]
The infinite-variance laws excluded by the squared-error criterion are boundary cases, rather than an assertion that practical Transformer activations are Gaussian or light-tailed.
For a token vector $h\in\mathbb R^D$, write $\mu=D^{-1}\sum_{i=1}^D h_i$, $v=D^{-1}\lVert h-\mu\mathbf 1\rVert_2^2$, and $\widetilde h=(h-\mu\mathbf 1)/\sqrt{v+\epsilon}$ for the normalized component of LayerNorm.
It obeys $\lVert\widetilde h\rVert_2^2=Dv/(v+\epsilon)\leq D$ pointwise, so its affine output $y=\gamma\odot\widetilde h+\beta$ satisfies $\lVert y\rVert_2^2\leq2D\lVert\gamma\rVert_\infty^2+2\lVert\beta\rVert_2^2$.
Consequently, for finite learned affine parameters, the LayerNorm outputs supplied to the attention and MLP input projections in our pre-LayerNorm Transformer have finite second moments independently of the data law.
This scale control does not eliminate activation outliers after residual, linear, or nonlinear operations: such structured outliers are documented in trained Transformers~\cite{bondarenko2023quantizable}.
The tail-dependent conditions above therefore remain useful for arbitrary converter inputs, while Cauchy, Student-$t$ with $\nu\leq2$, and Pareto-type laws with tail index $\beta\leq2$ serve principally as infinite-variance boundary examples rather than typical models of the normalized reads studied here.
\end{remark}

\end{document}

%% file: tables/nandakumar-mpsgd-values.tex
\newcommand{\mpsgdtiny}{--}
\newcommand{\mpsgdsmall}{--}
\newcommand{\mpsgdmedium}{--}

%% file: tables/optimizer-s-sfb-results.tex
\begin{table}[t]
\centering
\small
\begin{tabular}{llrrrr}
\toprule
Optimizer & Profile & $L=2$ & $L=4$ & $L=6$ & $L=8$ \\
\midrule
AdamW & Digital & 2.2355 & 2.0073 & 1.9018 & 1.8258 \\
 & S & 2.3422 & 2.0925 & 1.9696 & 1.8999 \\
 & SFB & 2.4419 & 2.3393 & 2.2098 & 2.1432 \\
\midrule
Muon & Digital & \textbf{2.0141} & \textbf{1.8948} & \textbf{1.7922} & \textbf{1.7252} \\
 & S & 2.3053 & 2.0798 & 1.9650 & 1.8956 \\
 & SFB & 2.3988 & 2.2972 & 2.1825 & 2.1384 \\
\midrule
Moonlight & Digital & 2.0944 & 1.9390 & 1.8305 & 1.7595 \\
 & S & 2.1934 & 2.0098 & 1.8791 & 1.8136 \\
 & SFB & 2.3347 & \textbf{2.2110} & \textbf{2.1110} & \textbf{2.0370} \\
\midrule
Lion & Digital & 2.0819 & 1.9619 & 1.9049 & 1.8208 \\
 & S & \textbf{2.0141} & \textbf{1.9583} & \textbf{1.8510} & \textbf{1.7832} \\
 & SFB & \textbf{2.3180} & 2.2269 & 2.1793 & 2.1294 \\
\midrule
Shampoo & Digital & 2.4505 & 2.3839 & 2.3341 & 2.2686 \\
 & S & 2.4712 & 2.4089 & 2.3331 & 2.2518 \\
 & SFB & 2.4865 & 2.4734 & 2.4331 & 2.4251 \\
\midrule
Adafactor & Digital & 2.2751 & 2.0538 & 1.9447 & 1.8630 \\
 & S & 2.4414 & 2.3445 & 2.2571 & 2.1942 \\
 & SFB & 2.4663 & 2.4505 & 2.4018 & 2.3513 \\
\midrule
RMSprop & Digital & 2.2551 & 2.0330 & 1.9212 & 1.8445 \\
 & S & 2.4350 & 2.3416 & 2.2390 & 2.1791 \\
 & SFB & 2.4515 & 2.4326 & 2.3946 & 2.3493 \\
\bottomrule
\end{tabular}
\caption{Optimizer screen under matched digital, S, and SFB profiles: final Shakespeare validation loss after 5,000 updates (lower is better). Rows are grouped by optimizer. Digital uses the same width-stable initialization as S/SFB but no AIMC conversion. All cells use seed 1337 and evaluation over 200 held-out mini-batches.}
\label{tab:optimizer-s-sfb}
\end{table}